\theoremstyle{thmstyleone}%
\newtheorem{theorem}{Theorem}%  meant for continuous numbers
\newtheorem{proposition}[theorem]{Proposition}% 
\newtheorem{corollary}[theorem]{Corollary}
\newtheorem{lemma}[theorem]{Lemma}
\theoremstyle{thmstyletwo}%
\newtheorem{remark}{Remark}%
\theoremstyle{thmstylethree}%
\newtheorem{definition}{Definition}%
\begin{document}

\title[Mathematical Modeling and Convergence Analysis of Deep Neural Networks with Dense Layer Connectivities in Deep Learning]{Mathematical Modeling and Convergence Analysis of Deep Neural Networks with Dense Layer Connectivities in Deep Learning}
%Mathematical Modeling and Convergence Analysis of Deep Neural Networks with Dense Layer Connectivities

%%=============================================================%%
%% GivenName	-> \fnm{Joergen W.}
%% Particle	-> \spfx{van der} -> surname prefix
%% FamilyName	-> \sur{Ploeg}
%% Suffix	-> \sfx{IV}
%% \author*[1,2]{\fnm{Joergen W.} \spfx{van der} \sur{Ploeg} 
%%  \sfx{IV}}\email{iauthor@gmail.com}
%%=============================================================%%

\author[1]{\fnm{Jinshu} \sur{Huang}}\email{huangjsh@mail.nankai.edu.cn}

\author[1]{\fnm{Haibin} \sur{Su}}\email{hbsu@mail.nankai.edu.cn}
%\equalcont{These authors contributed equally to this work.}

\author[2]{\fnm{Xue-Cheng} \sur{Tai}}\email{xtai@norceresearch.no}
%\equalcont{These authors contributed equally to this work.}

\author*[1]{\fnm{Chunlin} \sur{Wu}}\email{wucl@nankai.edu.cn}
%\equalcont{These authors contributed equally to this work.}

\affil[1]{\orgdiv{School of Mathematical Sciences}, \orgname{Nankai University}, \orgaddress{\city{Tianjin}, \country{China}}}

\affil[2]{\orgname{Norwegian Research Center (NORCE)}, \orgaddress{\street{Nyg\r{a}rdsgaten 112}, \postcode{5008}, \city{Bergen}, \country{Norway}}}

%\affil[3]{\orgdiv{Department}, \orgname{Organization}, \orgaddress{\street{Street}, \city{City}, \postcode{610101}, \state{State}, \country{Country}}}

%%==================================%%
%% Sample for unstructured abstract %%
%%==================================%%

\abstract{In deep learning, dense layer connectivity has become a key design principle in deep neural networks (DNNs), enabling efficient information flow and strong performance across a range of applications. In this work, we model densely connected DNNs mathematically and analyze their learning problems in the deep-layer limit. For a broad applicability, we present our analysis in a framework setting of DNNs with densely connected layers and general non-local feature transformations (with local feature transformations as special cases) within layers, which is called dense non-local (DNL) framework and includes standard DenseNets and variants as special examples.
	In this formulation, the densely connected networks are modeled as nonlinear integral equations, in contrast to the ordinary differential equation viewpoint commonly adopted in prior works. We study the associated training problems from an optimal control perspective and prove convergence results from the network learning problem to its continuous-time counterpart. In particular, we show the convergence of optimal values and the subsequence convergence of minimizers, using a piecewise linear extension and $\Gamma$-convergence analysis.
	Our results provide a mathematical foundation for understanding densely connected DNNs and further suggest that such architectures can offer stability of training deep models.}

\keywords{deep neural networks, dense connectivity, non-local, integral equation, dynamical system, optimal control, $\Gamma$-convergence}

%%\pacs[JEL Classification]{D8, H51}

%%\pacs[MSC Classification]{35A01, 65L10, 65L12, 65L20, 65L70}

\maketitle

\section{Introduction}\label{sec1}

Deep learning technology has achieved significant breakthroughs in various fields \cite{goodfellow2016deep}. 
A fundamental and key factor behind these achievements is the design of deep neural network (DNN) architectures.
It determines how neurons are connected and how information flows through the neural network, which plays a crucial role in the expressive ability of DNNs and the efficiency of training algorithms.

The architecture of DNNs includes the layer connectivity methods and the computation schemes within layers. Regarding layer connections, the early proposed fully connected feedforward neural networks (FNNs) and convolutional neural networks (CNNs) \cite{goodfellow2016deep} utilize a straightforward sequential connection structure. 
Such structures are compact and computationally simple, but they may become harder to train with an increasing number of layers and less effective at capturing complex dependencies. Later on, the residual networks (ResNets) \cite{he2016deep} introduced skip connections and residual learning, which mitigate the vanishing gradient problem and enable the training of much deeper networks. After that, the dense convolutional network (DenseNet) \cite{Huang2016DenselyCC} further enhanced information flow by connecting each layer to every other layer in a feedforward manner. This dense connectivity structure promotes feature reuse and thus can reduce the number of parameters, leading to more efficient models. 
These layer connectivity methods have led to the development of many effective deep neural networks. For instance, 
UNet \cite{ronneberger2015u} extends the typical CNN by incorporating an encoder-decoder structure and skip connections; \cite{zhang2020dense, yao2022dense, ma2023denseformer} combined the dense layer connectivity methods with some attention mechanisms to get more effective DNN architectures.
For more research in this direction, see, e.g., ResUNet \cite{diakogiannis2020resunet}, DenseUNet \cite{cao2020denseunet}, PottsMGNet \cite{tai2024pottsmgnet}.

The computation scheme within each layer also plays an important role in the design of neural network architectures. There is a broad range of local and non-local feature transformations that can benefit the performances of DNNs.
Initially, FNNs use affine transformations combined with nonlinear activations. Their fully connected layers are straightforward and capture global information within layers. 
After that, CNN layers introduce sparse convolution operations, applying filters to local regions of the state, and can effectively capture spatial hierarchies through multiple layers.
	Beyond these classical designs, more general feature transformation schemes have been explored, such as nonlinear operations beyond activation functions \cite{vaswani2017attention, Wang2017NonlocalNN, jia2020nonlocal, meng2024learnable}. For example, the STD (Soft-Threshold-Dynamics) activation layer introduced in \cite{liu2022deep}, which offered a mechanism to incorporate many well-known variational models into activation functions. The Transformers \cite{vaswani2017attention} employ self-attention mechanisms to model global dependencies. In each self-attention function, the output is calculated as a weighted sum of the values obtained by an affine transformation of inputs, where the weight assigned to each value is computed by the {\rm softmax($\cdot$)} function \cite{gao2017properties}. Additionally, non-local neural networks (Non-local Nets) \cite{Wang2017NonlocalNN} expand the self-attention mechanism by computing interactions between all possible pairs of positions through a flexible non-local kernel, which can further capture global dependencies such as similarity.

Despite the great successes of these DNN architectures in applications, they were mainly handcrafted and lacked rigorous mathematical understanding. We note that the connectivity between layers and the computation scheme within each layer corresponds exactly to a time variable and a space variable, respectively. Naturally, we can consider their dynamical system modeling.
Such dynamical system modeling and analysis aim to study an associated continuous-time formulation in some detail, providing new perspectives and tools for theoretical research on DNN structures and network learning problems. 
Indeed, the works \cite{weinan2017proposal, haber2017stable} first established a connection between ResNets and ordinary differential equations (ODEs) by interpreting the forward propagation of the network as a time-discretization of an ODE, with each layer corresponding to a discrete time step.
Lu et al. \cite{Lu18Beyond} further modeled some effective networks, such as RevNet \cite{gomez2017reversible}, as different numerical discretizations of ODEs. Recently, Zhang and Schaeffer \cite{zhang2020forward} represented the ResNets as an ordinary differential inclusion (ODI) to analyze its global stability property. Thorpe and Van Gennip \cite{thorpe2018deep} demonstrated some convergence results from the discrete-time to continuous-time learning problems for ResNets. For more continuous-time modeling and discussion of classic DNNs, one can see, e.g., \cite{chen2018neural, lu2019understanding, sherstinsky2020fundamentals, song2021scorebased}.
We also mention that, for some deep unrolling/unfolding networks (see, e.g., \cite{monga2021algorithm} for a survey), it is natural to connect them as continuous-time systems and study the convergence of the associated learning problems in the deep-layer limit setting \cite{huang2024on, lin2022deep, lin2024deep}.
These dynamical system modeling and analysis methods not only provide a mathematical understanding but also aid in designing new network architectures through various numerical ordinary/partial differential equations or optimization schemes \cite{Lu18Beyond, haber2019imexnet, ruthotto2020deep, tai2024pottsmgnet}.

DNNs with dense layer connectivities, such as DenseNet \cite{Huang2016DenselyCC} and its variants \cite{zhang2020dense, cao2020denseunet, yao2022dense, ma2023denseformer}, have demonstrated excellent empirical performance across a wide range of deep learning applications. Despite their successes, a precise mathematical understanding of how such dense connections influence network behavior, particularly in the limit of infinite depth, remains unclear.
In this work, we model and analyze densely connected DNNs through the dynamical system approach. Our analysis is presented within a general framework, namely DNL framework. Note that our goal is not to propose a new architecture, but rather to provide a unified theoretical perspective that applies across this class of DNNs. \textit{Our central observation is that, with each network layer corresponding to a time step, dense layer connectivity yields a discrete accumulation across layers and naturally gives rise to a nonlinear integral equation in the deep-layer limit}. Our main contributions are as follows:
\begin{itemize}
	\item [(i)] We provide a dynamical system modeling for a class of densely connected DNNs. Specifically, within a general dense non-local framework, we formulate the DNNs in the deep-layer limit using nonlinear integral equations, which stands in contrast to prior works that typically model deep networks as ODEs or ODIs. We further cast the corresponding learning problems in both discrete- and continuous-time settings as optimal control problems with suitable regularization.

    \item [(ii)] We establish convergence results from the learning problem of the discrete-time DNL framework to its continuous-time counterpart via $\Gamma$-convergence. In particular, we show convergence of the optimal values and subsequence convergence of the optimal solutions using a piecewise linear extension technique.
\end{itemize}

These dynamical system modeling and convergence analysis offer a mathematical understanding and evidence of DNN architectures with dense layer connectivities from the perspective of integral equations. %Under suitable assumptions, the associated learning problems admit well-posed continuous-time limits as the number of layers tends to infinity. 
Our analyses suggest that densely connected architectures may offer enhanced stability when training deep networks. %Some simple experiments will be given to verify preliminary evidence that these architectures can work stably and show convergence behavior as the number of layers increases. 
Moreover, our results can be directly applied to, or analogously extended to, various DNNs with dense layer connections, such as DenseNet, DenseFormer \cite{ma2023denseformer}, and Dense Residual Transformer \cite{yao2022dense}.

The remainder of this paper is organized as follows. 
Section~\ref{Sec:2} introduces the dense non-local framework of DNNs. In Section~\ref{Sec:3}, we present dynamical system modeling for the framework and our main theoretical result. Section~\ref{Sec:4} provides detailed proofs. In Section~\ref{Sec:5}, we present some simple numerical experiments to validate the theoretical result. Finally, Section~\ref{Sec:6} concludes the paper.

\section{A dense non-local (DNL) framework of deep neural networks}
\label{Sec:2}
We denote by $\mathbb{N}$ the set of natural numbers and by $\mathbb{R}$ the field of real numbers. 
Scalars are denoted with italic letters, like $L, n \in \mathbb{N}$. Vectors are represented by lowercase letters and matrices by uppercase letters, all in upright font, for example, ${\rm a} \in \mathbb{R}^L$ and ${\rm U} \in \mathbb{R}^{n \times n}$.
We also denote the $l^2$-norm for vectors in Euclidean space by $|\cdot|$ and the spectral norm for matrices by $\|\cdot\|_{2}$. Without confusion, we abbreviate $\|\cdot\|_2$ as $\|\cdot\|$.   
For a space $\mathbb{X}$, the notation $(\mathbb{X})^L$ represents the Cartesian product space of $\underbrace{\mathbb{X} \times \ldots \times \mathbb{X}}_{L}$. To facilitate the description of the network architecture, we define a general affine map.
\begin{definition}
    Given a sequence of matrix-vector pairs $\{(\mathrm{M}^k, \mathrm{v}^k )\}_{k=0}^{l} \subset \mathbb{R}^{n\times n} \times \mathbb{R}^{n}$, we define an affine map 
    $
    \bold{Aff}_{\{(\mathrm{M}^k, \mathrm{v}^k )\}_{k=0}^{l}}: \underbrace{\mathbb{R}^n \times \ldots \times \mathbb{R}^n}_{l+1} \to \mathbb{R}^n
    $ as
    \[
    \bold{Aff}_{\{(\mathrm{M}^k, \mathrm{v}^k )\}_{k=0}^{l}}(\mathrm{x}^0, \ldots, \mathrm{x}^l) := \sum_{k=0}^l \left( \mathrm{M}^k \mathrm{x}^k + \mathrm{v}^k \right).
    \]
\end{definition}
Such an affine map covers various types of linear operations commonly used in practice, including standard convolutions or matrix-vector multiplications in FNNs.

We now introduce the DNL framework and its associated learning problem.
Since our formulation is not restricted to specific data modalities, we assume without loss of generality that both the input and output lie in $\mathbb{R}^n$. 
Given a layer number $L \in \mathbb{N}$, the DNL framework is defined as a class of densely connected neural networks with $L$ layers. Each layer aggregates all preceding outputs through dense skip connections and applies a transformation that may be either local or non-local. Formally, the architecture is given by
\begin{equation}
	\begin{aligned}
 \mathrm{x}^{0}
		 & = \bold{Aff}_{ \{(\mathrm{V}_L^{0},\mathrm{b}_L^{0})\}} \Big( \pmb{\phi} \circ \bold{Aff}_{ \{(\mathrm{U}_L^{0},\mathrm{a}_L^{0}) \}} \big(\pmb{\mathcal{A}} _{\pmb{\kappa}} (\mathrm{T}_L^{0}; \mathrm{d})\big)  \Big) \\
   &= \mathrm{V}_L^{0} \pmb{\phi} \circ ( \mathrm{U}_L^{0} \pmb{\mathcal{A}} _{\pmb{\kappa} } (\mathrm{T}_L^{0}; \mathrm{d}) + \mathrm{a}_L^{0} ) + \mathrm{b}_L^{0}, \\
		\mathrm{x}^{l}
  & = \bold{Aff}_{\{(\mathrm{V}_L^{l},\mathrm{b}_L^{l})\}} \! \Big( \pmb{\phi} \circ   \bold{Aff}_{ \{ (\mathrm{U}_L^{l},\mathrm{a}_L^{l}) \} \cup \{(\tau\mathrm{W}_L^{l,k+1},\tau\mathrm{c}_L^{l,k+1})\}_{k=0}^{l-1} } \\
  & \qquad \qquad \qquad \qquad \qquad  \big(\pmb{\mathcal{A}} _{\pmb{\kappa} } (\mathrm{T}_L^{l};\mathrm{d}), \pmb{\mathcal{A}} _{\pmb{\kappa} } (\mathrm{T}_L^{l}; \mathrm{x}^0), \ldots, \pmb{\mathcal{A}} _{\pmb{\kappa} } (\mathrm{T}_L^{l}; \mathrm{x}^{l-1}) \big) \! \Big)  \\
		 & = \mathrm{V}_L^{l}  \pmb{\phi} \!\circ \!\!\Big(\mathrm{U}_L^{l} \pmb{\mathcal{A}} _{\pmb{\kappa}} (\mathrm{T}_L^{l};\mathrm{d}) \!+ \!\mathrm{a}_L^{l}\! + \!\tau \sum_{k=0}^{l-1} \!\big[\mathrm{W}_L^{l,k+1}\pmb{\mathcal{A}} _{\pmb{\kappa} }(\mathrm{T}_L^{l};\mathrm{x}^k) \!+  \! \mathrm{c}_L^{l,k+1} \big] \Big)\!  + \! \mathrm{b}_L^{l}, 
	\end{aligned}
	\label{equation: DNLF}
\end{equation}
\begin{figure}[htbp]
	\centering
	\includegraphics[scale=0.42]{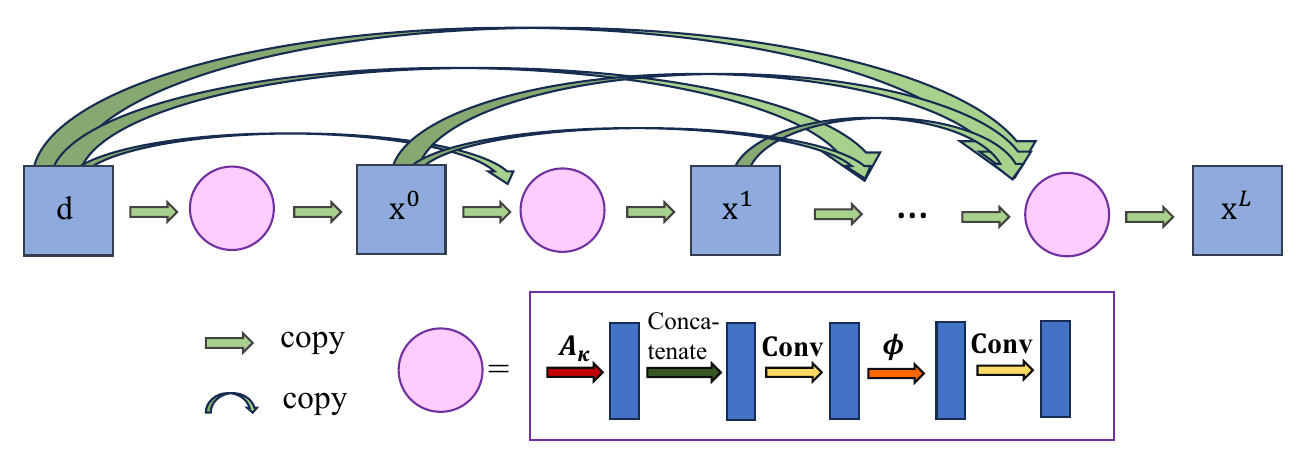}
	\caption{Network architecture associated with \eqref{equation: DNLF}. }
		\label{figure: net arch}
\end{figure}
for $ 1 \leq l \leq L$. Therein, the input ${\rm d} \in \mathbb{R}^{n}$; the affine parameters include $(\mathrm{U}^{l}_L , \mathrm{a}^{l}_L) \in \mathbb{R}^{n\times n} \times \mathbb{R}^{n},\
(\mathrm{V}^{l}_L, \mathrm{b}^{l}_L) \in  \mathbb{R}^{n\times n} \times \mathbb{R}^{n}$ ($0 \leq l \leq L$)  and $(\mathrm{W}^{l}_L, \mathrm{c}^{l}_L) \in  (\mathbb{R}^{n\times n})^{l} \times (\mathbb{R}^{n} )^{l}$ ($1 \leq l \leq L$),  where $\mathrm{W}_L^l = (\mathrm{W}_L^{l,1}, \ldots, \mathrm{W}_L^{l,l}) \in (\mathbb{R}^{n\times n})^l$ and $\mathrm{c}_L^l = (\mathrm{c}_L^{l,1}, \ldots, \mathrm{c}_L^{l,l}) \in (\mathbb{R}^n)^l$ due to the dense connectivity. Since $(\mathrm{W}^{l}_L,\mathrm{c}^{l}_L)$ is learnable,
we extract a step size $\tau>0$ and still denote it as $(\mathrm{W}^{l}_L,\mathrm{c}^{l}_L)$ without loss of generality. The activation function $\pmb{\phi}$ is applied component-wise, and $\pmb{\mathcal{A}} _{\pmb{\kappa} }: (\mathbb{R}^{n\times n})^3 \times \mathbb{R}^n \rightarrow \mathbb{R}^n$ is a parametrized general non-local transformation with kernel $\pmb{\kappa}$ \cite{buades2005review, Wang2017NonlocalNN}. 
For any vector $\mathrm{z} \in \mathbb{R}^n$, 
\begin{equation}
    \pmb{\mathcal{A}}_{\pmb{\kappa}}(\mathrm{T}_L^{l}; \mathrm{z})[i] 
  = \frac{1}{\text{Normalize}(\mathrm{T}_L^{l};\mathrm{z})[i]} \sum_{j=1}^{n} \pmb{\kappa} (((\mathrm{T}_1)_{L}^{l}\mathrm{z})[i], ((\mathrm{T}_2)_{L}^{l}\mathrm{z})[j])\cdot ((\mathrm{T}_3)_{L}^{l}\mathrm{z})[j],
  \nonumber
\end{equation}
for $1 \leq i \leq n$, where $\mathrm{T}^{l}_L\!=\!((\mathrm{T}_1)^{l}_{L}, (\mathrm{T}_2)^{l}_{L}, (\mathrm{T}_3)^{l}_{L}) \in (\mathbb{R}^{n\times n})^3$ is the learnable parameter, the non-negative kernel $\pmb{\kappa}(\cdot, \cdot)$ is a scalar value that quantifies the degree of relation or relevance between the features at positions $i$ and $j$. The normalization factor $\text{Normalize}(\cdot; \cdot) \not = 0$, which is commonly taken as $\text{Normalize}(\mathrm{T}_L^{l};\mathrm{z})[i] = \sum_{j=1}^{n} \pmb{\kappa} (((\mathrm{T}_1)_{L}^{l}\mathrm{z})[i], ((\mathrm{T}_2)_{L}^{l}\mathrm{z})[j])$. Figure~\ref{figure: net arch} illustrates the DNN architecture in (\ref{equation: DNLF}).

%\textcolor{blue}{The learnable parameters $((\mathrm{T}_1)^{l}_{L}, (\mathrm{T}_2)^{l}_{L}, (\mathrm{T}_3)^{l}_{L})$ are inspired by the query–key–value projections in attention mechanisms.} 
Note that the DNL framework is quite general due to the flexibility of kernel $\pmb{\kappa}$ and parameters $\mathrm{W}^{l}_L, \mathrm{c}^{l}_L$ ($1\leq l \leq L$). It characterizes a broad class of densely connected neural networks, encompassing several important examples and closely related variants as follows
%The formulation of $\pmb{\mathcal{A}} _{\pmb{\kappa} }$ encompasses important operators, including the self-attention mechanism \cite{vaswani2017attention} and the non-local operation \cite{Wang2017NonlocalNN}. Meanwhile, standard affine modules such as convolutions or fully connected layers are preserved, enabling local structure modeling and spatial consistency.
%The resulting hybrid architecture achieves both expressiveness and flexibility (see e.g., \cite{vaswani2017attention, Wang2017NonlocalNN, chen2019graph, ma2023denseformer}).
%Such an overall discrete-time DNL framework characterizes a broad class of densely connected neural networks, encompassing several important examples and closely related variants as follows
\begin{itemize}
    \item If $\pmb{\kappa}(\mathrm{z}[i], \mathrm{z}[j]) = \pmb{\delta}_{ij}$ with $\pmb{\delta}_{i,j}$ being the Kronecker function, then $\pmb{\mathcal{A}} _{\pmb{\kappa}}$ is a local operation, and the DNL framework reduces to a DenseNet \cite{Huang2016DenselyCC}.
  \item If $\pmb{\kappa} (((\mathrm{T}_1)_{ L}^{l}\mathrm{z})[i],  ((\mathrm{T}_2)_{L}^{l}\mathrm{z})[j]) = \exp{\big( ((\mathrm{T}_1)_{L}^{l}\mathrm{z})[i] \cdot ((\mathrm{T}_2)_{L}^{l}\mathrm{z})[j] /\sqrt{n} \big)}$ and \\
  $\text{Normalize}(\mathrm{T}_L^{l};\mathrm{z})[i] = \sum_{j=1}^{n} \!\pmb{\kappa} (((\mathrm{T}_1)_{L}^{l}\mathrm{z})[i], ((\mathrm{T}_2)_{L}^{l}\mathrm{z})[j])$, the function $\pmb{\mathcal{A}}_{\pmb{\kappa}}$ admits the self-attention operation in Transformer, and the DNL framework is a kind of Dense-Attention structure, which is closely similar to \cite{zhang2020dense, yao2022dense, ma2023denseformer}.
    \item If $(\mathrm{W}_L^{l,k},\mathrm{c}_L^{l,k}) \equiv \textbf{0}$, $1\leq k \leq \lceil l/2 \rceil -1, 2\leq l \leq L$, then the DNL framework coincides with a partially densely connected neural network.
\end{itemize}
%We also note that the DNL framework can also degenerate into CNNs and Non-local Nets in the extreme cases with  $(\mathrm{W}_L^{l,k},\mathrm{c}_L^{l,k}) \equiv \textbf{0}$, $1\leq k \leq l-1, 2\leq l \leq L$.

To facilitate the subsequent analysis, we package the learnable parameters together and introduce the notation ${\Theta}_L$ and the discrete learnable parameter set $\Omega_{{\Theta}; L}$ as follows
    \begin{align}
 {\Theta}_L := & (\mathrm{T}_L, \mathrm{U}_L, \mathrm{a}_L, \mathrm{V}_L, \mathrm{b}_L, \mathrm{W}_L,\mathrm{c}_L) \in \Omega_{{\Theta};L},  \label{notation: discrete learnable parameter set} \\
    \Omega_{{\Theta};L}  := & ((\mathbb{R}^{n\times n})^3)^{L+1} \! \times \! (\mathbb{R}^{n\times n})^{L+1} \! \times \! (\mathbb{R}^{n})^{L+1}  \! \times  \!
(\mathbb{R}^{n\times n})^{L+1} \! \times \!(\mathbb{R}^{n})^{L+1} \nonumber\\
& \times \! (\mathbb{R}^{n\times n})^{(1/2)L(L+1)} \! \times \! (\mathbb{R}^{n})^{(1/2)L(L+1)}, \nonumber
\end{align}
where  $\mathrm{T}_L = ((\mathrm{T}_1)_L, (\mathrm{T}_2)_L, \ (\mathrm{T}_3)_L), (\mathrm{T}_i)_L = ((\mathrm{T}_i)_L^0, \ldots, (\mathrm{T}_i)_L^L) \ (i=1,2,3)$, $\mathrm{U}_L = (\mathrm{U}_L^0, \ldots, \mathrm{U}_L^L)$, $\mathrm{a}_L = (\mathrm{a}_L^0, \ldots, \mathrm{a}_L^L)$, $\mathrm{V}_L = (\mathrm{V}_L^0, \ldots, \mathrm{V}_L^L)$, $\mathrm{b}_L = (\mathrm{b}_L^0,\ldots, \mathrm{b}_L^L)$, $\mathrm{W}_L = (\mathrm{W}_L^1, \ldots, \mathrm{W}_L^L)$ and $\mathrm{c}_L = (\mathrm{c}_L^1, \ldots, \mathrm{c}_L^L)$. It is natural to define a norm for $\Omega_{{\Theta};L}$ as 
\begin{equation}
	\|{\Theta}_{L}\|_{\Omega_{{\Theta};L}} := \max \{\|\mathrm{T}_L\|_{\mathrm{T};L}, \!
 \|\mathrm{U}_L\|_{\mathrm{U};L}, \!	\|\mathrm{a}_L\|_{\mathrm{a};L}, \!
 \|\mathrm{V}_L\|_{\mathrm{V};L}, \! 	\|\mathrm{b}_L\|_{\mathrm{b};L},\! \|\mathrm{W}_L\|_{\mathrm{W};L},\|\mathrm{c}_L\|_{\mathrm{c};L}\},
\end{equation}
where 
$$
\begin{aligned}
& \|\mathrm{T}_L\|_{\mathrm{T};L} = \max_{ i \in \{1, 2, 3\} } \max_{l \in \{0, 1, \ldots, L\}} \| (\mathrm{T}_i)_{L}^l\| , \|\mathrm{U}_L\|_{\mathrm{U};L} = \max_{l \in \{0, 1, \ldots, L\}} \| \mathrm{U}_{{L}}^l\|, \\
&\|\mathrm{a}_L\|_{\mathrm{a};L} = \max_{l \in \{0, 1, \ldots, L\}} | \mathrm{a}_{{L}}^l|, \|\mathrm{V}_L\|_{\mathrm{V};L} = \max_{l \in \{0, 1, \ldots, L\}} \| \mathrm{V}_{{L}}^l\|, \ \|\mathrm{b}_L\|_{\mathrm{b};L} = \max_{l \in \{0, 1, \ldots, L\} } | \mathrm{b}_{{L}}^l|, \\
& \|\mathrm{W}_L\|_{\mathrm{W};L} = \max_{l \in \{1, \ldots, L\}} \max_{ k \in \{1, \ldots, l\}}  \| \mathrm{W}_{{L}}^{l,k}\|, \ \|\mathrm{c}_L\|_{\mathrm{c};L} = \max_{l \in \{1, \ldots, L\}} \max_{ k \in \{1, \ldots, l\}} | \mathrm{c}_{{L}}^{l,k}|.
\end{aligned}
$$
Without ambiguity, we will sometimes use $\|{\Theta}_{L}\|$ to denote $\|{\Theta}_{L}\|_{\Omega_{{\Theta};L}}$ for simplicity. 

For the DNL framework, we now introduce its learning problem.
Denote $\mathrm{x}^l(\mathrm{d}; \Theta_L)$ as the output of $l$-th ($1 \leq l \leq L$) layer of (\ref{equation: DNLF}) with input data $\mathrm{d} \in \mathbb{R}^n$ and learnable parameter $\Theta_L$.
Let $\pmb{\ell} : \mathbb{R}^{n} \times \mathbb{R}^{n} \rightarrow \mathbb{R}^+$ be the data loss function.
Deep learning aims to minimize a loss function ${\pmb{\mathfrak{L}}_{\mathcal{S};L}}({\Theta_L})$ on a training dataset $\mathcal{S} := \{(\mathrm{d}_m, \mathrm{g}_m)\}_{m=1}^M$, where $\mathrm{d}_m$ is the observed signal and $\mathrm{g}_m$ is the corresponding ground truth. % This minimization process seeks to obtain a parameter $\Theta^{*}_L \in \Omega_{\Theta; L}$ such that the output of the neural network $\mathrm{h}_{\Theta^{*}_L}$ can infer and reconstruct the original data from the observed data.
Consequently, we have the following optimal control problem
\begin{equation}
	\begin{aligned}
	(\mathcal{P}_L):	\left\{ \begin{array}{l}
				\inf \limits_{\Theta_L \in \Omega_{\Theta;L}} \left\{ {\pmb{\mathfrak{L}}_{\mathcal{S};L}}({\Theta}_L) = \frac{1}{M} \sum\limits_{m=1}^{M} \pmb{\ell} (\mathrm{x}^L(\mathrm{d}_m; \Theta_L), \mathrm{g}_m) + \pmb{\mathcal{R}}_{L}(\Theta_L) \right\} \\
				\text{ subject to:} \\
			   \{\mathrm{x}^l(\mathrm{d}_m; \Theta_L) \}_{l=0}^{L}  {\rm \ satisfy \ Eq. (\ref{equation: DNLF}) \ with \ input \ data \ } \mathrm{d}_m { \rm \ and \ parameter \ } \Theta_L,
		\end{array} \right.\\
	\end{aligned}
	\label{discrete-time-control problem P_L}
\end{equation} 
where $\pmb{\mathcal{R}}_{L}: \Omega_{{\Theta};L} \rightarrow [0, \infty)$ is a regularization function. 
In neural network training, it is common to employ regularization strategies \cite{thorpe2018deep, wei2019regularization, zhang2020forward, huang2024on}. To clearly define the regularization term $\pmb{\mathcal{R}}_{L}(\Theta_L)$, we introduce the following definitions.
\begin{definition}
Let $\mathbb{X}$ be a vector space. Given $\mathrm{\Xi}_L \in \mathbb{Y}_{L} :=  \mathbb{X} \times (\mathbb{X})^ 2 \times \cdots \times (\mathbb{X})^L$, we define an operator $\bold{flip}: \mathbb{Y}_{L} \rightarrow (\mathbb{X})^{(L+1) \times (L+1)}$ by
\begin{equation}
\begin{aligned}
\text{flipping: } & (\bold{flip}(\mathrm{\Xi}_L))^{k,l} = (\bold{flip}(\mathrm{\Xi}_L))^{l,k} = \mathrm{\Xi}_L^{l,k},  1 \leq l \leq L,  1 \leq k \leq l;  \\ 
\text{with } & (\bold{flip}(\mathrm{\Xi}_L))_L^{0,  0} = \mathrm{\Xi}_L^{1,1};  (\bold{flip}(\mathrm{\Xi}_L))^{l,0} = \mathrm{\Xi}_L^{l,1} ;   (\bold{flip}(\mathrm{\Xi}_L))^{0, k} = \mathrm{\Xi}_L^{k,1},  1 \leq k,l \leq L.
\end{aligned}
\nonumber
\end{equation}  
\end{definition}
We give an example in Figure~\ref{figure: flip} to help understand the $\bold{flip}(\cdot)$ operation. 
\begin{figure}[htbp]
	\centering
	\includegraphics[scale=0.22]{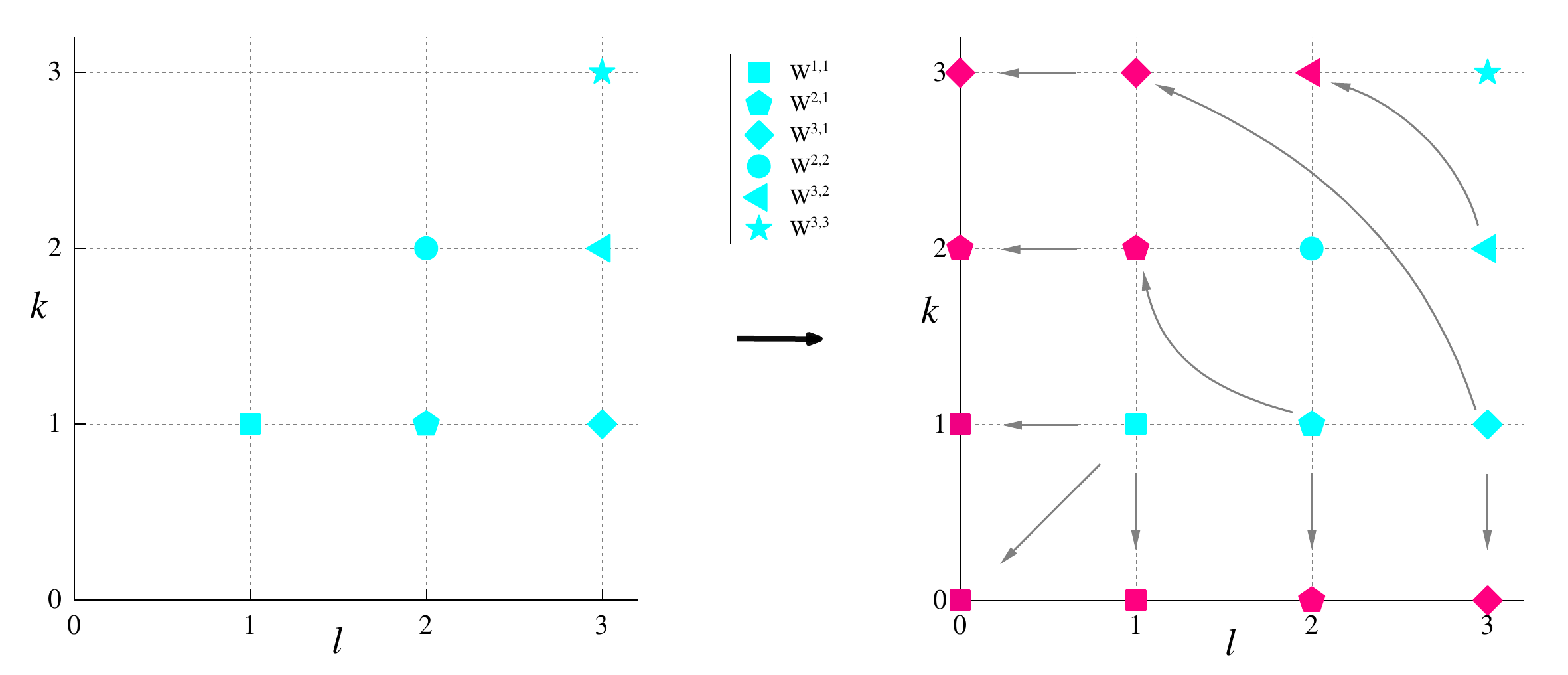}
	\caption{An illustration of the $\bold{flip}(\cdot)$ operation. The left and right pictures show the parameters $\bold{W}_3$ and $\bold{flip}(\bold{W}_3)$, respectively. The elements are arranged on the grid points according to their superscripts. The gray arrow represents the ``copy" operation.}
		\label{figure: flip}
\end{figure} 

Let $\Theta_L = (\mathrm{T}_L, \mathrm{U}_L, \mathrm{a}_L, \mathrm{V}_L, \mathrm{b}_L, \mathrm{W}_L, \mathrm{c}_L ) \in \Omega_{\Theta; L}$ and $\bar{\mathrm{W}}_L = \bold{flip}(\mathrm{W}_L), \bar{\mathrm{c}}_L = \bold{flip}(\mathrm{c}_L)$. 
We consider the following regularization term
\begin{equation}
	\begin{aligned}
		\pmb{\mathcal{R}}_L(\Theta_L)  = &  \sum_{j=1}^{3} \pmb{\mathcal{R}}_L^{(1)}({(\mathrm{T}_j)}_{L})+
		\pmb{\mathcal{R}}_L^{(1)}(\mathrm{U}_L) + \pmb{\mathcal{R}}_L^{(2)}(\mathrm{a}_L) + \pmb{\mathcal{R}}_L^{(1)}(\mathrm{V}_L) + \pmb{\mathcal{R}}_L^{(2)}(\mathrm{b}_L) \\
		& + \pmb{\mathcal{R}}_L^{(3)}(\bar{\mathrm{W}}_L) + \pmb{\mathcal{R}}_L^{(4)}(\bar{\mathrm{c}}_L),
	\end{aligned}
	\label{equation: dis regularization}
\end{equation}
where
\begin{equation}
	\begin{aligned}
 &\pmb{\mathcal{R}}_L^{(1)}(\mathrm{U}_L)
		 := \! \tau \sum_{l=1}^{L}\|\mathrm{U}_{L}^{l}\|^{2} \! + \!  \tau^{-1} \sum_{l=1}^{L}\|\mathrm{U}_{L}^{l} \! - \! \mathrm{U}_{L}^{l-1}\|^{2}, \\
         &\pmb{\mathcal{R}}_L^{(2)}(\mathrm{a}_L)
		:=  \tau \sum_{l=1}^{L}\|\mathrm{a}_{L}^{l}\|^{2} \!+ \!  \tau^{-1} \sum_{l=1}^{L}\|\mathrm{a}_{L}^{l} \! - \! \mathrm{a}_{L}^{l-1}\|^{2}, \\
	&\pmb{\mathcal{R}}_L^{(3)}(\bar{\mathrm{W}}_L) 
	:=	\!  \tau^2 \sum_{l=1}^{L} \sum_{k=1}^{L}\|\bar{\mathrm{W}}_{L}^{l,k}\|^{3}  \\
    & \qquad \qquad \qquad  + \!  \tau^{-1} \Big(\!\sum_{l=1}^{L} \sum_{k=1}^{L}  \|\bar{\mathrm{W}}_{L}^{l,k}\! - \!\bar{\mathrm{W}}_{L}^{l-1,k}\|^{3} \! + \! \sum_{l=1}^{L} \sum_{k=1}^{L}  \|\bar{\mathrm{W}}_{L}^{l,k} \!-\! \bar{\mathrm{W}}_{L}^{l,k-1}\|^{3} \!\Big), \\
	&\pmb{\mathcal{R}}_L^{(4)}(\bar{\mathrm{c}}_L)
	:=  \tau^2  \sum_{l=1}^{L} \!\sum_{k=1}^{L}\|\bar{\mathrm{c}}_{L}^{l,k}\|^{3} \\
    &\qquad \qquad \qquad  +   \tau^{-1} \Big(\!\sum_{l=1}^{L} \sum_{k=1}^{L}  \|\bar{\mathrm{c}}_{L}^{l,k} \!-\! \bar{\mathrm{c}}_{L}^{l-1,k}\|^{3} \! + \!\sum_{l=1}^{L} \sum_{k=1}^{L}  \|\bar{\mathrm{c}}_{L}^{l,k}\! -\! \bar{\mathrm{c}}_{L}^{l,k-1}\|^{3} \! \Big).
	\end{aligned}
	\label{regularizations}
\end{equation}
As can be seen, the definitions in (\ref{regularizations}) mimic some norms in some specific Sobolev spaces. We use the above regularization instead of standard $l^2$ or $l^1$ penalties to ensure compactness, as it controls both the magnitude and variation of parameters across layers, similar to the approach in \cite{thorpe2018deep, Esteve2020LargetimeAI}.

\section{Deep-layer limit of the DNL framework: mathematical modeling and main convergence result}
\label{Sec:3}
In this section, 
we consider the limit of the DNL framework (\ref{equation: DNLF}) as the total layer number $L \rightarrow \infty$.
We show dynamic system modeling for the DNL framework and present our main result on the convergence from the learning problem of the discrete-time DNL framework to that of the continuous-time DNL framework.

\subsection{The continuous-time dynamical system modeling for DNL framework}
We now model the forward propagation of the DNL framework (\ref{equation: DNLF}) as a dynamical system in a continuous-time setting, and present its learning problem. For this, we give some notations in a continuous-time setting.
%%%%%%%%%%%%%%%%%%%%%%%%%%%%%%%%%%%%%%%
The space of functions that are continuous on $\Omega \subset \mathbb{R}^d$ is denoted by $\mathcal{C}(\Omega)$.
The Sobolev space of functions that are $q$-times weakly differentiable and each weak derivative is $\mathcal{L}^p$ integrable on $\Omega$ is denoted by $\mathcal{W}^{q, p}(\Omega)$. Specially, $\mathcal{H}^q(\Omega) := \mathcal{W}^{q,2}(\Omega)$. 
The functions are denoted with bold letters.
{For a vector-valued (resp., matrix-valued) function $\bold{a}:[0, 1]\to\mathbb{R}^{n}$ (resp., $\bold{U}:[0, 1]\to\mathbb{R}^{n\times n}$), we denote its supremum norm (if exists) as $\|\bold{a}\|_{C}$ (resp., $\|\bold{U}\|_{C}$) and denote its $\mathcal{L}^{\infty}$ norm (if exists) as $\|\bold{a}\|_{\mathcal{L}^{\infty}}$ (resp., $\|\bold{U}\|_{\mathcal{L}^{\infty}}$).}
%With a slight abuse of notation, for a continuous bivariate vector-valued (resp., matrix-valued) function $\bold{c}:[0, 1]\times[0, 1]\to\mathbb{R}^{n}$ (resp., $\bold{W}:[0, 1]\times[0, 1]\to\mathbb{R}^{n\times n}$), the supremum norm $\|\bold{c}\|_C$ (resp., $\|\bold{W}\|_C$) is defined similarly. 
For a two-variable function $\bold{f}(t,s)$, we denote $\pmb{\mathcal{D}}_{t}(\bold{f})$ and $\pmb{\mathcal{D}}_{tt}(\bold{f})$ as its first-order and second-order weak partial derivative for variable $t$ (if it exists). The notations $\pmb{\mathcal{D}}_{s}(\bold{f})$ and $\pmb{\mathcal{D}}_{ss}(\bold{f})$ are similarly defined. The identity operator is denoted by $\bold{Id}$. 

Without loss of generality, we assume that the time interval for the continuous-time system is $[0,1]$, and thus step size $\tau = 1/L$.
We introduce the $ansatz$ $\mathrm{x}^{l} \approx \bold{x}(l\tau), \ \mathrm{T}^{l} \approx \bold{T}(l\tau), \ (\mathrm{U}_L^{l},\mathrm{a}_L^{l})  \approx (\bold{U}(l\tau), \bold{a}(l\tau)),\ (\mathrm{V}_L^{l},\mathrm{b}_L^{l})  \approx (\bold{V}(l\tau), \bold{b}(l\tau))$ ($0 \leq l \leq L$), and  $(\mathrm{W}_L^{l,k},\mathrm{c}_L^{l,k}) \approx (\bold{W}(l\tau, k\tau),\bold{c}(l\tau, k\tau))$ ($1 \leq k\leq l \leq L$), 
for some smooth curves $\bold{x}(t)$, $\bold{T}(t)$, $\bold{U}(t)$, $\bold{a}(t)$, $\bold{V}(t)$, $\bold{b}(t)$ defined for $0 \leq t \leq 1$ and $\bold{W}(t, s), \bold{c}(t, s)$ defined for $0 \leq t,s \leq 1$. Let $t_L^l=l \tau$, $0 \leq l \leq L$. For small $\tau$, we have $\bold{x}(t_L^l) \approx  \mathrm{x}^{l}$, $\bold{T}(t_L^l) \approx  \mathrm{T}^{l}$, $(\bold{U}(t_L^l), \bold{a}(t_L^l)) \approx (\mathrm{U}_L^{l},\mathrm{a}_L^{l}),\  (\bold{V}(t_L^l), \bold{b}(t_L^l)) \approx (\mathrm{V}_L^{l},\mathrm{b}_L^{l})$ and $ (\bold{W}(t_L^l, t_L^k), \bold{c}(t_L^l, t_L^k))  \approx(\mathrm{W}_L^{l,k},\mathrm{c}_L^{l,k})$. 
In this way, equation \eqref{equation: DNLF} implies that 
\begin{equation}
	\begin{aligned}
		\bold{x}(t_L^l) 
		& = \bold{V}(t_L^l) \pmb{\phi} \!\circ \!\Big( \bold{U}(t_L^{l}) \pmb{\mathcal{A}} _{\pmb{\kappa} }(\bold{T}(t_L^l); \mathrm{d}) +\bold{a}(t_L^{l}) \\
        & \qquad \qquad \quad \ + \! \sum_{k=0}^{l-1}\! \int_{t_L^{k}}^{t_L^{k\!+\!1}} \! [\bold{W}(t_L^{l}, t_L^{k+1}) \pmb{\mathcal{A}} _{\pmb{\kappa}}(\bold{T}(t_L^l); \bold{x}(t_L^{k})) \!+\! \bold{c}(t_L^{l}, t_L^{k+1})] {d}s \Big) \!+ \!\bold{b}(t_L^{l}).
	\end{aligned}
 \nonumber
\end{equation}
This formulation can be seen as a discrete approximation of the following integral equation
\begin{equation}
	\bold{x}(t) = \bold{V}(t) \pmb{\phi} \circ \Big( \bold{U}(t) \pmb{\mathcal{A}} _{\pmb{\kappa} }(\bold{T}(t);\mathrm{d}) +\bold{a}(t) + \int_{0}^{t} \Big[\bold{W}(t, s) \pmb{\mathcal{A}}_{\pmb{\kappa}}(\bold{T}(t); \bold{x}(s)) + \bold{c}(t, s) \Big] {d}s \Big) + \bold{b}(t),
	\label{equation: continuous-time DNLF}
\end{equation}
where $t \in [0,1]$, $\mathrm{d} \in \mathbb{R}^n$ is the input data. It is noted that when $\pmb{\phi}= \bold{Id}$, Eq.(\ref{equation: continuous-time DNLF}) is a Volterra integral equation of the second kind \cite{brunner2017volterra}, which is often utilized to characterize the dynamic behavior of systems with memory properties or problems involving delay effects. {We mention that this type of integral equation with memory properties is consistent with some original intention of DenseNets.}

In the following, we refer to the integral equation 
(\ref{equation: continuous-time DNLF}) as 
continuous-time DNL framework. Similar to the discrete case, we package the continuous-time learnable parameters together
and denote $\pmb{\Theta} := (\bold{T}, \bold{U}, \bold{a}, \bold{V}, \bold{b}, \bold{W}, \bold{c})$, where $\bold{T}=(\bold{T}_1, \bold{T}_2, \bold{T}_3)$. For convenience, we let the parameter space 
\begin{equation}
	\begin{aligned}
			\mathcal{C}_{\pmb{\Theta}}:= & \mathcal{C}([0, 1]; (\mathbb{R}^{n\times n})^3) \! \times  \! \big(\mathcal{C}([0, 1]; \mathbb{R}^{n\times n})  \! \times \! \mathcal{C}([0, 1]; \mathbb{R}^{n})\big)^2  \\
			&	\times \{\bold{W} \in  \mathcal{L}^{\infty}([0, 1] \times [0, 1]; \mathbb{R}^{n\times n}) : \bold{W}(t,s) = \bold{W}(s,t), \ 0\leq s,t \leq 1 \} \\
			& \times \{\bold{c} \in  \mathcal{L}^{\infty}([0, 1] \times [0, 1]; \mathbb{R}^{n}) : \bold{c}(t,s) = \bold{c}(s,t), \ 0\leq s,t \leq 1 \}, 
	\end{aligned}
	\label{parameter space C}
\end{equation}
and introduce a norm $\| \cdot \|_{	\mathcal{C}_{\pmb{\Theta}}}$ for $\pmb{\Theta} \in 	\mathcal{C}_{\pmb{\Theta}}$ as 
\begin{equation}
	\|\pmb{\Theta}\|_{	\mathcal{C}_{\pmb{\Theta}}} \!:= \! \max \{\|\bold{T}_1\|_{C}, \! \|\bold{T}_2\|_{C}, \!\|\bold{T}_3\|_{C}, \!\|\bold{U}\|_{C}, \!\|\bold{a}\|_{C}, \!\|\bold{V}\|_{C}, \!\|\bold{b}\|_{C}, \!\|\bold{W}\|_{\mathcal{L}^\infty}, \!\|\bold{c}\|_{\mathcal{L}^\infty}\}.
	\label{equation: norm of continuous-time p}
\end{equation}
We sometimes abbreviate $\|\pmb{\Theta}\|_{	\mathcal{C}_{\pmb{\Theta}}}$ as $\|\pmb{\Theta}\|$. Here, to be consistent with the $\textbf{flip}(\cdot)$ operation in discrete systems, we leverage symmetry to define the parameters $\bold{W}$ and $\bold{c}$ on $[0,1]\times[0,1]$, thus avoiding some complex boundary condition discussions in subsequent analysis. 
We also introduce a parameter set 
\begin{equation}
	\begin{aligned}
		\Omega_{\pmb{\Theta}} := \mathcal{C}_{\pmb{\Theta}} \cap \Big( & \mathcal{H}^1((0, 1); (\mathbb{R}^{n\times n})^3) \times \big(\mathcal{H}^1((0, 1); \mathbb{R}^{n\times n}) \times \mathcal{H}^1((0, 1); \mathbb{R}^{n}) \big)^2 \\
		& \times \mathcal{W}^{1,3}((0, 1)\times(0, 1); \mathbb{R}^{n \times n}) \times \mathcal{W}^{1,3}((0, 1) \times(0, 1); \mathbb{R}^{n}) \Big)
	\end{aligned}
	\label{continuous parameter set}
\end{equation}
to define regularization terms for the continuous-time learning problem and enforce some compactness.

Let $\bold{x}(\cdot; \mathrm{d};\pmb{\Theta})$ be the trajectory of Eq.(\ref{equation: continuous-time DNLF}) for parameter $\pmb{\Theta}$ and input $\mathrm{d}$. The learning problem of the continuous-time DNL framework can be given as follows
\begin{equation}
	\begin{aligned}
		(\mathcal{P}): \	\left\{ \begin{array}{l}
			 	\inf \limits_{\pmb{\Theta} \in \Omega_{\pmb{\Theta}}} \left\{ {\pmb{{\mathfrak{L}}}_{\mathcal{S}}}({\pmb{\Theta}}) = \frac{1}{M} \sum\limits_{m=1}^{M} \pmb{\ell} (\bold{x}(1; \mathrm{d}_m;\pmb{\Theta}) , \mathrm{g}_m) + \pmb{\mathcal{R}}(\pmb{\Theta}) \right\} \\
			\text{ subject to:} \\
		   \bold{x}(\cdot; \mathrm{d}_m;\pmb{\Theta})  \text{ satisfies Eq.} (\ref{equation: continuous-time DNLF}) \text{ with input data } \mathrm{d}_m \text{ and parameter } \pmb{\Theta},
		\end{array} \right.\\
	\end{aligned}
	\label{continuous-time: control problem P}
\end{equation}
where 
\begin{equation}
    \begin{aligned}
        	&\pmb{\mathcal{R}}(\pmb{\Theta}) = \sum_{j=1}^{3} \|\bold{T}_j\|^2_{\mathcal{H}^1((0, 1); \mathbb{R}^{n\times n})} \! + \! \|\bold{U}\|^2_{\mathcal{H}^1((0, 1); \mathbb{R}^{n\times n})} \! + \! \|\bold{a}\|^2_{ \mathcal{H}^1((0, 1); \mathbb{R}^{n})} \! + \! \|\bold{V}\|^2_{\mathcal{H}^1((0, 1); \mathbb{R}^{n\times n})} \\
    & \qquad \qquad + \! \|\bold{b}\|^2_{ \mathcal{H}^1((0, 1); \mathbb{R}^{n})} + \|\bold{W}\|^3_{\mathcal{W}^{1,3}((0, 1) \times (0, 1); \mathbb{R}^{n\times n})} + \|\bold{c}\|^3_{\mathcal{W}^{1,3}((0, 1) \times (0, 1); \mathbb{R}^{n})}.
    \end{aligned}
    \label{equation: continuous regularization}
\end{equation}
The regularization term $\pmb{\mathcal{R}}_L$ defined in Eq.(\ref{equation: dis regularization}) can be regarded as a discrete approximation of $\pmb{\mathcal{R}}$.

\subsection{Main convergence result}
In this subsection, we present the main result of this work on the convergence from the learning problem $(\mathcal{P}_L)$ of discrete-time DNL framework to the learning problem $(\mathcal{P})$ of continuous-time DNL framework, by using some extension operators defined as follows.
\begin{definition}
\label{definition: linear extention}
	Let $\mathbb{X}$ be a vector space. Partition the time interval $[0,1]$ into $L$ intervals $\{[(l-1)\tau,l\tau] \}_{l=1}^{L}$, where $\tau = 1/L$. We define the piecewise constant extension operator $\bar{\pmb{\mathcal{I}}}_{L}: (\mathbb{X})^{L+1}  \rightarrow \mathcal{M}_L([0,1]; \mathbb{X})$ and
 the piecewise linear extension operator $\hat{\pmb{\mathcal{I}}}_{L}: (\mathbb{X})^{L+1} \rightarrow \mathcal{C}([0,1]; \mathbb{X})$ for $\mathrm{\Xi}_L=(\mathrm{\xi}_L^0, \mathrm{\xi}_L^1, \ldots, \mathrm{\xi}_L^{L}) \in (\mathbb{X})^{L+1}$ as
	\begin{align*}
 	(\bar{\pmb{\mathcal{I}}}_{L} \mathrm{\Xi}_L) (t) &=
		\left \{
		\begin{aligned}
		  & \mathrm{\xi}_L^{l}, \ t \in ((l-1)\tau,l\tau],
		\ 1\leq l \leq L, \\
		 & \mathrm{\xi}_L^{0}, \ t =0;
	\end{aligned}
	\right. \\
		\nonumber
	(\hat{\pmb{\mathcal{I}}}_{L} \mathrm{\Xi}_L) (t)  &= 
	\mathrm{\xi}_L^{l-1} + \Big(t-(l-1) \tau \Big)\frac{\mathrm{\xi}_L^{l} - \mathrm{\xi}_L^{l-1}}{\tau}, \ t \in [(l-1)\tau,l\tau], \ 1\leq l \leq L, 
		\nonumber
	\end{align*}
where $\mathcal{M}_L([0,1]; \mathbb{X}) $ is the piecewise constant function space with $L$ pieces. 
\end{definition}

\begin{definition}
\label{definition: bilinear extension}
Let $\mathbb{X}$ be a vector space. Divide the domain $[0,1] \times [0,1]$ equally into $L^2$ squares $\{[(l-1)\tau,l\tau] \times [(k-1)\tau,k\tau]\}_{l,k=1}^{L}$, where $\tau = 1/L$. Define the piecewise constant extension operator $\bar{\pmb{\mathcal{BI}}}_{L}: (\mathbb{X})^{(L+1)\times (L+1)}  \rightarrow \mathcal{M}([0,1] \times [0,1]; \mathbb{X})$ and the piecewise bilinear extension operator $\hat{\pmb{\mathcal{BI}}}_{L}: (\mathbb{X})^{(L+1)\times (L+1)} \rightarrow \mathcal{C}([0,1] \times [0,1]; \mathbb{X})$  for $\mathrm{\Xi}_L=(\mathrm{\xi}_L^{l,k})_{0 \leq l,k\leq L} \in (\mathbb{X})^{(L+1)\times (L+1)} $ as
\begin{align*}
(\bar{\pmb{\mathcal{BI}}}_{L}\mathrm{\Xi}_L) (t,s)=&
\left \{
\begin{aligned}
	& \mathrm{\xi}_L^{l,k}, \ t \in ((l-1)\tau,l\tau], s \in ((k-1)\tau,k\tau],
	\ 1\leq l,k \leq L, \\
	& \mathrm{\xi}_L^{0, k}, \ t = 0, s \in ((k-1)\tau,k\tau], \ 1\leq k \leq L, \\
	& \mathrm{\xi}_L^{l, 0}, \ s = 0, t \in ((l-1)\tau,l\tau], \ 1\leq l \leq L, \\
	& \mathrm{\xi}_L^{0, 0}, \ t = 0, s=0.
\end{aligned}
\right.
\nonumber 
\\
	(\hat{\pmb{\mathcal{BI}}}_{L} \mathrm{\Xi}_L) (t,s)  =
		&\frac{k\tau -s}{\tau}\! \big( \frac{l\tau-t}{\tau} \mathrm{\xi}_L^{l-1,k-1} \!+\! \frac{t\!-\!(l-1)\tau}{\tau}  \mathrm{\xi}_L^{l,k-1} \big) \!+\! \frac{s\!-\!(k-1)\tau}{\tau} \big( \frac{l\tau-t}{\tau} \mathrm{\xi}_L^{l-1,k} \\ 
		&+\! \frac{t-(l-1)\tau}{\tau}  \mathrm{\xi}_L^{l,k} \big), t \in [(l-1)\tau,l\tau], s \in [(k-1)\tau,k\tau], 1\leq k,l \leq L,  
\nonumber
\end{align*}
where $\mathcal{M}([0,1] \times [0,1]; \mathbb{X}) $ is the piecewise constant function space with $L\times L$ pieces.
\end{definition}
These extension operators connect the learnable parameters of the discrete-time DNL framework \eqref{equation: DNLF} and continuous-time DNL framework \eqref{equation: continuous-time DNLF}. 
In particular, we denote ${\hat{\pmb{\mathcal{I}}}}_{L}\Theta_{L}\! = \!\big(\hat{\pmb{\mathcal{I}}}_{L} (\mathrm{T}_1)_{L}, \hat{\pmb{\mathcal{I}}}_{L} (\mathrm{T}_2)_{L}, \hat{\pmb{\mathcal{I}}}_{L} (\mathrm{T}_3)_{L}, \hat{\pmb{\mathcal{I}}}_{L} \mathrm{U}_L,$ $ \hat{\pmb{\mathcal{I}}}_{L} {\mathrm{a}}_{L}, \hat{\pmb{\mathcal{I}}}_{L} \mathrm{V}_L, \hat{\pmb{\mathcal{I}}}_{L} {\mathrm{b}}_{L}, \hat{\pmb{\mathcal{BI}}}_{L} (\bold{flip}(\mathrm{W}_L))$, $ \hat{\pmb{\mathcal{BI}}}_{L} (\bold{flip}(\mathrm{c}_L))\big)$ and ${\bar{\pmb{\mathcal{I}}}}_{L}\Theta_{L} = \big(\bar{\pmb{\mathcal{I}}}_{L} (\mathrm{T}_1)_{L}, \bar{\pmb{\mathcal{I}}}_{L} (\mathrm{T}_2)_{L}, \bar{\pmb{\mathcal{I}}}_{L} (\mathrm{T}_3)_{L}, \bar{\pmb{\mathcal{I}}}_{L} \mathrm{U}_L, \bar{\pmb{\mathcal{I}}}_{L} {\mathrm{a}}_{L}, \bar{\pmb{\mathcal{I}}}_{L} \mathrm{V}_L, \bar{\pmb{\mathcal{I}}}_{L} {\mathrm{b}}_{L},$
$ \bar{\pmb{\mathcal{BI}}}_{L} (\bold{flip}(\mathrm{W}_L)), $ $ \bar{\pmb{\mathcal{BI}}}_{L} (\bold{flip}(\mathrm{c}_L) )  \big)$.

We will use the following assumptions:
\begin{itemize}
	\item [(${A}_1$)] The activation function $\pmb{\phi}: \mathbb{R} \rightarrow \mathbb{R}$ is $L_{\pmb{\phi}}$-Lipschitz, increasing, acts point-wise and takes $0$ at $0$.
	\item  [(${A}_2$)]  The transformation $\pmb{\mathcal{A}} _{\pmb{\kappa}} : (\mathbb{R}^{n\times n})^3 \times \mathbb{R}^n  \rightarrow \mathbb{R}^n$ satisfies a local growth condition and a local Lipschitz condition in the following sense. There exist continuous functions $\pmb{\mathcal{G}}_{\pmb{\mathcal{A}}} (\cdot): (\mathbb{R}^{n\times n})^3 \rightarrow \mathbb{R}^+$ and $\pmb{\mathcal{L}}_{\pmb{\mathcal{A}}}(\cdot, \cdot, \cdot): (\mathbb{R}^{n\times n})^3 \times \mathbb{R}^n \times \mathbb{R}^n \rightarrow \mathbb{R}^+$ such that 
 for all $\mathrm{\Xi}= (\mathrm{\Xi}_1, \mathrm{\Xi}_2, \mathrm{\Xi}_3), \mathrm{\Xi}^{\prime}= (\mathrm{\Xi}_1^{\prime}, \mathrm{\Xi}_2^{\prime}, \mathrm{\Xi}_3^{\prime}) \in (\mathbb{R}^{n\times n})^3$ and $\mathrm{z}, \mathrm{z}^{\prime} \in \mathbb{R}^{n}$
 $$
 \begin{aligned}
 |\pmb{\mathcal{A}} _{\pmb{\kappa}}(\mathrm{\Xi}, \mathrm{z})| & \leq \pmb{\mathcal{G}}_{\pmb{\mathcal{A}}}(\mathrm{\Xi}) \cdot |\mathrm{z}|, \\
     |\pmb{\mathcal{A}} _{\pmb{\kappa}}(\mathrm{\Xi}, \mathrm{z}) - \pmb{\mathcal{A}} _{\pmb{\kappa}}(\mathrm{\Xi}^{\prime}, \mathrm{z}^{\prime})| & \leq \pmb{\mathcal{L}}_{\pmb{\mathcal{A}}}(\mathrm{\Xi}, \mathrm{z}, \mathrm{z}^{\prime}) \cdot \Big(\sum_{i=1}^{3}\|\mathrm{\Xi}_i - \mathrm{\Xi}^{\prime}_i\| + |\mathrm{z} - \mathrm{z}^{\prime}|\Big).
 \end{aligned}
 $$
  \item  [(${A}_3$)] The loss function $\pmb{\ell} : \mathbb{R}^{n} \times \mathbb{R}^{n} \rightarrow \mathbb{R}^+$ is continuous in its first argument.
\end{itemize}
\begin{remark}
\label{remark: remark for assumptions}
These assumptions are mild and commonly used in deep learning theory. Many widely used activation functions, such as ReLU, parametric ReLU, and tanh \cite{goodfellow2016deep}, satisfy (${A}_1$). Common loss functions such as mean squared error and cross-entropy satisfy (${A}_3$). 
Moreover, a broad class of existing architectures satisfy (${A}_2$). For example:
\begin{itemize}
    \item[(i)] Affine map in DenseNet: the kernel $\pmb{\kappa}((\mathrm{\Xi}_1\mathrm{z})[i], (\mathrm{\Xi}_2\mathrm{z})[j]) = \pmb{\delta}_{ij}$ and   
    $\pmb{\mathcal{A}}_{\pmb{\kappa}}(\mathrm{\Xi}, \mathrm{z}) = \mathrm{z} \mathrm{\Xi}_3$, where $\pmb{\delta}_{ij}$ is the Kronecker function. Then (${A}_2$) is satisfied with 
    $$
    \begin{aligned}
    \pmb{\mathcal{G}}_{\pmb{\mathcal{A}}}(\mathrm{\Xi}) =  \|\mathrm{\Xi}_3\|_2, \ 
    \pmb{\mathcal{L}}_{\pmb{\mathcal{A}}} (\mathrm{\Xi}, \mathrm{z}, \mathrm{z}^{\prime}) = \|\mathrm{\Xi}_3\|_2 + |\mathrm{z}^{\prime}| .
    \end{aligned}
    $$
    \item[(ii)] {Self-attention in Transformers} \cite{vaswani2017attention}: the kernel 
    $\pmb{\kappa}((\mathrm{\Xi}_1\mathrm{z})[i], (\mathrm{\Xi}_2\mathrm{z})[j]) = \exp((\mathrm{\Xi}_1\mathrm{z})[i] \cdot (\mathrm{\Xi}_2\mathrm{z})[j]/\sqrt{n})$, and   
    $\pmb{\mathcal{A}}_{\pmb{\kappa}}(\mathrm{\Xi}, \mathrm{z}) = \mathrm{softmax}\left(\frac{ \mathrm{\Xi}_1 \mathrm{z} \cdot ( \mathrm{\Xi}_2 \mathrm{z})^{\top}}{\sqrt{n}}\right)  \mathrm{\Xi}_3\mathrm{z}$.
    One can verify that assumption (${A}_2$) holds with 
    $$
    \begin{aligned}
    \pmb{\mathcal{G}}_{\pmb{\mathcal{A}}}(\mathrm{\Xi}) &= \sqrt{n} \|\mathrm{\Xi}_3\|_2, \\ 
    \pmb{\mathcal{L}}_{\pmb{\mathcal{A}}} (\mathrm{\Xi}, \mathrm{z}, \mathrm{z}^{\prime}) &= \sqrt{n} \max \big \{ \|\mathrm{\Xi}\|^2 |\mathrm{z}|^3,  \|\mathrm{\Xi}\|^3  (|\mathrm{z}|^2+ |\mathrm{z}||\mathrm{z}^{\prime}|) + \|\mathrm{\Xi}\|, \|\mathrm{\Xi}\|^2 |\mathrm{z}|^2 |\mathrm{z}^{\prime}|, |\mathrm{z}^{\prime}| \big \},
    \end{aligned}
    $$
    where $\|\mathrm{\Xi}\| = \max \{\|\mathrm{\Xi}_1\|_2, \|\mathrm{\Xi}_2\|_2, \|\mathrm{\Xi}_3\|_2 \}$.
    
    \item[(iii)] {Gaussian non-local operator in Non-local Nets} \cite{Wang2017NonlocalNN}: the kernel is given by $\pmb{\kappa}((\mathrm{\Xi}_1\mathrm{z})[i], (\mathrm{\Xi}_2\mathrm{z})[j]) = \exp(\mathrm{z}[i] \cdot \mathrm{z}[j])$, and 
    $\pmb{\mathcal{A}}_{\pmb{\kappa}}(\mathrm{\Xi}, \mathrm{z}) = \mathrm{softmax}\left({\mathrm{z} \mathrm{z}^{\top}}\right) \mathrm{\Xi}_3\mathrm{z}$.
    In this case, (${A}_2$) is satisfied with
    $$
    \begin{aligned}
    \pmb{\mathcal{G}}_{\pmb{\mathcal{A}}} (\mathrm{\Xi}) &= \sqrt{n} \|\mathrm{\Xi}_3\|_2, \\ 
    \pmb{\mathcal{L}}_{\pmb{\mathcal{A}}}(\mathrm{\Xi}, \mathrm{z}, \mathrm{z}^{\prime}) &= \max \big \{ {\|\mathrm{\Xi}\| |\mathrm{z}|}(|\mathrm{z}| + |\mathrm{z}^{\prime}| ) + \sqrt{n}\|\mathrm{\Xi}\| ,  \sqrt{n}|\mathrm{z}^{\prime}| \big \}.
    \end{aligned}
    $$
\end{itemize}
See the supplementary material for detailed derivations.
\end{remark}

Our main result establishes both the convergence of optimal values and the subsequence convergence of optimal solutions for the discrete-time learning problem as the number of layers approaches infinity.
\begin{theorem}
(Convergence from the discrete to  continuous  learning problem) Consider the problems $(\mathcal{P}_{L}), (\mathcal{P})$ defined in (\ref{discrete-time-control problem P_L}) and (\ref{continuous-time: control problem P}), respectively. Let the parameter sets $\Omega_{\Theta;L}$ and $\Omega_{\pmb{\Theta}}$ be given by (\ref{notation: discrete learnable parameter set}) and (\ref{continuous parameter set}). If assumptions $(A_1)-(A_3)$ hold, then minimizers of $(\mathcal{P}_L)$ and $(\mathcal{P})$ exist. Moreover, for any sequence $\{\Theta_{L}^{*}\}_L \subset \Omega_{\Theta;L}$, where $\Theta_{L}^{*} = (\mathrm{T}^*_L,\mathrm{U}^*_L,\mathrm{a}^*_L,\mathrm{V}^*_L,\mathrm{b}^*_L,\mathrm{W}^*_L,\mathrm{c}^*_L)$ is a minimizer of $(\mathcal{P}_L)$, we have 
	$$
\min _{\Theta_L \in \Omega_{\Theta;L}} \pmb{\mathfrak{L}}_{\mathcal{S};L}(\Theta_L) =\pmb{\mathfrak{L}}_{\mathcal{S};L} \left(\Theta_{L}^{*}\right) \rightarrow \min _{\pmb{\Theta} \in \Omega_{\pmb{\Theta}}} \pmb{\mathfrak{L}}_{\mathcal{S}} (\pmb{\Theta}), \quad \text { as } L \rightarrow \infty.
$$
Let $\pmb{\Theta}^*_L: ={\hat{\pmb{\mathcal{I}}}}_{L}\Theta^*_{L}$, $L\ge 1$. Then $\{\pmb{\Theta}^*_L\}_L$ is relatively compact in $\mathcal{C}_{\pmb{\Theta}}$ with respect to the norm $\| \cdot \|_{	\mathcal{C}_{\pmb{\Theta}}}$, 
and any limit point of $\{\pmb{\Theta}_L^*\}_L$ in $\Omega_{\pmb{\Theta}}$ is a minimiser of $(\mathcal{P})$.
	\label{theorem: main results}
\end{theorem}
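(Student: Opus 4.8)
The plan is to prove the theorem by the direct method combined with a $\Gamma$-convergence argument, using the extension operators of Definitions~\ref{definition: linear extention}--\ref{definition: bilinear extension} to transport the finite-dimensional problems $(\mathcal{P}_L)$ into the common ambient space $\mathcal{C}_{\pmb{\Theta}}$ where the limit problem $(\mathcal{P})$ lives. Concretely, I would introduce the extended functionals $\mathcal{F}_L$ on $\mathcal{C}_{\pmb{\Theta}}$ defined by $\mathcal{F}_L(\pmb{\Theta})=\pmb{\mathfrak{L}}_{\mathcal{S};L}(\Theta_L)$ when $\pmb{\Theta}={\hat{\pmb{\mathcal{I}}}}_{L}\Theta_{L}$ for some $\Theta_L\in\Omega_{\Theta;L}$ (the extension map being injective, $\Theta_L$ is then determined), and $\mathcal{F}_L(\pmb{\Theta})=+\infty$ otherwise, and prove $\mathcal{F}_L\xrightarrow{\Gamma}\pmb{\mathfrak{L}}_{\mathcal{S}}$ with respect to the norm of $\mathcal{C}_{\pmb{\Theta}}$ together with equicoercivity. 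For existence, the minimizer of $(\mathcal{P}_L)$ follows from Weierstrass: $\Omega_{\Theta;L}$ is finite-dimensional Euclidean, the objective is continuous in $\Theta_L$ by $(A_1)$--$(A_3)$ and the well-posedness of the forward recursion \eqref{equation: DNLF}, and $\pmb{\mathcal{R}}_L$ is coercive, so sublevel sets are compact. The minimizer of $(\mathcal{P})$ I would obtain either by the direct method or, more economically, as a byproduct of equicoercivity plus $\Gamma$-convergence.

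The heart of the argument splits into three ingredients. First, \emph{equicoercivity}: the regularizers in \eqref{regularizations} are engineered as discrete surrogates of Sobolev norms, so a uniform bound $\sup_L\pmb{\mathfrak{L}}_{\mathcal{S};L}(\Theta_L)<\infty$ forces $\|{\hat{\pmb{\mathcal{I}}}}_{L}\Theta_{L}\|$ to be bounded in the $\mathcal{H}^1$ (for the curves) and $\mathcal{W}^{1,3}$ (for the memory kernels $\bold{W},\bold{c}$) norms; the compact Sobolev embeddings $\mathcal{H}^1((0,1))\hookrightarrow\hookrightarrow\mathcal{C}([0,1])$ and $\mathcal{W}^{1,3}((0,1)^2)\hookrightarrow\hookrightarrow\mathcal{C}([0,1]^2)$ (the cubic exponent being chosen precisely so that $3>2=\dim$ yields a Morrey-type compact embedding into $\mathcal{C}$) then make $\{{\hat{\pmb{\mathcal{I}}}}_{L}\Theta_{L}\}$ relatively compact in $\mathcal{C}_{\pmb{\Theta}}$, which simultaneously supplies the asserted relative compactness of $\{\pmb{\Theta}^*_L\}$ and places limit points in $\Omega_{\pmb{\Theta}}$. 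Second, the \emph{$\liminf$ inequality}: given ${\hat{\pmb{\mathcal{I}}}}_{L}\Theta_{L}\to\pmb{\Theta}$ in $\mathcal{C}_{\pmb{\Theta}}$, I would prove that the discrete terminal states $\mathrm{x}^L(\mathrm{d}_m;\Theta_L)$ converge to $\bold{x}(1;\mathrm{d}_m;\pmb{\Theta})$, whence continuity of $\pmb{\ell}$ gives convergence of the data-loss term, and combine this with the weak lower semicontinuity of the norm-type regularizers to conclude $\liminf_L\pmb{\mathfrak{L}}_{\mathcal{S};L}(\Theta_L)\ge\pmb{\mathfrak{L}}_{\mathcal{S}}(\pmb{\Theta})$. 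Third, the \emph{recovery sequence}: for any $\pmb{\Theta}\in\Omega_{\pmb{\Theta}}$ I would sample $\pmb{\Theta}$ at the nodes $\{l\tau\}$ (and $\{(l\tau,k\tau)\}$ for $\bold{W},\bold{c}$) to define $\Theta_L$, so that the extensions converge to $\pmb{\Theta}$ and, by Riemann-sum and difference-quotient convergence, $\pmb{\mathcal{R}}_L(\Theta_L)\to\pmb{\mathcal{R}}(\pmb{\Theta})$ while the states converge again, giving $\limsup_L\pmb{\mathfrak{L}}_{\mathcal{S};L}(\Theta_L)\le\pmb{\mathfrak{L}}_{\mathcal{S}}(\pmb{\Theta})$.

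With these three ingredients in hand, the fundamental theorem of $\Gamma$-convergence (in the presence of equicoercivity) immediately yields convergence of the minimum values $\min(\mathcal{P}_L)\to\min(\mathcal{P})$ and the subsequence convergence of minimizers toward a minimizer of $(\mathcal{P})$, which is exactly the assertion of the theorem.

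The main obstacle I anticipate is the uniform convergence of the discrete forward dynamics \eqref{equation: DNLF} to the continuous integral equation \eqref{equation: continuous-time DNLF}, required in both the $\liminf$ and the recovery steps. Because $(A_2)$ supplies only \emph{local} growth and Lipschitz control through $\pmb{\mathcal{G}}_{\pmb{\mathcal{A}}}$ and $\pmb{\mathcal{L}}_{\pmb{\mathcal{A}}}$, I must first derive a priori $\mathcal{L}^\infty$ bounds on the discrete and continuous trajectories, uniform in $L$ along bounded-energy sequences, to confine the estimates to a region where these constants are controlled; well-posedness of the continuous Volterra-type equation would be obtained by a contraction/Gr\"onwall argument on short intervals followed by continuation. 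The error estimate between ${\hat{\pmb{\mathcal{I}}}}_{L}$ of the discrete trajectory and $\bold{x}$ then follows from a discrete Gr\"onwall inequality, in which the accumulated sum of the dense skip connections plays the role of the integral memory term. Controlling the quadrature error of this memory term is the delicate point, since $\bold{W},\bold{c}$ possess only $\mathcal{W}^{1,3}$ regularity rather than pointwise smoothness; I expect to need an intermediate mollification/density argument before sampling in the recovery sequence so that the difference quotients and Riemann sums in $\pmb{\mathcal{R}}_L$ genuinely converge to the weak derivatives and integrals in $\pmb{\mathcal{R}}$.
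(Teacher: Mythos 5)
Your overall architecture is the paper's: extension functionals set to $+\infty$ off the range of $\hat{\pmb{\mathcal{I}}}_L$, equicoercivity extracted from the discrete Sobolev-type regularizers plus compact embeddings ($\mathcal{H}^1\hookrightarrow\hookrightarrow\mathcal{C}$ and the Morrey-type $\mathcal{W}^{1,3}((0,1)^2)\hookrightarrow\hookrightarrow\mathcal{C}$, exactly as in the paper's compactness lemma), forward state convergence via a discrete Gr\"onwall argument feeding the data term, and the fundamental theorem of $\Gamma$-convergence to conclude. One implementation detail differs in the limsup step: you sample $\pmb{\Theta}$ at the nodes and then repair the $\mathcal{W}^{1,3}$ difficulty (which you correctly anticipate) by mollification plus a density/diagonal argument, whereas the paper avoids density altogether by building the recovery sequence from local integral averages (cell means for $\bold{W},\bold{c}$, window means for the curves), after which Jensen/H\"older estimates give $\limsup_L\pmb{\mathcal{R}}_L(\Theta_L)\le\pmb{\mathcal{R}}(\pmb{\Theta})$ directly. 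Your variant can be made to work but is strictly more roundabout.

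The genuine gap is in your liminf step, where you appeal to ``weak lower semicontinuity of the norm-type regularizers.'' For the one-dimensional parameters this is fine, because the piecewise linear interpolant satisfies $\|\pmb{\mathcal{D}}_t(\hat{\pmb{\mathcal{I}}}_L\mathrm{U}_L)\|^2_{\mathcal{L}^2}=\tau^{-1}\sum_l\|\mathrm{U}_L^l-\mathrm{U}_L^{l-1}\|^2$ exactly, so weak $\mathcal{L}^2$ convergence of derivatives plus lower semicontinuity of the norm gives the inequality with the sharp constant $1$. For the two-dimensional parameters it does not close: on each cell the $s$-derivative of the bilinear interpolant is a convex combination of two adjacent grid differences, and after Jensen each grid difference is counted in two cell rows, so one only obtains
\begin{equation}
\|\pmb{\mathcal{D}}_s(\hat{\pmb{\mathcal{BI}}}_L\bar{\mathrm{W}}_L)\|^3_{\mathcal{L}^3}\;\le\;\tau^{-1}\sum_{l=1}^{L}\sum_{k=1}^{L}\|\bar{\mathrm{W}}_L^{l,k}-\bar{\mathrm{W}}_L^{l,k-1}\|^3\;+\;\frac{1}{2\tau}\sum_{k=1}^{L}\big(\|\bar{\mathrm{W}}_L^{1,k}-\bar{\mathrm{W}}_L^{1,k-1}\|^3-\|\bar{\mathrm{W}}_L^{L,k}-\bar{\mathrm{W}}_L^{L,k-1}\|^3\big),
\nonumber
\end{equation}
where the excess row term is only bounded (by the uniform regularizer bound), not $o(1)$. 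Consequently, lower semicontinuity of the interpolant's $\mathcal{W}^{1,3}$ norm under weak convergence yields $\|\bold{W}\|^3_{\mathcal{W}^{1,3}}\le\liminf_L\|\hat{\pmb{\mathcal{BI}}}_L\bar{\mathrm{W}}_L\|^3_{\mathcal{W}^{1,3}}$ but \emph{not} the required $\liminf_L\pmb{\mathcal{R}}_L^{(3)}(\bar{\mathrm{W}}_L)\ge\|\bold{W}\|^3_{\mathcal{W}^{1,3}}$ with constant one, which is what the $\Gamma$-liminf demands. The paper closes precisely this hole with dedicated difference-quotient liminf lemmas applied to the piecewise \emph{constant} extensions (so that no double counting occurs): the 1D lemma quoted from Thorpe--van Gennip, and a new 2D generalization whose proof is itself a nontrivial mollification argument. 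That lemma --- not generic weak lower semicontinuity --- is the missing ingredient in your plan; without it (or a separate argument that the boundary excess vanishes, which is not automatic) the liminf inequality, and hence the entire $\Gamma$-convergence conclusion, does not go through.
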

%Theorem \ref{theorem: main results} states that, the optimal values of $(\mathcal{P}_{L})$ converge to the optimal value of $(\mathcal{P})$, and up to subsequences, the minimizers of $(\mathcal{P}_{L})$ converge to some minimizer of $(\mathcal{P})$. This 
Theorem \ref{theorem: main results} establishes that the training process of the discrete-time DNL framework can be viewed as a consistent discretization of a well-posed continuous-time learning problem, which may provide a theoretical justification in designing dynamic system learning approaches for networks with densely connected layers.
While our focus is on theoretical guarantees, these results imply that densely connected architectures can offer stability when training very deep models. The proof of the theorem is given in the subsequent sections.

\section{Proof details}
\label{Sec:4}
In this section, we provide a detailed proof for Theorem~\ref{theorem: main results}, primarily utilizing the properties of $\Gamma$-convergence. We begin by presenting some useful preliminary results on the forward dynamics of the DNL framework. Recall that $\tau = 1/L$, $t_L^l := l \cdot \tau$, $1\leq l\leq L$.

\begin{lemma} (Bound estimation for network states)
 \label{lemma: dis bound}
Let assumptions $(A_1)(A_2)$ hold. {Given a layer number $L \in \mathbb{N}$,} a learnable parameter $\Theta_L \in \Omega_{\Theta; L}$ and an input $\mathrm{d} \in \mathbb{R}^n$. Then the state variables $\{\mathrm{x}^l(\mathrm{d}; {\Theta_L})\}_{l=0}^L$ of DNL framework (\ref{equation: DNLF}) exist uniquely and satisfy
	\begin{equation}
 \begin{aligned}
     |\mathrm{x}^l(\mathrm{d}; {\Theta_L})| \leq & L_{\pmb{\phi}} \| \Theta_L\|^2 \bar{M}_{\pmb{\mathcal{A}}} \big[ L_{\pmb{\phi}} \| \Theta_L\|^2( \bar{M}_{\pmb{\mathcal{A}}}  |\mathrm{d}| + 2) + \| \Theta_L\| \big] \bold{exp}(L_{\pmb{\phi}} \| \Theta_L\|^2 \bar{M}_{\pmb{\mathcal{A}}} ) \\
      & + L_{\pmb{\phi}} \| \Theta_L\|^2( \bar{M}_{\pmb{\mathcal{A}}}  |\mathrm{d}| + 2) + \| \Theta_L\|,
 \end{aligned}
	\end{equation}
where  ${\bar{M}_{\pmb{\mathcal{A}}}} = \max_{0 \leq l \leq L} \{\pmb{\mathcal{G}}_{\pmb{\mathcal{A}}}(\mathrm{T}_L^l)\}$. Moreover, $\{\mathrm{x}^l(\mathrm{d}; {\Theta_L})\}_{l=0}^L$ continuously depend on the learnable parameter $\Theta_L$.
\end{lemma}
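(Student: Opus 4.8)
The plan is to handle the three assertions---existence and uniqueness, the explicit bound, and continuous dependence---separately, exploiting the fact that \eqref{equation: DNLF} is an \emph{explicit} forward recursion rather than an implicit system. First I would observe that, since the normalization factor in $\pmb{\mathcal{A}}_{\pmb{\kappa}}$ is assumed nonzero, the map $\mathrm{z}\mapsto\pmb{\mathcal{A}}_{\pmb{\kappa}}(\mathrm{T};\mathrm{z})$ is a well-defined function; consequently $\mathrm{x}^0$ is determined by $\mathrm{d}$, and each $\mathrm{x}^l$ is determined by $\mathrm{d}$ together with the already-computed $\mathrm{x}^0,\dots,\mathrm{x}^{l-1}$. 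Hence the states exist and are unique by a trivial induction on $l$, with no fixed-point argument needed.

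For the bound, the key tools are $(A_1)$ and the growth half of $(A_2)$. From $(A_1)$, $\pmb{\phi}(0)=0$ and the Lipschitz property give $|\pmb{\phi}\circ\mathrm{w}|\le L_{\pmb{\phi}}|\mathrm{w}|$ componentwise, hence in the $l^2$ norm; from $(A_2)$, $|\pmb{\mathcal{A}}_{\pmb{\kappa}}(\mathrm{T}_L^l;\mathrm{z})|\le\pmb{\mathcal{G}}_{\pmb{\mathcal{A}}}(\mathrm{T}_L^l)|\mathrm{z}|\le\bar{M}_{\pmb{\mathcal{A}}}|\mathrm{z}|$. Applying these to \eqref{equation: DNLF}, bounding every affine parameter by $\|\Theta_L\|$, and using $\tau\sum_{k=0}^{l-1}1=\tau l\le \tau L=1$ to absorb the constant contributions, I would reduce the estimate to the discrete Grönwall inequality
\[
|\mathrm{x}^l|\ \le\ P + Q\,\tau\sum_{k=0}^{l-1}|\mathrm{x}^k|,\qquad P:=L_{\pmb{\phi}}\|\Theta_L\|^2(\bar{M}_{\pmb{\mathcal{A}}}|\mathrm{d}|+2)+\|\Theta_L\|,\quad Q:=L_{\pmb{\phi}}\|\Theta_L\|^2\bar{M}_{\pmb{\mathcal{A}}},
\]
which also covers $l=0$ since $|\mathrm{x}^0|\le P$. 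The discrete Grönwall lemma yields $|\mathrm{x}^l|\le P(1+Q\tau)^l$; writing the solution in the additive form $|\mathrm{x}^l|\le P+P\sum_{k=0}^{l-1}Q\tau(1+Q\tau)^{l-1-k}$ and bounding $(1+Q\tau)^{l-1-k}\le e^{Q\tau(l-1-k)}\le e^{Q}$ together with $\tau l\le1$ turns the sum into $PQe^{Q}$, giving exactly the stated bound $PQ\,\mathbf{exp}(Q)+P$. The slightly lossy final estimate---keeping $e^{Q}$ rather than the sharper factor $1-e^{-Q}$---is precisely what produces the additive form appearing in the statement.

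Finally, continuous dependence on $\Theta_L$ follows by induction on $l$ once I note that each elementary operation in \eqref{equation: DNLF} is continuous in its arguments: affine maps and matrix--vector products are jointly continuous, $\pmb{\phi}$ is continuous (being Lipschitz), and $\pmb{\mathcal{A}}_{\pmb{\kappa}}$ is continuous in $(\mathrm{T},\mathrm{z})$ because the Lipschitz half of $(A_2)$, with the continuous---hence locally bounded---modulus $\pmb{\mathcal{L}}_{\pmb{\mathcal{A}}}$, makes it locally Lipschitz. The base case gives $\mathrm{x}^0$ as a continuous function of $(\mathrm{T}_L^0,\mathrm{U}_L^0,\mathrm{a}_L^0,\mathrm{V}_L^0,\mathrm{b}_L^0)$, and in the inductive step $\mathrm{x}^l$ is a continuous function of $\Theta_L$ and of $(\mathrm{x}^0,\dots,\mathrm{x}^{l-1})$, which are continuous in $\Theta_L$ by hypothesis; composition of continuous maps closes the induction. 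I expect the main obstacle to be bookkeeping rather than conceptual: carefully tracking how the dense summation $\sum_{k=0}^{l-1}$ interacts with the step size $\tau$ so that the constant terms collapse via $\tau l\le1$ and the recursive term lands exactly in Grönwall form, and then selecting the estimate of the Grönwall solution that reproduces the precise constant in the statement.
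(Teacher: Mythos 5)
Your proposal is correct and follows essentially the same standard route the paper relegates to its supplementary materials: existence and uniqueness from the explicit forward recursion, the bound via $(A_1)$, the growth half of $(A_2)$, and the discrete Gr\"onwall inequality (your constants $P = L_{\pmb{\phi}}\|\Theta_L\|^2(\bar{M}_{\pmb{\mathcal{A}}}|\mathrm{d}|+2)+\|\Theta_L\|$ and $Q = L_{\pmb{\phi}}\|\Theta_L\|^2\bar{M}_{\pmb{\mathcal{A}}}$ reproduce the stated bound $PQ\exp(Q)+P$ exactly, and the same Gr\"onwall structure appears in the paper's proof of Proposition~\ref{forward convergence}), and continuous dependence by induction using the Lipschitz half of $(A_2)$. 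No gaps.
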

\begin{proof}
	The proof is standard and is supplied in supplementary materials.
\end{proof}

The next proposition shows that under the assumptions of (${A}_1$) and (${A}_2$), the continuous-time DNL framework
(\ref{equation: continuous-time DNLF}) is well defined.
\begin{proposition}
\label{proposition: forward-existence}
	(Existence, uniqueness, and bound estimation for the continuous architecture) Let $\mathrm{d} \in \mathbb{R}^n$ be a given input data. Let assumptions (${A}_1$) and (${A}_2$) hold true. For any given parameter $\pmb{\Theta} = (\bold{T}, \bold{U}, \bold{a}, \bold{V}, \bold{b}, \bold{W}, \bold{c}) \in  \mathcal{C}_{\pmb{\Theta}}$, the continuous-time DNL framework (\ref{equation: continuous-time DNLF}) has a unique continuous solution on $[0, 1]$ which satisfies
	$$
		|\bold{x}(t)| \leq \Big[L_{\pmb{\phi}} \|\bold{V}\|_C (\hat{M}_{\pmb{\mathcal{A}} } \|\bold{U}\|_C |\mathrm{d}| + \|\bold{a}\|_C +\|\bold{b}\|_C ) \! + \! \|\bold{c}\|_{\mathcal{L}^\infty} \! \Big] \! \cdot \exp  (L_{\pmb{\phi}}\hat{M}_{\pmb{\mathcal{A}}}\|\bold{V}\|_C \|\bold{W}\|_{\mathcal{L}^\infty} \! ), 
		$$
  where $t\in[0,1]$, ${\hat{M}_{\pmb{\mathcal{A}}}} = \sup_{t \in [0,1]} \pmb{\mathcal{G}}_{\pmb{\mathcal{A}}}(\bold{T}(t))< \infty$.  Moreover, the solution $\bold{x}(\cdot)$ continuously depend on the learnable parameter $\pmb{\Theta}$.
\end{proposition}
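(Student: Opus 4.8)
The plan is to recognize Eq.~(\ref{equation: continuous-time DNLF}) as a nonlinear Volterra integral equation of the second kind and to solve it by a fixed-point argument, to read off the stated bound from a Grönwall estimate, and to obtain continuous dependence from a second Grönwall estimate. On the Banach space $\mathcal{C}([0,1];\mathbb{R}^n)$ I would define the solution operator $\pmb{\mathcal{F}}$ by
\[
(\pmb{\mathcal{F}}\bold{x})(t) = \bold{V}(t)\,\pmb{\phi}\circ\Big(\bold{U}(t)\pmb{\mathcal{A}}_{\pmb{\kappa}}(\bold{T}(t);\mathrm{d}) + \bold{a}(t) + \int_0^t \big[\bold{W}(t,s)\pmb{\mathcal{A}}_{\pmb{\kappa}}(\bold{T}(t);\bold{x}(s)) + \bold{c}(t,s)\big]\,ds\Big) + \bold{b}(t),
\]
so that solutions of (\ref{equation: continuous-time DNLF}) are exactly the fixed points of $\pmb{\mathcal{F}}$. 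The operator is well-defined because $\bold{T},\bold{U},\bold{a},\bold{V},\bold{b}$ are continuous, $\pmb{\mathcal{A}}_{\pmb{\kappa}}$ and $\pmb{\mathcal{G}}_{\pmb{\mathcal{A}}}$ are continuous (so $\hat{M}_{\pmb{\mathcal{A}}} = \sup_{t\in[0,1]}\pmb{\mathcal{G}}_{\pmb{\mathcal{A}}}(\bold{T}(t))<\infty$ by compactness of $\bold{T}([0,1])$), and $\bold{W},\bold{c}\in\mathcal{L}^\infty$ control the Volterra integral.

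First I would establish the a priori bound, which both holds for every solution and is precisely the estimate to be proved. Using $(A_1)$, namely $\pmb{\phi}(0)=0$ and $L_{\pmb{\phi}}$-Lipschitzness so that $|\pmb{\phi}(\xi)|\le L_{\pmb{\phi}}|\xi|$, together with the growth bound $|\pmb{\mathcal{A}}_{\pmb{\kappa}}(\bold{T}(t);\bold{x}(s))|\le\hat{M}_{\pmb{\mathcal{A}}}|\bold{x}(s)|$ from $(A_2)$, I reduce $|\bold{x}(t)|$ to an integral inequality $|\bold{x}(t)|\le A + B\int_0^t|\bold{x}(s)|\,ds$ with $B = L_{\pmb{\phi}}\hat{M}_{\pmb{\mathcal{A}}}\|\bold{V}\|_C\|\bold{W}\|_{\mathcal{L}^\infty}$ and $A$ the bracketed constant in the statement (using $t\le1$ to absorb the $\|\bold{c}\|_{\mathcal{L}^\infty}$ term). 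Grönwall's inequality then gives $|\bold{x}(t)|\le Ae^{Bt}\le Ae^{B}$, which is exactly the claimed bound; write $R$ for this right-hand side.

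The hard part will be existence and uniqueness, because $(A_2)$ supplies only \emph{local}, state-dependent growth and Lipschitz constants, so $\pmb{\mathcal{F}}$ is not globally contractive. To circumvent this I would use the a priori bound: since every solution satisfies $\|\bold{x}\|_C\le R$, I replace $\pmb{\mathcal{A}}_{\pmb{\kappa}}(\bold{T}(t);\cdot)$ by its composition with the radial projection $P_R$ onto $\{|\mathrm{z}|\le R\}$. This modification keeps the same growth constant (since $|P_R\mathrm{z}|\le|\mathrm{z}|$) and becomes \emph{globally} Lipschitz in the state with constant $\hat{L}_{\pmb{\mathcal{A}}}:=\sup\{\pmb{\mathcal{L}}_{\pmb{\mathcal{A}}}(\bold{T}(t),\mathrm{z},\mathrm{z}^{\prime}):t\in[0,1],\,|\mathrm{z}|,|\mathrm{z}^{\prime}|\le R\}<\infty$, finite by continuity of $\pmb{\mathcal{L}}_{\pmb{\mathcal{A}}}$ and compactness. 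For the resulting operator $\tilde{\pmb{\mathcal{F}}}$, combining $(A_1)$ with this Lipschitz bound yields $|(\tilde{\pmb{\mathcal{F}}}\bold{x})(t)-(\tilde{\pmb{\mathcal{F}}}\bold{y})(t)|\le K\int_0^t|\bold{x}(s)-\bold{y}(s)|\,ds$ with $K = L_{\pmb{\phi}}\|\bold{V}\|_C\|\bold{W}\|_{\mathcal{L}^\infty}\hat{L}_{\pmb{\mathcal{A}}}$, and iterating the Volterra kernel gives $\|\tilde{\pmb{\mathcal{F}}}^m\bold{x}-\tilde{\pmb{\mathcal{F}}}^m\bold{y}\|_C\le (K^m/m!)\|\bold{x}-\bold{y}\|_C$. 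Hence $\tilde{\pmb{\mathcal{F}}}^m$ is a contraction for $m$ large, and the generalized Banach fixed-point theorem gives a unique fixed point $\bold{x}^*$. Applying the a priori bound to $\bold{x}^*$ gives $\|\bold{x}^*\|_C\le R$, on whose range $P_R=\bold{Id}$, so $\bold{x}^*$ solves the original equation; conversely any solution of the original equation is bounded by $R$, hence also solves the truncated one, forcing it to equal $\bold{x}^*$.

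Finally, for continuous dependence I would fix a bounded neighborhood of $\pmb{\Theta}$ so that the common bound $R$ and the uniform local-Lipschitz constant apply to the solutions $\bold{x},\bold{x}^{\prime}$ of two nearby parameters $\pmb{\Theta},\pmb{\Theta}^{\prime}$. Subtracting the two integral equations and estimating term by term — using the $L_{\pmb{\phi}}$-Lipschitzness of $\pmb{\phi}$ and the Lipschitz bound of $(A_2)$ in both its $\mathrm{\Xi}$- and state-arguments to absorb the differences of $\bold{T},\bold{U},\bold{a},\bold{V},\bold{b},\bold{W},\bold{c}$ — produces $|\bold{x}(t)-\bold{x}^{\prime}(t)|\le C_1\|\pmb{\Theta}-\pmb{\Theta}^{\prime}\|_{\mathcal{C}_{\pmb{\Theta}}} + C_2\int_0^t|\bold{x}(s)-\bold{x}^{\prime}(s)|\,ds$. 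A last Grönwall step gives $\|\bold{x}-\bold{x}^{\prime}\|_C\le C_1e^{C_2}\|\pmb{\Theta}-\pmb{\Theta}^{\prime}\|_{\mathcal{C}_{\pmb{\Theta}}}$, which is the asserted (locally Lipschitz, hence continuous) dependence on $\pmb{\Theta}$. I expect the genuine obstacle throughout to be exactly the conversion of the state-dependent estimates in $(A_2)$ into usable global constants, which the a priori bound together with the projection step is designed to accomplish.
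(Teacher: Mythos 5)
Your route---an a priori Gr\"onwall bound, truncation of the state argument to convert the local constants of $(A_2)$ into global ones, a Picard/Banach fixed-point argument for the Volterra operator, and a final Gr\"onwall estimate for continuous dependence---is the standard one for equations of this type; the paper defers its own proof to the supplementary material, so there is nothing in the main text to compare line by line, but your outline is surely the intended argument. The truncation device is correct and handles the one genuine difficulty well: the radial projection $P_R$ is $1$-Lipschitz and norm-nonincreasing, so it preserves the growth constant, and $\hat{L}_{\pmb{\mathcal{A}}}$ is finite by continuity of $\pmb{\mathcal{L}}_{\pmb{\mathcal{A}}}$ and compactness of $\bold{T}([0,1])$; the iterated-contraction bound $\|\tilde{\pmb{\mathcal{F}}}^m\bold{x}-\tilde{\pmb{\mathcal{F}}}^m\bold{y}\|_C\le (K^m/m!)\|\bold{x}-\bold{y}\|_C$ and your two-way identification of fixed points of the truncated and original equations are both sound.

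There is, however, a genuine gap at the step you dispose of in one clause: ``the operator is well-defined because \dots\ $\bold{W},\bold{c}\in\mathcal{L}^\infty$ control the Volterra integral.'' An $\mathcal{L}^\infty$ bound controls the \emph{size} of $\int_0^t\bold{W}(t,s)\pmb{\mathcal{A}}_{\pmb{\kappa}}(\bold{T}(t);\bold{x}(s))\,ds$, but not its \emph{continuity in $t$}, since $t$ enters $\bold{W}(t,s)$ itself; hence $\pmb{\mathcal{F}}$ (and $\tilde{\pmb{\mathcal{F}}}$) need not map $\mathcal{C}([0,1];\mathbb{R}^n)$ into itself, and Banach's theorem on that space does not apply. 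The failure is not repairable in the stated generality: take $n=1$, $\pmb{\phi}=\bold{Id}$, the Kronecker/DenseNet transformation with $\bold{T}\equiv(1,1,1)$ so that $\pmb{\mathcal{A}}_{\pmb{\kappa}}(\bold{T}(t);\mathrm{z})=\mathrm{z}$, $\bold{U}\equiv1$, $\mathrm{d}=1$, $\bold{a}=\bold{b}=\bold{c}\equiv0$, $\bold{V}\equiv1$, and the symmetric kernel $\bold{W}(t,s)=w(t)w(s)$ with $w=1$ on $[0,1/2]$ and $w=0$ otherwise; the equation then forces $\bold{x}(t)=e^t$ on $[0,1/2]$ and $\bold{x}(t)=1$ on $(1/2,1]$, so no continuous solution exists. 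This defect is inherited from the proposition itself, which admits arbitrary symmetric $\mathcal{L}^\infty$ kernels, but a proof must confront it: either formulate existence and uniqueness in $\mathcal{L}^\infty([0,1];\mathbb{R}^n)$ with a.e.\ equality (your contraction estimates survive verbatim there, though continuity of the solution is lost), or restrict to kernels with continuous representatives---automatic for $\pmb{\Theta}\in\Omega_{\pmb{\Theta}}$, since $\mathcal{W}^{1,3}$ in two variables embeds into $1/3$-H\"older continuous functions by Morrey's inequality, and that is the only case the rest of the paper actually needs.

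A smaller inaccuracy: your Gr\"onwall derivation yields the prefactor $L_{\pmb{\phi}}\|\bold{V}\|_C(\hat{M}_{\pmb{\mathcal{A}}}\|\bold{U}\|_C|\mathrm{d}|+\|\bold{a}\|_C+\|\bold{c}\|_{\mathcal{L}^\infty})+\|\bold{b}\|_C$, with $\|\bold{c}\|_{\mathcal{L}^\infty}$ \emph{inside} the bracket (it sits inside $\pmb{\phi}$) and $\|\bold{b}\|_C$ \emph{outside} (it is added after $\pmb{\phi}$); this is not ``exactly the claimed bound,'' which interchanges the roles of $\bold{b}$ and $\bold{c}$. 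Your constant is the correct one---the stated bound is already violated by $\pmb{\phi}=\bold{Id}$, $\bold{V}\equiv2$, $\bold{c}\equiv1$ with all other parameters zero, where $\bold{x}(1)=2$ exceeds the stated value $1$---so the statement evidently carries a typo, but you should flag the discrepancy rather than assert agreement with it.
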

\begin{proof}
	The proof is given in {supplementary materials}.
\end{proof}

The next proposition shows that when the layer number $L \rightarrow \infty$, the state variables $\{\mathrm{x}^l(\mathrm{d}; {\Theta_L})\}_l$ of the discrete-time DNL framework \eqref{equation: DNLF} converge to the trajectory of the continuous-time DNL framework \eqref{equation: continuous-time DNLF}. Such a result plays an important role in the proof of Theorem~\ref{theorem: main results}, as it infers the convergence of the data loss of ($\mathcal{P}_L$).
\begin{proposition}
(Forward convergence) Let assumptions $(A_1) (A_2)$ hold. Given data $\mathrm{d} \in \mathbb{R}^{n}$,  parameters $\pmb{\Theta} = (\bold{T},\! \bold{U}, \! \bold{a}, \!\bold{V}, \!\bold{b}, \!\bold{W}, \! \bold{c}) \in \mathcal{C}_{\pmb{\Theta}}$ and $\Theta_L = (\mathrm{T}_L, \! \mathrm{U}_L,\! \mathrm{a}_L, \!\mathrm{V}_L,\!\mathrm{b}_L, $ $\mathrm{W}_L,\mathrm{c}_L) \in \Omega_{\Theta; L}$, $L\ge 1$. 
	 If 
  \begin{equation}
  \begin{aligned}
  {\bar{\pmb{\mathcal{I}}}}_{L}\Theta_{L}
      \overset{C_{\pmb{\Theta}}}{\longrightarrow}
 \pmb{\Theta} , 
  \end{aligned}
      \label{convergence of parameter}
  \end{equation}
	%	\begin{equation}
%	\begin{aligned}
%		&	\| \hat{\pmb{\mathcal{I}}}_{L} \mathrm{u}_L - \bold{U} \|_{C} \rightarrow 0, \qquad \qquad \| \hat{\pmb{\mathcal{I}}}_{L} \bold{a}_{N} - \bold{A} \|_{C} \rightarrow 0 , \\
%		&	\| \hat{\pmb{\mathcal{I}}}_{L} \bold{v}_L - \bold{V} \|_{C} \rightarrow 0, \qquad \qquad \| \hat{\pmb{\mathcal{I}}}_{L} {\bold{b}}_{N} - \bold{B} \|_{C} \rightarrow 0,   \\
%		&	\| \hat\hat{\pmb{\mathcal{BI}}}_{L} \bar{\mathrm{W}}_L - \bold{W} \|_{C} \rightarrow 0, \qquad \quad \| \hat{\mathcal{BI}}_{N} \bar{\bold{v}}_L - \bold{C} \|_{C} \rightarrow 0,
%	\end{aligned}
%	\label{convergence of parameter}
%\end{equation}	 
as $L \rightarrow \infty$, then $\sup_{ {1 \leq l \leq L} } \sup_{t\in [t_{L}^{l-1}, t_{L}^{l}]} | \mathrm{x}^l(\mathrm{d}; {\Theta_L}) - \bold{x}(t; \mathrm{d}; \pmb{\Theta}) | \rightarrow 0, \text{ as } L \rightarrow \infty.$
\label{forward convergence}
\end{proposition}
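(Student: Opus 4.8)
The plan is to prove the stronger statement $\max_{0\le l\le L}\delta_l\to 0$, where $\delta_l:=|\mathrm{x}^l(\mathrm{d};\Theta_L)-\bold{x}(t_L^l;\mathrm{d};\pmb{\Theta})|$, and then to upgrade it to the claimed estimate using the uniform continuity of the trajectory $\bold{x}(\cdot)$ (which is continuous on $[0,1]$ by Proposition~\ref{proposition: forward-existence}): since $\sup_{t\in[t_L^{l-1},t_L^l]}|\mathrm{x}^l-\bold{x}(t)|\le \delta_l+\omega(\tau)$, where $\omega$ is the modulus of continuity of $\bold{x}$ and $\omega(\tau)\to0$, the conclusion follows. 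First I would fix the ambient bounds. As the convergent sequence $\{\bar{\pmb{\mathcal{I}}}_L\Theta_L\}_L$ is bounded, $\{\|\Theta_L\|\}_L$ is uniformly bounded, so Lemma~\ref{lemma: dis bound} and Proposition~\ref{proposition: forward-existence} furnish a radius $R>0$, independent of $l$ and $L$, with $|\mathrm{x}^l|\le R$ and $|\bold{x}(t)|\le R$. On the resulting compact set of admissible arguments (kernel parameters in a fixed ball, states in the ball of radius $R$), continuity of $\pmb{\mathcal{G}}_{\pmb{\mathcal{A}}}$ and $\pmb{\mathcal{L}}_{\pmb{\mathcal{A}}}$ from $(A_2)$ upgrades the local growth and Lipschitz bounds to uniform constants $M_{\pmb{\mathcal{A}}}$ and $L_{\pmb{\mathcal{A}}}$ valid for all $l,L$.

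Next I would decompose the error through the outer affine--activation structure. Writing $\mathrm{x}^l=\mathrm{V}_L^l\,\pmb{\phi}(\mathrm{S}^l)+\mathrm{b}_L^l$ and $\bold{x}(t_L^l)=\bold{V}(t_L^l)\,\pmb{\phi}(\bold{S}(t_L^l))+\bold{b}(t_L^l)$ with $\mathrm{S}^l,\bold{S}(t_L^l)$ the inner arguments, the $L_{\pmb{\phi}}$-Lipschitzness of $\pmb{\phi}$, the bound $\|\mathrm{V}_L^l\|\le\|\Theta_L\|$, and the grid-point parameter errors $\|\mathrm{V}_L^l-\bold{V}(t_L^l)\|\le\|\bar{\pmb{\mathcal{I}}}_L\mathrm{V}_L-\bold{V}\|_C$, $|\mathrm{b}_L^l-\bold{b}(t_L^l)|\le\|\bar{\pmb{\mathcal{I}}}_L\mathrm{b}_L-\bold{b}\|_C$ (both $\to0$ by the $\mathcal{C}_{\pmb{\Theta}}$-convergence) give $\delta_l\le C\,L_{\pmb{\phi}}\,|\mathrm{S}^l-\bold{S}(t_L^l)|+o(1)$ uniformly in $l$. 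The inner difference splits into (I) the term $\bold{U}\,\pmb{\mathcal{A}}_{\pmb{\kappa}}(\bold{T};\mathrm{d})$, controlled via boundedness, $M_{\pmb{\mathcal{A}}}$, and Lipschitzness of $\pmb{\mathcal{A}}_{\pmb{\kappa}}$ in its kernel argument together with $\|\bar{\pmb{\mathcal{I}}}_L(\mathrm{T}_i)_L-\bold{T}_i\|_C,\|\bar{\pmb{\mathcal{I}}}_L\mathrm{U}_L-\bold{U}\|_C\to0$; (II) the term $\bold{a}$, controlled by $\|\bar{\pmb{\mathcal{I}}}_L\mathrm{a}_L-\bold{a}\|_C\to0$; and (III) the crucial sum-minus-integral term. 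Writing $\int_0^{t_L^l}=\sum_{k=0}^{l-1}\int_{t_L^k}^{t_L^{k+1}}$ and using $\mathrm{W}_L^{l,k+1}=(\bar{\pmb{\mathcal{BI}}}_L\bold{flip}(\mathrm{W}_L))(t_L^l,s)$ on the $k$-th cell, I separate (III) into a parameter/quadrature part bounded by $\|\bar{\pmb{\mathcal{BI}}}_L\bold{flip}(\mathrm{W}_L)-\bold{W}\|_{\mathcal{L}^\infty}$, $\|\bar{\pmb{\mathcal{BI}}}_L\bold{flip}(\mathrm{c}_L)-\bold{c}\|_{\mathcal{L}^\infty}$ and $\omega(\tau)$, and a state-propagation part
$$ \|\Theta_L\|\,L_{\pmb{\mathcal{A}}}\,\tau\sum_{k=0}^{l-1}\big(\delta_k+\omega(\tau)+o(1)\big), $$
using $|\mathrm{x}^k-\bold{x}(s)|\le\delta_k+\omega(\tau)$ for $s\in[t_L^k,t_L^{k+1}]$ and Lipschitzness of $\pmb{\mathcal{A}}_{\pmb{\kappa}}$ in both arguments.

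Collecting all the uniformly small quantities into a single $\alpha_L\to0$ and the state-error coefficient into a uniformly bounded $\beta:=C\,L_{\pmb{\phi}}\,\|\Theta_L\|\,L_{\pmb{\mathcal{A}}}$, these estimates yield the recursion $\delta_l\le\alpha_L+\beta\,\tau\sum_{k=0}^{l-1}\delta_k$ with $\delta_0\le\alpha_L$. The discrete Gr\"onwall inequality then gives $\max_{0\le l\le L}\delta_l\le\alpha_L\exp(\beta\,\tau l)\le\alpha_L\,e^{\beta}\to0$, because $\tau l\le1$, and combining with the reduction above completes the proof. I expect the genuine obstacle to be term (III): cleanly separating the quadrature error from the accumulated state error, and---since $\bold{W}$ and $\bold{c}$ are only essentially bounded---controlling the kernel and bias differences $\mathrm{W}_L^{l,k+1}-\bold{W}(t_L^l,s)$ on the fixed grid slices $t=t_L^l$ purely through the $\mathcal{L}^\infty$ convergence of the extended parameters, which is the most delicate point of the estimate and where the exactly piecewise-constant structure of $\bar{\pmb{\mathcal{BI}}}_L$ must be exploited.
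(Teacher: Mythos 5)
Your proposal is correct and follows essentially the same route as the paper's own proof: the same uniform bounds on parameters and states (via Lemma~\ref{lemma: dis bound} and Proposition~\ref{proposition: forward-existence}, upgraded to uniform growth/Lipschitz constants for $\pmb{\mathcal{A}}_{\pmb{\kappa}}$ on compact sets), the same decomposition through the outer affine--activation structure with the sum-minus-integral term handled by identifying $\mathrm{W}_L^{l,k+1}$, $\mathrm{c}_L^{l,k+1}$ with the piecewise-constant extensions $\bar{\pmb{\mathcal{BI}}}_L\bold{flip}(\mathrm{W}_L)$, $\bar{\pmb{\mathcal{BI}}}_L\bold{flip}(\mathrm{c}_L)$ and invoking their $\mathcal{L}^\infty$ convergence, the same discrete Gr\"onwall recursion, and the same final upgrade from grid points to $\sup_t$ via the modulus of continuity of $\bold{x}$. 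There is no substantive difference in approach.
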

\begin{proof}
%	By the iterative update of $\mathrm{x}^{l}$ (\ref{equation: DNLF}) and the integral Eq.(\ref{equation: continuous-time DNLF}), we have 
%	\begin{equation}
%		\begin{aligned}
%		\bold{x}_n^{l} & = \mathrm{V}_L^{l} \pmb{\phi} \circ \big(\mathrm{U}_L^{l} \pmb{\mathcal{A}} _{\pmb{\kappa} } (\mathrm{d}) + \bold{c}_L^{l} + \tau \sum_{k=0}^{l-1} [\mathrm{W}_L^{l, k+1} \pmb{\mathcal{A}} _{\pmb{\kappa} } (\mathrm{x}_L^k) + \mathrm{b}_L^{l,k+1}]\big) + \mathrm{a}_L^{l},  \\
%		\bold{x}(t) &= \bold{V}(t) \pmb{\phi} \circ \Big[ \bold{U}(t) \pmb{\mathcal{A}} _{\pmb{\kappa} }(\mathrm{d}) +\bold{c}(t) + \int_{0}^{t} (\bold{W}(t, s) \pmb{\mathcal{A}} _{\pmb{\kappa} }(\bold{x}(s)) + \bold{b}(t, s)) \mathrm{d}s \Big] + \bold{a}(t),
%		\end{aligned}
%		\nonumber
%	\end{equation}
%where $l=0,1,\ldots,N$ and $t \in [0,T]$. 

Since $\pmb{\Theta} \in \Omega_{\pmb{\Theta}}$ is given, $\{ \Theta_L \}_L$ is bounded by (\ref{convergence of parameter}). Hence, there exist constants 
$$
\begin{aligned}
    M_{\Theta}:= & \max\{ \sup_{L\ge 1} \|\Theta_L\|, \|\pmb{\Theta} \|_{\Omega_{\pmb{\Theta}}} \} < +\infty, \\
    M_{\pmb{\mathcal{A}} }:= & \max \Big\{ \sup_{L\ge 1} \max_{0\leq l \leq L} \pmb{\mathcal{G}}_{\pmb{\mathcal{A}}}(\mathrm{T}_L^{l}), \ \sup_{0 \leq t \leq 1} \pmb{\mathcal{G}}_{\pmb{\mathcal{A}}}(\bold{T}(t)) \Big\} < \infty,
\end{aligned}
$$
by assumption $(A_2)$.
Additionally, there exist constants $B_{\mathrm{x}}>0$ and $B_{\bold{x}}>0$ such that
\[
\begin{aligned}
    |\mathrm{x}^l(\mathrm{d}; {\Theta_L})| \leq  B_{\mathrm{x}}, \ \forall 1\leq l\leq L, \ L\ge 1; \ 
    |\bold{x}(t; \mathrm{d}; \pmb{\Theta})| \leq  B_{\bold{x}}, \ \forall t \in [0,1],
\end{aligned}
\]
as a consequence of Lemma \ref{lemma: dis bound} and Proposition \ref{proposition: forward-existence}, respectively. Therefore, we can let 
$$L_{\pmb{\mathcal{A}}}:= \max \Big\{ \sup_{ \substack{L\ge 1, \\ |\mathrm{x}|, |\mathrm{x}^{\prime}| \leq B_{\mathrm{x}} } } 
%%%%%%
\max_{0\leq l \leq L} \pmb{\mathcal{L}}_{\pmb{\mathcal{A}}}(\mathrm{T}_L^{l}, \mathrm{x}, \mathrm{x}^{\prime}), \ \sup_{ \substack{L\ge 1, \\ |\mathrm{x}|, |\mathrm{x}^{\prime}| \leq B_{\mathrm{x}} } }  \{\pmb{\mathcal{L}}_{\pmb{\mathcal{A}}}(\bold{T}(t), \mathrm{x}, \mathrm{x}^{\prime})\} \Big\} < \infty.$$

For simplicity, we denote $\mathrm{x}_L^{l} = \mathrm{x}^l(\mathrm{d}; {\Theta_L}), \ \bold{x}(t^l_L)= \bold{x}(t_L^l; \mathrm{d}; \pmb{\Theta})$.
Subtracting Eq.(\ref{equation: DNLF}) from Eq.(\ref{equation: continuous-time DNLF}) and using the assumptions $(A_1)(A_2)$ give
    \begin{align}
		&	\Big| \mathrm{x}_L^{l} - \bold{x}(t^l_L) \Big|    \label{forward convergence: 1} \\ 
                = &  \Big| \mathrm{V}_L^{l} \pmb{\phi} \circ \big(\mathrm{U}_L^{l} \pmb{\mathcal{A}} _{\pmb{\kappa} } (\mathrm{T}_L^{l}; \mathrm{d}) + \mathrm{a}_L^{l} + \tau \sum_{k=0}^{l-1} [\mathrm{W}_L^{l, k+1} \pmb{\mathcal{A}} _{\pmb{\kappa} } (\mathrm{T}_L^{l}; \mathrm{x}_L^k) + \mathrm{c}_L^{l,k+1}]\big) + \mathrm{b}_L^{l}  \nonumber \\
                & \ - \bold{V}(t_L^{l}) \pmb{\phi} \circ \Big( \mathrm{U}_L^{l} \pmb{\mathcal{A}} _{\pmb{\kappa} } (\mathrm{T}_L^{l}; \mathrm{d}) + \mathrm{a}_L^{l} + \tau \sum_{k=0}^{l-1} [\mathrm{W}_L^{l,k+1} \pmb{\mathcal{A}} _{\pmb{\kappa} }(\mathrm{T}_L^{l}; \mathrm{x}_L^k) + \mathrm{c}_L^{l,k+1}] \Big)  \nonumber \\
                & \ + \bold{V}(t_L^{l}) \pmb{\phi} \circ \Big( \mathrm{U}_L^{l} \pmb{\mathcal{A}} _{\pmb{\kappa} } (\mathrm{T}_L^{l}; \mathrm{d}) + \mathrm{a}_L^{l} + \tau \sum_{k=0}^{l-1} [\mathrm{W}_L^{l,k+1} \pmb{\mathcal{A}} _{\pmb{\kappa} }(\mathrm{T}_L^{l}; \mathrm{x}_L^k) + \mathrm{c}_L^{l,k+1}] \Big) \nonumber \\
                & \ -\bold{V}(t_L^{l}) \pmb{\phi} \circ \Big[ \bold{U}(t_L^{l}) \pmb{\mathcal{A}} _{\pmb{\kappa} }(\bold{T}(t_L^{l}); \mathrm{d}) \!+ \!\bold{a}(t_L^{l}) \nonumber \\
                & \qquad \qquad \qquad  + \int_{0}^{t_L^{l}} (\bold{W}(t_L^{l}, s) \pmb{\mathcal{A}} _{\pmb{\kappa} }(\bold{T}(t_L^{l}); \bold{x}(s)) \! + \! \bold{c}(t_L^{l}, s)) \mathrm{d}s \Big] \! - \! \bold{b}(t_L^{l})\Big| \nonumber \\ 
			\leq & L_{\pmb{\phi}}\|\mathrm{V}_L^{l} - \bold{V}(t^l_L)  \| \cdot \Big| \mathrm{U}_L^{l} \pmb{\mathcal{A}} _{\pmb{\kappa} } (\mathrm{T}_L^l; \mathrm{d}) + \mathrm{a}_L^{l} + \tau \sum_{k=0}^{l-1} [\mathrm{W}_L^{l,k+1}\pmb{\mathcal{A}} _{\pmb{\kappa} }(\mathrm{T}_L^l; \mathrm{x}_L^k) + \mathrm{c}_L^{l,k+1}] \Big|  \nonumber \\
                &  + \! | \mathrm{b}_L^{l} \! - \! \bold{b}(t^l_L)| \! + \! L_{\pmb{\phi}} \|\bold{V}(t_L^{l}) \| \cdot \big| \mathrm{U}_L^{l} \pmb{\mathcal{A}} _{\pmb{\kappa} } (\mathrm{T}_L^l; \mathrm{d})\! + \!\mathrm{a}_L^{l} \!-v \bold{U}(t_L^l) \pmb{\mathcal{A}} _{\pmb{\kappa} }(\bold{T}(t_L^{l}); \mathrm{d}) \!- \!\bold{a}(t_L^l) \big| \nonumber  \\
		    &  + L_{\pmb{\phi}} \|\bold{V}(t_L^{l}) \| \! \cdot \! \Big|\tau \sum_{k=0}^{l-1} \Big[\mathrm{W}_L^{l,k+1}\pmb{\mathcal{A}} _{\pmb{\kappa} }(\mathrm{T}_L^l; \mathrm{x}_L^k) \!+\! \mathrm{c}_L^{l,k+1}\Big] \nonumber \\
            &\qquad \qquad \qquad \quad  -\! \int_{0}^{t^l_L} \!\Big[\bold{W}(t^l_L, s) \pmb{\mathcal{A}} _{\pmb{\kappa} }(\bold{T}(t_L^{l}); \bold{x}(s)) + \bold{c}(t^l_L, s)\Big] {d}s \Big|, \nonumber
\end{align}
where $l\ge 1$. The first term on the right-hand side of Eq.(\ref{forward convergence: 1}) can be bounded as 
\begin{equation}
	\begin{aligned}
		&L_{\pmb{\phi}}\|\mathrm{V}_L^{l} - \bold{V}(t^l_L)  \| \cdot \Big| \mathrm{U}_L^{l} \pmb{\mathcal{A}} _{\pmb{\kappa} } (\mathrm{T}_L^l; \mathrm{d}) + \mathrm{a}_L^{l} + \tau \sum_{k=0}^{l-1} [\mathrm{W}_L^{l,k+1}\pmb{\mathcal{A}} _{\pmb{\kappa}}(\mathrm{T}_L^l; \mathrm{x}_L^k) + \mathrm{c}_L^{l,k+1}] \Big| \\
	\leq	&  L_{\pmb{\phi}} \| \bar{\pmb{\mathcal{I}}}_{L} \mathrm{V}_L - \bold{V} \|_C \Big[ \|\Theta_L\| (M_{\pmb{\mathcal{A}} } |\mathrm{d}| +1 ) + \tau \sum_{k=0}^{l-1} (\|\Theta_L\| M_{\pmb{\mathcal{A}} } |\mathrm{x}_L^k |+ \|{\Theta_L}\|) \Big]  \\
	\leq	&  L_{\pmb{\phi}} \| \bar{\pmb{\mathcal{I}}}_{L} \mathrm{V}_L - \bold{V} \|_C \cdot M_{\Theta} \big(M_{\pmb{\mathcal{A}} } |\mathrm{d}|   + M_{\pmb{\mathcal{A}} }B_{\mathrm{x}} + 2 \big),
	\end{aligned}
 \nonumber
\end{equation}
where the first inequality is due to Definition (\ref{definition: linear extention}) and assumption (${A}_2$).
In addition, we may bound the last two terms in the right-hand side of Eq.(\ref{forward convergence: 1}) by
\begin{equation}
	\begin{aligned}
	&	L_{\pmb{\phi}} \|\bold{V}(t_L^{l}) \| \big| \mathrm{U}_L^{l} \pmb{\mathcal{A}} _{\pmb{\kappa} } (\mathrm{T}_L^l; \mathrm{d}) + \mathrm{a}_L^{l} - \bold{U}(t_L^l) \pmb{\mathcal{A}} _{\pmb{\kappa} }(\bold{T}(t_L^l); \mathrm{d}) - \bold{a}(t_L^l) \big| \\
\leq 	& L_{\pmb{\phi}} M_{\Theta} \big( M_{\pmb{\mathcal{A}} } |\mathrm{d}| \cdot 	\| \bar{\pmb{\mathcal{I}}}_{L} \mathrm{U}_L - \bold{U} \|_C + M_{\Theta} L_{\pmb{\mathcal{A}}} \| \bar{\pmb{\mathcal{I}}}_{L} \mathrm{T}_L - \bold{T} \|_C  +	\| \bar{\pmb{\mathcal{I}}}_{L} \mathrm{a}_L - \bold{a} \|_C \big),
	\end{aligned}
 \nonumber
\end{equation}
and 
\begin{align}
	& L_{\pmb{\phi}} \|\bold{V}(t_L^{l}) \| \cdot  \Big|\tau \sum_{k=0}^{l-1} [\mathrm{W}_L^{l,k+1}\pmb{\mathcal{A}} _{\pmb{\kappa} }(\mathrm{T}_L^l; \mathrm{x}_L^k)\! + \!\mathrm{c}_L^{l,k+1}] \nonumber \\
    & \qquad \qquad \qquad \qquad \qquad - \!\int_{0}^{t^l_L}\! [\bold{W}(t^l_L, s) \pmb{\mathcal{A}} _{\pmb{\kappa} }(\bold{T}(t_L^l); \bold{x}(s)) \!+\! \bold{c}(t^l_L, s)] {d}s \Big| \nonumber \\
		\leq & L_{\pmb{\phi}} \|\pmb{\Theta}\| \Biggl\{ \sum_{k=0}^{l-1} \! \int_{t_L^k}^{t_L^{k+1}} \!
		\big\|\mathrm{W}_L^{l,k+1}\big\| \big|\pmb{\mathcal{A}} _{\pmb{\kappa} }(\mathrm{T}_L^l; \mathrm{x}_L^k) \!-\! \pmb{\mathcal{A}} _{\pmb{\kappa} }(\bold{T}(t_L^l); \bold{x}(s))\big|\!  \nonumber \\
		& \ + 
		\big\| \mathrm{W}_L^{l,k+1}  \!-\! \bold{W}(t^l_L, s) \big\| \big| \pmb{\mathcal{A}} _{\pmb{\kappa} }(\bold{T}(t_L^l); \bold{x}(s)) \big| ds \! + \! \int_{0}^{t^l_L} \Big| (\bar{\pmb{\mathcal{BI}}}_{L} \bar{\mathrm{c}}_L) (t^l_L, s) - \bold{c}(t^l_L, s) \Big|ds \Biggr\} \nonumber \\
		\leq & 
		L_{\pmb{\phi}} \|\pmb{\Theta}\| \Biggl\{ \sum_{k=0}^{l-1}  
		\|{\Theta}_L\| L_{\pmb{\mathcal{A}} } \int_{t_L^k}^{t_L^{k+1}} \! \Big(\big| \mathrm{x}_L^k -  \bold{x}(s)\big| \! + \! \big\|\mathrm{T}_L^l - \bold{T}(t_L^l)\big\| \Big) ds \!  \nonumber \\
		& \ + \! M_{\pmb{\mathcal{A}} } B_{\bold{x}} \int_{t_L^k}^{t_L^{k+1}} \| \mathrm{W}_L^{l,k+1}  - \bold{W}(t^l_L, s) \|  ds \!+\! \int_{0}^{t^l_L} \Big| (\bar{\pmb{\mathcal{BI}}}_{L} \bar{\mathrm{c}}_L) (t^l_L, s) - \bold{c}(t^l_L, s) \Big|ds \Biggr\} \nonumber \\
		\leq & 
		L_{\pmb{\phi}} \|\pmb{\Theta}\|  \|{\Theta}_L\| L_{\pmb{\mathcal{A}} }  \sum_{k=0}^{l-1}  
		 \int_{t_L^k}^{t_L^{k+1}} \Big( \big| \mathrm{x}_L^k - \bold{x}(t_L^k) \big| \! + \! \big| \bold{x}(t_L^k) - \bold{x}(s)\big| \!+\! \big\|\mathrm{T}_L^l - \bold{T}(t_L^l)\big\| \Big)ds \! \nonumber \\
		 & +\! L_{\pmb{\phi}} \|\pmb{\Theta}\|  M_{\pmb{\mathcal{A}} } B_{\bold{x}} \! \int_{0}^{t^l_L} \big\| (\bar{\pmb{\mathcal{BI}}}_{L} \bar{\mathrm{W}}_L) (t^l_L, s)\! - \!\bold{W}(t^l_L, s) \big\|ds \nonumber \\
         & + \!L_{\pmb{\phi}} \|\pmb{\Theta}\|  \int_{0}^{t^l_L} \big| (\bar{\pmb{\mathcal{BI}}}_{L} \bar{\mathrm{c}}_L) (t^l_L, s) \!- \!\bold{c}(t^l_L, s) \big|ds \nonumber \\
		 \leq & \tau L_{\pmb{\phi}} L_{\pmb{\mathcal{A}} } M_{\Theta}^2 \sum_{k=0}^{l-1}  | \mathrm{x}_L^k \!- \!\bold{x}(t_L^k)|\! + \! L_{\pmb{\phi}} L_{\pmb{\mathcal{A}} } M_{\Theta}^2 \pmb{\omega}_{\bold{x}}(\tau) \! + \!L_{\pmb{\phi}}L_{\pmb{\mathcal{A}} } M_{\Theta}^2 \| \bar{\pmb{\mathcal{I}}}_{L} \mathrm{T}_L \!- \!\bold{T} \|_C \nonumber  \\
		 & + L_{\pmb{\phi}} M_{\Theta} M_{\pmb{\mathcal{A}}} B_{\bold{x}} \big\| \bar{\pmb{\mathcal{BI}}}_{L} \bar{\mathrm{W}}_L  - \bold{W} \big\|_{\mathcal{L}^\infty} + L_{\pmb{\phi}} M_{\Theta} \big\| \bar{\pmb{\mathcal{BI}}}_{L} \bar{\mathrm{c}}_L - \bold{c} \big\|_{\mathcal{L}^\infty}, \nonumber
\end{align}
where $\bar{\mathrm{W}}_L = \bold{flip}(\mathrm{W}_L), \bar{\mathrm{c}}_L = \bold{flip}(\mathrm{c}_L)$, $\pmb{\omega}_{\bold{x}}(\cdot)$ is the modulus of continuity of $\bold{x}$. 
Combining the above four inequalities, we obtain for $1\leq l \leq L$ that
\begin{equation}
	\begin{aligned}
		&| \mathrm{x}^{l}_L - \bold{x}(t^l_L)| \\
        \leq 
		& L_{\pmb{\phi}} \| \bar{\pmb{\mathcal{I}}}_{L} \mathrm{V}_L - \bold{V} \|_C \cdot M_{\Theta} \big(M_{\pmb{\mathcal{A}} } |\mathrm{d}| + M_{\pmb{\mathcal{A}} }B_{\rm x} + 2 \big) + \| \bar{\pmb{\mathcal{I}}}_{L} \mathrm{b}_L - \bold{b} \|_C \\
		& + L_{\pmb{\phi}} M_{\Theta} \big( M_{\pmb{\mathcal{A}} } |\mathrm{d}| \cdot 	\| \bar{\pmb{\mathcal{I}}}_{L} \mathrm{U}_L - \bold{U} \|_C + M_{\Theta} L_{\pmb{\mathcal{A}}} \| \bar{\pmb{\mathcal{I}}}_{L} \mathrm{T}_L - \bold{T} \|_C  +	\| \bar{\pmb{\mathcal{I}}}_{L} \mathrm{a}_L - \bold{a} \|_C \big) \\
		& + \tau L_{\pmb{\phi}} L_{\pmb{\mathcal{A}} } M_{\Theta}^2 \sum_{k=0}^{l-1}  | \mathrm{x}_L^k - \bold{x}(t_L^k)| + L_{\pmb{\phi}} L_{\pmb{\mathcal{A}}} M_{\Theta}^2 \pmb{\omega}_{\bold{x}}(\tau) + L_{\pmb{\phi}}L_{\pmb{\mathcal{A}} } M_{\Theta}^2 \| \bar{\pmb{\mathcal{I}}}_{L} \mathrm{T}_L - \bold{T} \|_C  \\
		& + L_{\pmb{\phi}}   M_{\Theta} M_{\pmb{\mathcal{A}} } B_{\bold{x}} \| \bar{\pmb{\mathcal{BI}}}_{L} \bar{\mathrm{W}}_L - \bold{W} \|_{\mathcal{L}^\infty}  + L_{\pmb{\phi}}  M_{\Theta} \| \bar{\pmb{\mathcal{BI}}}_{L} \bar{\mathrm{c}}_L - \bold{c} \|_{\mathcal{L}^\infty}  \\
		% := & \tau L_{\pmb{\phi}} L_{\pmb{\mathcal{A}} } M_{\Theta}^2 \sum_{k=0}^{l-1} |\mathrm{x}_L^k - \bold{x}(t_L^k)| \\
	%	& + M_1 \pmb{\omega}_{\bold{x}}(\tau) + M_{2} \| \bar{\pmb{\mathcal{I}}}_{L} \mathrm{T}_L - \bold{T} \|_C + M_3 \| \bar{\pmb{\mathcal{I}}}_{L} \mathrm{U}_L - \bold{U} \|_C + M_4 \| \bar{\pmb{\mathcal{I}}}_{L} \mathrm{a}_L - \bold{a} \|_C \\
      %  & + \! M_5 \| \bar{\pmb{\mathcal{I}}}_{L} \mathrm{V}_L \!-\! \bold{V} \|_C \!  +\! M_6 \| \bar{\pmb{\mathcal{I}}}_{L} \mathrm{b}_L \!- \!\bold{b} \|_C \!+\! M_7 \| \bar{\pmb{\mathcal{BI}}}_{L} \bar{\mathrm{W}}_L \!-\! \bold{W} \|_{\mathcal{L}^\infty} \!+\!
	%	M_8 \| \bar{\pmb{\mathcal{BI}}}_{L} \bar{\mathrm{c}}_L \!-\! \bold{c} \|_{\mathcal{L}^\infty} \\
         = & \tau L_{\pmb{\phi}} L_{\pmb{\mathcal{A}} } M_{\Theta}^2 \sum_{k=0}^{l-1} |\mathrm{x}_L^k - \bold{x}(t_L^k)| + C_L,
  \end{aligned}
 \label{equation: forward-convergence-proof}
\end{equation}
{with} $C_L = L_{\pmb{\phi}} L_{\pmb{\mathcal{A}}} M_{\Theta}^2 \pmb{\omega}_{\bold{x}}(\tau) + 2 L_{\pmb{\phi}}L_{\pmb{\mathcal{A}} } M_{\Theta}^2 \| \bar{\pmb{\mathcal{I}}}_{L} \mathrm{T}_L - \bold{T} \|_C + L_{\pmb{\phi}} M_{\Theta}  M_{\pmb{\mathcal{A}} } |\mathrm{d}| \| \bar{\pmb{\mathcal{I}}}_{L} \mathrm{U}_L - \bold{U} \|_C + L_{\pmb{\phi}} M_{\Theta}  \| \bar{\pmb{\mathcal{I}}}_{L} \mathrm{a}_L - \bold{a} \|_C + \! L_{\pmb{\phi}} M_{\Theta} \big(M_{\pmb{\mathcal{A}} } |\mathrm{d}| + M_{\pmb{\mathcal{A}} }B_{\rm x} + 2 \big) \| \bar{\pmb{\mathcal{I}}}_{L} \mathrm{V}_L \!-\! \bold{V} \|_C \!  +\!  \| \bar{\pmb{\mathcal{I}}}_{L} \mathrm{b}_L \!- \!\bold{b} \|_C \!+\! L_{\pmb{\phi}}   M_{\Theta} M_{\pmb{\mathcal{A}} } B_{\bold{x}} \| \bar{\pmb{\mathcal{BI}}}_{L} \bar{\mathrm{W}}_L \!-\! \bold{W} \|_{\mathcal{L}^\infty} \!+\!
L_{\pmb{\phi}}   M_{\Theta} \| \bar{\pmb{\mathcal{BI}}}_{L} \bar{\mathrm{c}}_L \!-\! \bold{c} \|_{\mathcal{L}^\infty}$. %$\{ M_i\}_{i=1}^{8}$ are constants dependent on the parameters $\Theta_L$, $\bold{\Theta}$, input $\mathrm{d}$ and the function $\pmb{\mathcal{A}}_{\pmb{\kappa}}$.  
 
Note that 
 $$
 \begin{aligned}
 	| \mathrm{x}^{0}_L \!-\! \bold{x}(0)| %= & | \mathrm{V}_L^{0} \pmb{\phi} \circ ( \mathrm{U}_L^{0} \pmb{\mathcal{A}} _{\pmb{\kappa} } (\mathrm{T}_L^{0}; \mathrm{d}) + \mathrm{a}_L^{0} )\! + \!\mathrm{b}_L^{0} \!- \!\bold{V}(0) \pmb{\phi} \circ \Big( \bold{U}(0) \pmb{\mathcal{A}} _{\pmb{\kappa} }(\bold{T}(0);\mathrm{d}) +\bold{a}(0) \Big) \!+\! \bold{b}(0) | \\
 %	\leq & \| \mathrm{V}_L^{0} - \bold{V}(0)\| L_{\pmb{\phi}} \cdot |\mathrm{U}_L^{0} \pmb{\mathcal{A}} _{\pmb{\kappa} } (\mathrm{T}_L^{0}; \mathrm{d}) + \mathrm{a}_L^{0}| + | \mathrm{b}_L^{0}  - \bold{b}(0) | \\
 %	&+ \|\bold{V}(0)\| L_{\pmb{\phi}} \cdot | \mathrm{U}_L^{0} \pmb{\mathcal{A}} _{\pmb{\kappa} } (\mathrm{T}_L^{0}; \mathrm{d}) + \mathrm{a}_L^{0} - \bold{U}(0) \pmb{\mathcal{A}} _{\pmb{\kappa} }(\bold{T}(0);\mathrm{d}) +\bold{a}(0) | \\
 \leq & L_{\pmb{\phi}} \| \bar{\pmb{\mathcal{I}}}_{L} \mathrm{V}_L \!- \!\bold{V} \|_C M_{\Theta} \big(M_{\pmb{\mathcal{A}} } |\mathrm{d}|\! +\! 1 \big)\! +\! \| \bar{\pmb{\mathcal{I}}}_{L} \mathrm{b}_L \!-\! \bold{b} \|_C \!+ \! M_{\Theta} L_{\pmb{\phi}}  \big( \| \bar{\pmb{\mathcal{I}}}_{L} \mathrm{a}_L \! - \! \bold{a} \|_C \\
 	&  + M_{\pmb{\mathcal{A}} } |\mathrm{d}|	\| \bar{\pmb{\mathcal{I}}}_{L} \mathrm{U}_L \!-\! \bold{U} \|_C \!+\! M_{\Theta} L_{\pmb{\mathcal{A}}} \| \bar{\pmb{\mathcal{I}}}_{L} \mathrm{T}_L \! -\! \bold{T} \|_C  \big) \leq C_L.
 \end{aligned}
 $$
{Therefore, applying discrete Gronwall's inequality \cite[Lemma 100]{dragomir2002some} to Eq.(\ref{equation: forward-convergence-proof}) and using $C_L \rightarrow 0$ as $L \rightarrow  \infty$, we get}
$$
\begin{aligned}
	| \mathrm{x}^{l}_L - \bold{x}(t^l_L)| %& \leq \tau L_{\pmb{\phi}} L_{\pmb{\mathcal{A}} } M_{\Theta}^2 \sum_{k=0}^{l-1} d_k \prod_{i=k+1}^{l-1} [1+ \tau L_{\pmb{\phi}} L_{\pmb{\mathcal{A}} } M_{\Theta}^2] + d_l \\
	 \leq C_L \Big[ L_{\pmb{\phi}} L_{\pmb{\mathcal{A}} } M_{\Theta}^2 \exp(L_{\pmb{\phi}} L_{\pmb{\mathcal{A}} } M_{\Theta}^2) + 1 \Big] \rightarrow 0, \ \text{as } L \rightarrow  \infty.
\end{aligned}
$$
Hence, $\sup_{t\in [t_{L}^{l-1}, t_{L}^{l}]} | \mathrm{x}^{l}_L - \bold{x}(t)| \leq | \mathrm{x}^{l}_L - \bold{x}(t_L^l) | + \pmb{\omega}_{\bold{x}}(\tau) \rightarrow 0$ as $L \rightarrow \infty$, and then we complete the proof.
\end{proof}

%%%%%%%%%%%%%%%%%%%%%%%%%%%%%%%%%%%
%%%%%%%%%%%%%%%%%%%%%%%%%%%%%%%%%%%
\begin{corollary}
\label{corollary: convergent rate}
Under the assumptions of Proposition~\ref{forward convergence}, suppose in addition that $\pmb{\Theta}\in \Omega_{\pmb{\Theta}}$ and that the discrete parameter $\Theta_L \in \Omega_{\Theta;L}$ is obtained by the following sampling strategy,
\[
\begin{aligned}
    (\mathrm{T}_L^l, \mathrm{U}_L^l, \mathrm{a}_L^l,  \mathrm{V}_L^l,  \mathrm{b}_L^l) 
= (\bold{T}(t_L^l), \bold{U}(t_L^l),  \bold{a}(t_L^l), \bold{V}(t_L^l), \bold{b}(t_L^l)), \ 0\le l\le L; \\
(\mathrm{W}_L^{l,k}, \mathrm{c}_L^{l,k}) = \Big(\frac{1}{\tau ^2} \int_{t_L^{l-1}}^{t_L^{l}} \int_{t_L^{k-1}}^{t_L^{k}} \bold{W}(t,s)dt ds,  \frac{1}{\tau ^2} \int_{t_L^{l-1}}^{t_L^{l}} \int_{t_L^{k-1}}^{t_L^{k}} \bold{c}(t,s)dt ds \Big), \ 1\le k \le l\le L.
\end{aligned}
\]
%for $1\le l,k\le L$ with $t_L^l=l\tau$ and $\tau=T/L$.
Then there exists a constant $C>0$ such that
$$
|\mathrm{x}^l(\mathrm{d}; {\Theta_L})- \bold{x}(t_L^l; \mathrm{d}; \pmb{\Theta}) | \leq C \tau^{1/3}, \ 1 \leq l \leq L.
$$
In particular, $\sup_{ {1 \leq l \leq L} } \sup_{t\in [t_{L}^{l-1}, t_{L}^{l}]} | \mathrm{x}^l(\mathrm{d}; {\Theta_L}) - \bold{x}(t; \mathrm{d}; \pmb{\Theta}) |$ converges to $0$ at rate $O(\tau^{1/3})$ as $L\to\infty$.
\end{corollary}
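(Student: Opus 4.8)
The plan is to reuse the a~priori estimate already obtained in the proof of Proposition~\ref{forward convergence}. Under the hypotheses there, the states satisfy the discrete Gronwall recursion (\ref{equation: forward-convergence-proof}), which after applying the discrete Gronwall inequality yields
\[
|\mathrm{x}^l(\mathrm{d};\Theta_L) - \bold{x}(t_L^l;\mathrm{d};\pmb{\Theta})| \le C_L\big[L_{\pmb{\phi}}L_{\pmb{\mathcal{A}}}M_\Theta^2\exp(L_{\pmb{\phi}}L_{\pmb{\mathcal{A}}}M_\Theta^2)+1\big],
\]
so the entire task reduces to showing $C_L=O(\tau^{1/3})$. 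Recalling the definition of $C_L$, it is a finite sum of (i) the piecewise-constant interpolation errors $\|\bar{\pmb{\mathcal{I}}}_L\mathrm{T}_L-\bold{T}\|_C,\ldots,\|\bar{\pmb{\mathcal{I}}}_L\mathrm{b}_L-\bold{b}\|_C$ of the one-variable parameters, (ii) the two-variable interpolation errors $\|\bar{\pmb{\mathcal{BI}}}_L\bar{\mathrm{W}}_L-\bold{W}\|_{\mathcal{L}^\infty}$ and $\|\bar{\pmb{\mathcal{BI}}}_L\bar{\mathrm{c}}_L-\bold{c}\|_{\mathcal{L}^\infty}$, and (iii) the modulus of continuity $\pmb{\omega}_{\bold{x}}(\tau)$. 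I would bound each of these at rate $\tau^{1/3}$ and observe that the slowest one dictates the overall rate.

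For the one-variable terms I would use that $\pmb{\Theta}\in\Omega_{\pmb{\Theta}}$ forces $\bold{T},\bold{U},\bold{a},\bold{V},\bold{b}\in\mathcal{H}^1((0,1))$ and the one-dimensional embedding $\mathcal{H}^1\hookrightarrow\mathcal{C}^{0,1/2}$: writing $\bold{U}(l\tau)-\bold{U}(t)=\int_t^{l\tau}\bold{U}'(r)\,dr$ and applying Cauchy--Schwarz gives $|\bold{U}(l\tau)-\bold{U}(t)|\le\|\bold{U}'\|_{\mathcal{L}^2}\tau^{1/2}$ on each cell, so these errors are $O(\tau^{1/2})$, and likewise for $\bold{T},\bold{a},\bold{V},\bold{b}$. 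For the two-variable terms, the key is Morrey's inequality: since $\pmb{\Theta}\in\Omega_{\pmb{\Theta}}$ gives $\bold{W},\bold{c}\in\mathcal{W}^{1,3}((0,1)\times(0,1))$ and the exponent $p=3$ exceeds the dimension $d=2$, one has $\mathcal{W}^{1,3}\hookrightarrow\mathcal{C}^{0,\alpha}$ with $\alpha=1-d/p=1/3$. Because the sampling sets $\mathrm{W}_L^{l,k}$ equal to the cell average of $\bold{W}$, I would estimate the cell deviation by
\[
|\mathrm{W}_L^{l,k}-\bold{W}(t,s)|\le\frac{1}{\tau^2}\int\!\!\int|\bold{W}(t',s')-\bold{W}(t,s)|\,dt'\,ds'\le C\|\bold{W}\|_{\mathcal{W}^{1,3}}\tau^{1/3},
\]
and the symmetry $\bold{W}(t,s)=\bold{W}(s,t)$ makes the $\bold{flip}(\cdot)$ filling of the upper triangle consistent with this average, so $\|\bar{\pmb{\mathcal{BI}}}_L\bar{\mathrm{W}}_L-\bold{W}\|_{\mathcal{L}^\infty}=O(\tau^{1/3})$, and identically for $\bold{c}$. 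This $\mathcal{W}^{1,3}$ regularity is precisely what produces the $1/3$ exponent and is the binding constraint.

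It then remains to show $\pmb{\omega}_{\bold{x}}(\tau)=O(\tau^{1/3})$. For $t'<t$ I would write $\bold{x}(t)-\bold{x}(t')$ from (\ref{equation: continuous-time DNLF}) and split the difference of the pre-activation integrals into three pieces: the change of integration limit $\int_{t'}^t\bold{W}(t,s)\pmb{\mathcal{A}}_{\pmb{\kappa}}(\bold{T}(t);\bold{x}(s))\,ds$, bounded by $O(|t-t'|)$ using $|\bold{x}|\le B_{\bold{x}}$; the $\bold{W}$- and $\bold{c}$-increments $\int_0^{t'}|\bold{W}(t,s)-\bold{W}(t',s)|\,ds$, bounded by $O(|t-t'|^{1/3})$ via the H\"older continuity just established; and the $\bold{T}$-increment inside $\pmb{\mathcal{A}}_{\pmb{\kappa}}$, controlled by the local Lipschitz bound $L_{\pmb{\mathcal{A}}}$ from $(A_2)$ together with $|\bold{T}(t)-\bold{T}(t')|\le C\tau^{1/2}$. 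Crucially, after peeling off the limit change the argument $\bold{x}(s)$ appears at the same $s$ in both integrals, so no increment of $\bold{x}$ enters and the estimate is not self-referential. Combining with the $\mathcal{H}^1$ continuity of $\bold{V},\bold{b}$ and the Lipschitz $\pmb{\phi}$ gives $|\bold{x}(t)-\bold{x}(t')|\le C|t-t'|^{1/3}$. Feeding all these $O(\tau^{1/3})$ bounds into $C_L$ yields $C_L=O(\tau^{1/3})$, hence $|\mathrm{x}^l(\mathrm{d};\Theta_L)-\bold{x}(t_L^l;\mathrm{d};\pmb{\Theta})|\le C\tau^{1/3}$; adding one more $\pmb{\omega}_{\bold{x}}(\tau)=O(\tau^{1/3})$ term to pass from the node $t_L^l$ to any $t\in[t_L^{l-1},t_L^l]$ gives the stated uniform rate. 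The main obstacle is the sharp modulus-of-continuity estimate for $\bold{x}$ through the integral equation, and recognizing that the Morrey embedding $\mathcal{W}^{1,3}\hookrightarrow\mathcal{C}^{0,1/3}$, rather than the better $\mathcal{H}^1$ terms, is what caps the rate at $\tau^{1/3}$.
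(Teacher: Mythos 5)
Your proposal is correct and follows essentially the same route as the paper's proof (described there as ``a small modification'' of Proposition~\ref{forward convergence}): you reuse the discrete Gronwall recursion \eqref{equation: forward-convergence-proof} and reduce everything to showing $C_L = O(\tau^{1/3})$, bounding the one-variable interpolation errors via $\mathcal{H}^1((0,1)) \hookrightarrow \mathcal{C}^{0,1/2}$, the two-variable errors via the Morrey embedding $\mathcal{W}^{1,3}((0,1)^2) \hookrightarrow \mathcal{C}^{0,1/3}$ (the same mechanism the paper invokes in Lemma~\ref{lemma: limsup} for this sampling of $\bold{W}$), and the modulus of continuity $\pmb{\omega}_{\bold{x}}(\tau)=O(\tau^{1/3})$ through the Volterra structure, correctly noting that this last estimate is not self-referential since $\bold{x}(s)$ appears at the same argument in both integrals.
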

%%%%%%%%%%%%%%%%%%%%%%%%%%
\begin{proof}
The proof is a small modification of the above proposition and is provided in the supplementary materials.
\end{proof}

Next, we study the learning problem of the DNL frameworks by using the results derived from the forward systems.
\begin{proposition} (Existence of solutions for learning problems)
Under the assumptions of Theorem~\ref{theorem: main results}, the minimizers of $(\mathcal{P}_L)$ and $(\mathcal{P})$ exist in $\Omega_{\Theta;L}$ and $\Omega_{\pmb{\Theta}}$, respectively. 
	\label{proposition: existence}
\end{proposition}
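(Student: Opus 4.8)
The plan is to apply the direct method of the calculus of variations to each of the two problems. In both cases the objective is bounded below: $\pmb{\ell}\ge 0$ by $(A_3)$ and $\pmb{\mathcal{R}}_L,\pmb{\mathcal{R}}\ge 0$ by construction, so the two infima are finite and I may fix a minimizing sequence. What remains is, for each problem, to extract a convergent subsequence of parameters (compactness) and to pass to the limit in the objective (lower semicontinuity together with continuity of the data-loss term). The data-loss parts are handled uniformly by the continuous-dependence statements already established: Lemma~\ref{lemma: dis bound} in the discrete case and Proposition~\ref{proposition: forward-existence} in the continuous case, combined with continuity of $\pmb{\ell}$ from $(A_3)$.

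For $(\mathcal{P}_L)$ the parameter set $\Omega_{\Theta;L}$ is a finite-dimensional Euclidean space, so I would argue that a continuous, coercive function attains its minimum. Continuity of $\pmb{\mathfrak{L}}_{\mathcal{S};L}$ follows because $\mathrm{x}^L(\mathrm{d}_m;\Theta_L)$ depends continuously on $\Theta_L$ (Lemma~\ref{lemma: dis bound}), $\pmb{\ell}$ is continuous in its first argument $(A_3)$, and $\pmb{\mathcal{R}}_L$ in (\ref{equation: dis regularization})--(\ref{regularizations}) is a finite sum of norms, hence continuous. Coercivity follows from $\pmb{\mathcal{R}}_L$: for fixed $\tau=1/L$ the terms $\tau\sum_l\|\cdot\|^2$ and $\tau^2\sum_{l,k}\|\cdot\|^3$ control the magnitudes of all blocks (the differences $\tau^{-1}\|\cdot-\cdot\|^2$ propagate the bound to the $l=0$ entries, and $\bold{flip}$ merely copies entries, so $\|\bar{\mathrm{W}}_L\|,\|\bar{\mathrm{c}}_L\|$ control $\|\mathrm{W}_L\|,\|\mathrm{c}_L\|$). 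Thus $\pmb{\mathfrak{L}}_{\mathcal{S};L}(\Theta_L)\to\infty$ as $\|\Theta_L\|\to\infty$, a minimizing sequence is bounded, and Bolzano--Weierstrass yields a limit $\Theta_L^*\in\Omega_{\Theta;L}$ at which the infimum is attained.

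For $(\mathcal{P})$ the space is infinite-dimensional, so Sobolev compactness replaces Bolzano--Weierstrass. Coercivity of $\pmb{\mathcal{R}}$ in (\ref{equation: continuous regularization}) bounds a minimizing sequence $\{\pmb{\Theta}^{(j)}\}$ in $\mathcal{H}^1((0,1))$ for the blocks $\bold{T},\bold{U},\bold{a},\bold{V},\bold{b}$ and in $\mathcal{W}^{1,3}((0,1)^2)$ for $\bold{W},\bold{c}$. The key analytic inputs are the compact embeddings $\mathcal{H}^1((0,1))\hookrightarrow\hookrightarrow\mathcal{C}([0,1])$ and $\mathcal{W}^{1,3}((0,1)^2)\hookrightarrow\hookrightarrow\mathcal{C}([0,1]^2)$, the latter valid since $3>2$ (Morrey's inequality gives $\mathcal{W}^{1,3}\hookrightarrow\mathcal{C}^{0,1/3}$, and Rellich--Kondrachov/Arz\`ela--Ascoli give compactness into $\mathcal{C}$). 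These yield a subsequence converging in the norm $\|\cdot\|_{\mathcal{C}_{\pmb{\Theta}}}$ to some $\pmb{\Theta}^*\in\mathcal{C}_{\pmb{\Theta}}$; along a further subsequence the same blocks converge weakly in the reflexive spaces $\mathcal{H}^1$ and $\mathcal{W}^{1,3}$, and the weak limits coincide with $\pmb{\Theta}^*$, so $\pmb{\Theta}^*\in\Omega_{\pmb{\Theta}}$ (the symmetry $\bold{W}(t,s)=\bold{W}(s,t)$ passing to the limit). I would then conclude by lower semicontinuity: the Sobolev-norm penalty $\pmb{\mathcal{R}}$ is weakly lower semicontinuous (norms are convex and continuous), while the data-loss term is in fact continuous along the sequence, since uniform convergence of the parameters gives continuous dependence of $\bold{x}(1;\mathrm{d}_m;\pmb{\Theta})$ (Proposition~\ref{proposition: forward-existence}) and $\pmb{\ell}$ is continuous $(A_3)$; hence $\pmb{\mathfrak{L}}_{\mathcal{S}}(\pmb{\Theta}^*)\le\liminf_j\pmb{\mathfrak{L}}_{\mathcal{S}}(\pmb{\Theta}^{(j)})=\inf$.

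I expect the main obstacle to be the continuous case, specifically aligning the coercive Sobolev regularization with the correct compact embeddings into $\mathcal{C}([0,1])$ and $\mathcal{C}([0,1]^2)$, verifying that the weak $\mathcal{H}^1$/$\mathcal{W}^{1,3}$ limit remains inside $\Omega_{\pmb{\Theta}}$ together with its symmetry constraint, and reconciling the two modes of convergence needed in tandem, namely strong $\mathcal{C}_{\pmb{\Theta}}$ convergence for the continuity of the data loss and weak Sobolev convergence for the lower semicontinuity of the penalty.
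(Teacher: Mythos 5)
Your proposal is correct and follows essentially the same route as the paper, whose proof is only sketched: the paper invokes exactly the ingredients you develop, namely compactness of the learnable parameters, (weak) lower semicontinuity of the loss, and the continuous dependence results of Lemma~\ref{lemma: dis bound} and Proposition~\ref{proposition: forward-existence}, and then omits the details. Your writeup supplies those details correctly via the direct method, including the coercivity bookkeeping for the $l=0$ and $\bold{flip}$ entries in the discrete case and the pairing of strong $\mathcal{C}_{\pmb{\Theta}}$ convergence (for the data loss) with weak $\mathcal{H}^1$/$\mathcal{W}^{1,3}$ convergence (for the penalty) in the continuous case.
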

\begin{proof}
The proof is straightforward by the compactness of the learnable parameters, the weak lower semicontinuity of the loss functions, and the continuous dependency of the DNL framework on learnable parameters in Lemma~\ref{lemma: dis bound} and Proposition~\ref{proposition: forward-existence}. Here, we omit the details.
\end{proof}

We note that the functionals $\pmb{\mathfrak{L}}_{\mathcal{S};L}$, $L \in \mathbb{N}$, are defined on different function spaces. To study the convergence of optimal solutions of $(\mathcal{P}_L)$ to those of $(\mathcal{P})$ via  $\Gamma$-convergence (Definition~\ref{definition: gamma convergence} in Appendix), we need to
expand their feasible sets to a common space $\mathcal{C}_{\pmb{\Theta}}$ (defined in (\ref{parameter space C})). Inspired by \cite{ huang2024on, braides2002gamma}, 
we define the discrete-to-continuum extension functional $\tilde{\pmb{\mathfrak{L}}}_{\mathcal{S};L}: \mathcal{C}_{\pmb{\Theta}} \rightarrow [0, +\infty]$ as follows
\begin{equation}
	\tilde{\pmb{\mathfrak{L}}}_{\mathcal{S};L}(\pmb{\Theta})  =
	\left \{
	\begin{aligned}
		&\pmb{\mathfrak{L}}_{\mathcal{S};L}(\Theta_{L}),  %\ \text{if } \Theta = (\hat{\pmb{\mathcal{I}}}_{L} \mathrm{u}_L, \hat{\pmb{\mathcal{I}}}_{L} {\bold{a}}_{N}, \hat{\pmb{\mathcal{I}}}_{L} \bold{v}_L, \hat{\pmb{\mathcal{I}}}_{L} {\bold{b}}_{N}, \hat{\pmb{\mathcal{BI}}}_{L} \bar{\mathrm{W}}_L, \hat{\pmb{\mathcal{BI}}}_{L} \bar{\mathrm{c}}_L)  
            \text{ if } \pmb{\Theta} = \hat{\pmb{\mathcal{I}}}_{L}\Theta_{L}, \\
		& +\infty, \text{ otherwise},
	\end{aligned}
	\right.
	\label{problem tilde P_L}
\end{equation}
and the functional $\tilde{\pmb{\mathfrak{L}}}: \mathcal{C}_{\pmb{\Theta}} \rightarrow [0, +\infty]$ as
\begin{equation}
	\tilde{\pmb{\mathfrak{L}}}(\pmb{\Theta})  =
	\left \{
	\begin{aligned}
		&\pmb{\mathfrak{L}}(\pmb{\Theta}),  \ \text{if } \pmb{\Theta} \in \Omega_{\pmb{\Theta}},  \\
		& +\infty, \text{ otherwise}.
	\end{aligned}
	\right.
	\label{problem tilde P}
\end{equation}
The following lemma shows the relationship between the optimal solution of $(\mathcal{P}_L)$ and the minimizer of $\tilde{\pmb{\mathfrak{L}}}_{\mathcal{S}; L}$, as well as the relationship between  $(\mathcal{P})$ and $\tilde{\pmb{\mathfrak{L}}}$. This enables us to study the $\Gamma$-convergence of $\tilde{\pmb{\mathfrak{L}}}_{\mathcal{S}; L}$ to $\tilde{\pmb{\mathfrak{L}}}$ to establish the relationship between the optimal solutions of $(\mathcal{P}_L)$ and $(\mathcal{P})$.

\begin{lemma} 
Consider the problems $(\mathcal{P}_{L}), (\mathcal{P})$ defined in (\ref{discrete-time-control problem P_L}) and (\ref{continuous-time: control problem P}), respectively. Let $\tilde{\pmb{\mathfrak{L}}}_{\mathcal{S};L}$ and $\tilde{\pmb{\mathfrak{L}}}$ be given in (\ref{problem tilde P_L}) and (\ref{problem tilde P}). Let $\pmb{\Theta}^* = (\bold{T}^*, \! \bold{U}^*, \! \bold{a}^*,\! \bold{V}^*, \!\bold{b}^*, \! \bold{W}^*, \!\bold{c}^*) \! \in \! \Omega_{\pmb{\Theta}}$, $\Theta_{L}^{*} = (\mathrm{T}_L^*, \mathrm{U}_L^*, \mathrm{a}_L^*, \mathrm{V}_L^*, \mathrm{b}_L^*, \mathrm{W}_L^*, \mathrm{c}_L^*) \in \Omega_{\Theta; L}$ for $L \ge 1$.  
Then 
\begin{itemize}
	\item[(i)] $\Theta_{L}^{*}$ is an optimal solution of $(\mathcal{P}_L)$ if and only if $\hat{\pmb{\mathcal{I}}}_{L}\Theta_L^*$ minimizes $\tilde{\pmb{\mathfrak{L}}}_{\mathcal{S}; L}$;
 \item[(ii)] $\pmb{\Theta}^*$ is an optimal solution of $(\mathcal{P})$ if and only if $\pmb{\Theta}^*$ minimizes $\tilde{\pmb{\mathfrak{L}}}$.
	\label{lemma: solution tilde_L and L_L}
 \end{itemize}
\end{lemma}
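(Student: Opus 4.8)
The plan is to reduce both equivalences to a single structural observation: the extension operators embed the discrete (resp.\ the continuous) feasible set into the common space $\mathcal{C}_{\pmb{\Theta}}$ in such a way that the extended functionals reproduce the original objective exactly on the embedded set and take the value $+\infty$ off it. Minimization over $\mathcal{C}_{\pmb{\Theta}}$ then collapses to minimization over the original feasible set, and the minimizing points are in one-to-one correspondence.

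Part (ii) is immediate from the construction (\ref{problem tilde P}). Since $\tilde{\pmb{\mathfrak{L}}}$ coincides with $\pmb{\mathfrak{L}}_{\mathcal{S}}$ on $\Omega_{\pmb{\Theta}}$ and equals $+\infty$ elsewhere, and since Proposition~\ref{proposition: existence} supplies a minimizer of $(\mathcal{P})$ in $\Omega_{\pmb{\Theta}}$ with finite objective value, the infimum of $\tilde{\pmb{\mathfrak{L}}}$ over $\mathcal{C}_{\pmb{\Theta}}$ is finite and can only be attained inside $\Omega_{\pmb{\Theta}}$. Hence $\inf_{\mathcal{C}_{\pmb{\Theta}}}\tilde{\pmb{\mathfrak{L}}}=\inf_{\Omega_{\pmb{\Theta}}}\pmb{\mathfrak{L}}_{\mathcal{S}}$, and a point $\pmb{\Theta}^*$ realizes one infimum exactly when it realizes the other, which is the claimed equivalence. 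I would spell this out as the two one-line implications (an $\Omega_{\pmb{\Theta}}$-minimizer beats every competitor in $\Omega_{\pmb{\Theta}}$ and trivially beats every $+\infty$ competitor; conversely a $\tilde{\pmb{\mathfrak{L}}}$-minimizer must be finite-valued, hence in $\Omega_{\pmb{\Theta}}$).

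Part (i) requires one genuine verification, namely that the full extension map $\Theta_L \mapsto \hat{\pmb{\mathcal{I}}}_L\Theta_L$ is \emph{injective} from $\Omega_{\Theta;L}$ into $\mathcal{C}_{\pmb{\Theta}}$, so that the case distinction in (\ref{problem tilde P_L}) is unambiguous. For the vector-type blocks ($\mathrm{T}_i,\mathrm{U},\mathrm{a},\mathrm{V},\mathrm{b}$) the piecewise linear interpolant returns a function whose nodal values at $t_L^l=l\tau$ are exactly the discrete entries $\mathrm{\xi}_L^l$, so those entries are recovered by evaluation and injectivity is clear. For the $\mathrm{W}$ and $\mathrm{c}$ blocks one composes $\bold{flip}$ with the bilinear extension $\hat{\pmb{\mathcal{BI}}}_L$: the bilinear interpolant recovers the grid values $\mathrm{\xi}_L^{l,k}$, and $\bold{flip}$ is itself injective because the original lower-triangular data $\{\mathrm{\Xi}_L^{l,k}\}_{1\le k\le l\le L}$ can be read back directly from the symmetrized array produced by Definition of $\bold{flip}$. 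I would also record here that the image indeed lands in $\mathcal{C}_{\pmb{\Theta}}$: piecewise linear and bilinear interpolants are continuous, and applying $\hat{\pmb{\mathcal{BI}}}_L$ to a $\bold{flip}$-symmetrized array yields $\bold{W}(t,s)=\bold{W}(s,t)$, meeting the symmetry requirement in (\ref{parameter space C}).

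With injectivity in hand, $\hat{\pmb{\mathcal{I}}}_L$ is a bijection from $\Omega_{\Theta;L}$ onto its image, on which $\tilde{\pmb{\mathfrak{L}}}_{\mathcal{S};L}(\hat{\pmb{\mathcal{I}}}_L\Theta_L)=\pmb{\mathfrak{L}}_{\mathcal{S};L}(\Theta_L)$ while $\tilde{\pmb{\mathfrak{L}}}_{\mathcal{S};L}\equiv+\infty$ off the image. Consequently $\inf_{\mathcal{C}_{\pmb{\Theta}}}\tilde{\pmb{\mathfrak{L}}}_{\mathcal{S};L}=\inf_{\Omega_{\Theta;L}}\pmb{\mathfrak{L}}_{\mathcal{S};L}$, and $\hat{\pmb{\mathcal{I}}}_L\Theta_L^*$ attains the left-hand infimum precisely when $\Theta_L^*$ attains the right-hand one, i.e.\ precisely when $\Theta_L^*$ solves $(\mathcal{P}_L)$. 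The only nontrivial obstacle in the whole argument is the injectivity check through $\bold{flip}$ and the confirmation that the embedded parameters satisfy the membership and symmetry constraints of $\mathcal{C}_{\pmb{\Theta}}$; everything else is bookkeeping on where the extended functionals are finite.
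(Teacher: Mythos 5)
Your proof is correct. The paper itself omits the proof of this lemma (deferring to the analogous Lemma 3 of \cite{huang2024on}), so there is no in-paper argument to compare against; your argument — checking that $\hat{\pmb{\mathcal{I}}}_{L}$ (including the composition of $\bold{flip}$ with $\hat{\pmb{\mathcal{BI}}}_{L}$) is injective so that $\tilde{\pmb{\mathfrak{L}}}_{\mathcal{S};L}$ is well defined, that the embedded parameters satisfy the continuity and symmetry requirements of $\mathcal{C}_{\pmb{\Theta}}$, and that minimization of the extended functionals therefore collapses onto the embedded feasible sets with minimizers in one-to-one correspondence — is precisely the standard argument that such a citation stands for, and it fills in exactly what the paper leaves implicit.
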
 

\begin{proof} 
	The proof of this lemma is similar to \cite[Lemma 3]{huang2024on} and we omit it.
\end{proof}

We then verify the compactness of parameter set $\Omega_{\Theta; L}$ after linear extension operation.
\begin{lemma}
	Given $\{\Theta_L \in \Omega_{\Theta;L}: L \in \mathbb{N}\}$ such that 
	 $\sup_{L\in \mathbb{N}} \{ \pmb{\mathcal{R}}_L(\Theta_L)\}%\max \{ \pmb{\mathcal{R}}_L^{(1)}(\mathrm{U}_L), \pmb{\mathcal{R}}_L^{(2)}(\mathrm{a}_L), \pmb{\mathcal{R}}_L^{(3)}(\mathrm{V}_L), \pmb{\mathcal{R}}_L^{(4)}(\mathrm{b}_L), \pmb{\mathcal{R}}_L^{(5)}(\bar{\mathrm{W}}_L), \pmb{\mathcal{R}}_L^{(6)}(\bar{\mathrm{c}}_L) \} 
 < + \infty,$ where $\pmb{\mathcal{R}}_L(\Theta_L)$ is defined in \eqref{equation: dis regularization}.
{Then there exists a subsequence of $\{\hat{\pmb{\mathcal{I}}}_{L}\Theta_{L} \}_{L \in \mathbb{N}}$ converging to a $\pmb{\Theta} \in \Omega_{\pmb{\Theta}}$ in $\mathcal{C}_{\pmb{\Theta}}$.}  
	\label{lemma: relative compactness}
\end{lemma}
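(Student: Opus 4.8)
The plan is to recognize the discrete regularization $\pmb{\mathcal{R}}_L$ as a discrete analogue of the Sobolev norms appearing in $\Omega_{\pmb{\Theta}}$, and to transfer compactness from the continuous Sobolev spaces through the interpolation operators of Definitions~\ref{definition: linear extention} and~\ref{definition: bilinear extension}. Concretely, the finitely many one-dimensional curves $(\mathrm{T}_j)_L, \mathrm{U}_L, \mathrm{a}_L, \mathrm{V}_L, \mathrm{b}_L$ will be handled through the piecewise linear extension $\hat{\pmb{\mathcal{I}}}_L$ together with the compact embedding $\mathcal{H}^1((0,1)) \hookrightarrow\hookrightarrow \mathcal{C}([0,1])$, while the two kernels $\bar{\mathrm{W}}_L, \bar{\mathrm{c}}_L$ will be handled through the piecewise bilinear extension $\hat{\pmb{\mathcal{BI}}}_L$ and the compact embedding $\mathcal{W}^{1,3}((0,1)^2) \hookrightarrow\hookrightarrow \mathcal{C}([0,1]^2)$.

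First I would establish that the interpolants are uniformly bounded in the appropriate Sobolev spaces. For a one-dimensional curve such as $\mathrm{U}_L$, the piecewise linear interpolant has, on each subinterval $[(l-1)\tau, l\tau]$, the constant derivative $(\mathrm{U}_L^l - \mathrm{U}_L^{l-1})/\tau$, so that $\int_0^1 \|(\hat{\pmb{\mathcal{I}}}_L \mathrm{U}_L)'\|^2 \,dt = \tau^{-1}\sum_{l=1}^L \|\mathrm{U}_L^l - \mathrm{U}_L^{l-1}\|^2$, which is exactly the second contribution to $\pmb{\mathcal{R}}_L^{(1)}(\mathrm{U}_L)$; moreover $\int_0^1 \|\hat{\pmb{\mathcal{I}}}_L \mathrm{U}_L\|^2 \,dt \le C\tau \sum_{l=0}^L \|\mathrm{U}_L^l\|^2$ is controlled by the first contribution together with the difference bound (which also controls the endpoint $l=0$). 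Hence $\{\hat{\pmb{\mathcal{I}}}_L \mathrm{U}_L\}_L$ is bounded in $\mathcal{H}^1((0,1))$, and likewise for the remaining one-dimensional components. For the kernels, the analogous but two-dimensional computation identifies $\pmb{\mathcal{R}}_L^{(3)}(\bar{\mathrm{W}}_L)$ and $\pmb{\mathcal{R}}_L^{(4)}(\bar{\mathrm{c}}_L)$, up to fixed constants and boundary corrections, with $\|\hat{\pmb{\mathcal{BI}}}_L \bar{\mathrm{W}}_L\|^3_{\mathcal{W}^{1,3}}$ and $\|\hat{\pmb{\mathcal{BI}}}_L \bar{\mathrm{c}}_L\|^3_{\mathcal{W}^{1,3}}$; here the partial derivatives of the bilinear interpolant are linear interpolations of the discrete differences $\bar{\mathrm{W}}_L^{l,k} - \bar{\mathrm{W}}_L^{l-1,k}$ and $\bar{\mathrm{W}}_L^{l,k} - \bar{\mathrm{W}}_L^{l,k-1}$, whose cubed, $\tau^{-1}$-weighted sums are precisely the terms appearing in $\pmb{\mathcal{R}}_L^{(3)}, \pmb{\mathcal{R}}_L^{(4)}$. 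Thus these interpolants are bounded in $\mathcal{W}^{1,3}((0,1)^2)$.

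Next I would invoke compactness. Since $\mathcal{H}^1((0,1))$ embeds compactly into $\mathcal{C}([0,1])$, and since $3 > 2 = \dim$ so that Morrey's inequality yields the compact embedding $\mathcal{W}^{1,3}((0,1)^2) \hookrightarrow\hookrightarrow \mathcal{C}([0,1]^2)$, each family of interpolants admits a uniformly convergent subsequence. As there are only finitely many parameter components, successive extraction yields a single subsequence along which every component converges uniformly, i.e. $\hat{\pmb{\mathcal{I}}}_L \Theta_L \to \pmb{\Theta}$ in $\mathcal{C}_{\pmb{\Theta}}$, where the limit is continuous in every component and hence $\pmb{\Theta} \in \mathcal{C}_{\pmb{\Theta}}$. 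The symmetry $\bold{W}(t,s) = \bold{W}(s,t)$ and the analogous relation for $\bold{c}$ are inherited from the symmetry of $\bold{flip}(\mathrm{W}_L)$ under uniform convergence. Finally, the uniform Sobolev bounds are preserved under weak convergence, so the limits also lie in $\mathcal{H}^1$ (respectively $\mathcal{W}^{1,3}$), placing $\pmb{\Theta} \in \Omega_{\pmb{\Theta}}$.

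The hard part will be the bookkeeping in the two-dimensional kernel estimate: one must verify that the $\mathcal{W}^{1,3}$ norm of the bilinear interpolant is genuinely dominated by the discrete cubic differences with the correct power of $\tau$, and that the boundary rows and columns introduced by the $\bold{flip}(\cdot)$ operation (the $0$-indexed entries) do not spoil the estimate. This is exactly where the cubic exponent is essential: it is the smallest integer power making the dimension condition $p > d$ hold for $d = 2$, which is precisely what upgrades weak $\mathcal{W}^{1,3}$ boundedness to the uniform ($\mathcal{L}^\infty$/$\mathcal{C}$) convergence demanded by the norm $\|\cdot\|_{\mathcal{C}_{\pmb{\Theta}}}$.
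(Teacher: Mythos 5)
Your proposal is correct and follows essentially the same route as the paper's proof: identify the discrete regularizers $\pmb{\mathcal{R}}_L^{(1)}$--$\pmb{\mathcal{R}}_L^{(4)}$ with uniform $\mathcal{H}^1((0,1))$ and $\mathcal{W}^{1,3}((0,1)^2)$ bounds on the piecewise linear and bilinear interpolants, then apply the Rellich--Kondrachov compact embedding (your Morrey-based justification, using $p=3>2=d$, is the same mechanism) and extract a common subsequence over the finitely many components. Your additional remarks on the $l=0$ endpoint, the inherited symmetry of $\bold{W}$ and $\bold{c}$, and the weak-limit argument placing $\pmb{\Theta}$ in $\Omega_{\pmb{\Theta}}$ fill in details the paper leaves implicit, but do not change the argument.
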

\begin{proof}
We only need to prove the cases of $\{\hat{\pmb{\mathcal{I}}}_L\mathrm{U}_L\}_L$ and $\{ \hat{\pmb{\mathcal{BI}}}_L\bold{flip}(\mathrm{W}_L)\}_L$. 
Since $ \sup_{L\in \mathbb{N}} \{\pmb{\mathcal{R}}_L(\Theta_L)\} < + \infty$, there exists a constant $M<+\infty$ such that, for all $L\ge 1$,
$$
\begin{aligned}
    \max \Big\{ & \pmb{\mathcal{R}}_L^{(1)} ((\mathrm{T}_1)_{L}),  \pmb{\mathcal{R}}_L^{(1)} ((\mathrm{T}_2)_{L}),  \pmb{\mathcal{R}}_L^{(1)} ((\mathrm{T}_3)_{L}), 
 \pmb{\mathcal{R}}_L^{(1)} (\mathrm{U}_L),  \\
 & \ \ \pmb{\mathcal{R}}_L^{(2)} (\mathrm{a}_L),  \pmb{\mathcal{R}}_L^{(1)}(\mathrm{V}_L),  \pmb{\mathcal{R}}_L^{(2)} (\mathrm{b}_L), \pmb{\mathcal{R}}_L^{(3)} (\bar{\mathrm{W}}_L),  \pmb{\mathcal{R}}_L^{(4)} (\bar{\mathrm{c}}_L)   \Big \}   \leq   M,
\end{aligned}
$$
 where $\bar{\mathrm{W}}_L := \bold{flip}(\mathrm{W}_L), \bar{\mathrm{c}}_L := \bold{flip}(\mathrm{c}_L)$. By the definition of  $\{ \hat{\pmb{\mathcal{I}}}_L\mathrm{U}_L\}_L$, Jensen's inequality and $\tau \leq \frac{1}{\tau}$, we have 
\begin{equation}
	\begin{aligned}
		\| \hat{\pmb{\mathcal{I}}}_L\mathrm{U}_L \|^2_{\mathcal{L}^{2}([0,1]; \mathbb{R}^{n\times n})}
		\leq & \sum_{l=1}^{L}  \int_{t_L^{l-1}}^{t_L^{l}}  \frac{t_L^l -t}{\tau} \| \mathrm{U}_L^{l-1} \|^2 + \frac{t -t_L^{l-1}}{\tau} \| \mathrm{U}_L^{l}\|^2  {d}t\\
	%	\textcolor{blue}{=} & \sum_{l=1}^{L} \frac{1}{2} \tau (\| \mathrm{U}_L^{l-1}\|^2  + \| \mathrm{U}_L^{l}\|^2 ) \\
		\leq & \tau \sum_{l=1}^{L} \| \mathrm{U}_L^{l}\|^2 + \tau \| \mathrm{U}_L^{1}\|^2 + \frac{1}{\tau} \| \mathrm{U}_L^{1} - \mathrm{U}_L^{0}\|^2  , \\
		\| \pmb{\mathcal{D}}_t (\hat{\pmb{\mathcal{I}}}_L\mathrm{U}_L) \|^2_{\mathcal{L}^{2}([0,1]; \mathbb{R}^{n\times n})}  = & \sum_{l=1}^{L}  \int_{t_L^{l-1}}^{t_L^{l}} \Big\| \frac{\mathrm{U}_L^{l} -\mathrm{U}_L^{l-1}}{\tau} \Big\|^2 {d}t = \frac{1}{\tau} \sum_{l=1}^{L}\|\mathrm{U}_{L}^{l} - \mathrm{U}_{L}^{l-1}\|^{2}.
	\end{aligned}
	\nonumber
\end{equation}
Adding the above two inequalities together, we obtain $
\| \hat{\pmb{\mathcal{I}}}_L\mathrm{U}_L \|^2_{\mathcal{H}^{1}((0,1); \mathbb{R}^{n\times n})} \! \leq \! 2M,  \forall L\in \mathbb{N}$.	
Therefore, by Rellich-Kondrachov theorem \cite[Theorem 6.3]{adams2003sobolev}, {there exists a subsequence of $\{\hat{\pmb{\mathcal{I}}}_L\mathrm{U}_L\}_L$ converging to a $\bold{U} \in \mathcal{H}^{1}((0,1); \mathbb{R}^{n\times n})$ in $\mathcal{C}([0,1]; \mathbb{R}^{n\times n})$.}

As for the $\{\hat{\pmb{\mathcal{BI}}}_L \bold{flip}(\mathrm{W}_L)\}_L$. By Jensen's inequality again, we have
\begin{equation}
	\begin{aligned}
		&\| \hat{\pmb{\mathcal{BI}}}_L\bar{\mathrm{W}}_L \|^3_{\mathcal{L}^{3}([0,1] \times [0,1]; \mathbb{R}^{n\times n})} \\
        =& \sum_{l=1}^{L} \! \sum_{k=1}^{L} \! \int_{t_L^{l\!-\!1}}^{t_L^{l}}\int_{t_L^{k\!-\!1}}^{t_L^{k}}  \| (\hat{\pmb{\mathcal{BI}}}_L\bar{\mathrm{W}}_L)(t,s) \|^3 {d}t{d}s \! +\! \frac{s\!-\!t^{k-1}_L}{\tau} \big(\!  \frac{t^l_L\!-\!t}{\tau} \bar{\mathrm{W}}_L^{l\!-\!1,k} \!+\! \frac{t-t^{l-1}_L}{\tau} \! \bar{\mathrm{W}}_L^{l,k} \!\big)  \big\| ^3 {d}t{d}s   \\
	%	\textcolor{red}{\leq} & \sum_{l=1}^{L} \sum_{k=1}^{L} \int_{t_L^{l-1}}^{t_L^{l}}\int_{t_L^{k-1}}^{t_L^{k}} \frac{t^k_L -s}{\tau} \left\|   \frac{t^l_L-t}{\tau} \bar{\mathrm{W}}_L^{l-1,k-1} + \frac{t-t^{l-1}_L}{\tau}  \bar{\mathrm{W}}_L^{l,k-1} \right\|^3  + \frac{s-t^{k-1}_L}{\tau} \left\|  \frac{t^l_L-t}{\tau} \bar{\mathrm{W}}_L^{l-1,k} + \frac{t-t^{l-1}_L}{\tau}  \bar{\mathrm{W}}_L^{l,k} \right\| ^3 {d}s{d}t  \\
		\leq & \sum_{l=1}^{L} \sum_{k=1}^{L} \int_{t_L^{l-1}}^{t_L^{l}}\int_{t_L^{k-1}}^{t_L^{k}} \frac{t^k_L -s}{\tau} \big(    \frac{t^l_L-t}{\tau} \| \bar{\mathrm{W}}_L^{l-1,k-1}\|^3 + \frac{t-t^{l-1}_L}{\tau}  \|\bar{\mathrm{W}}_L^{l,k-1}\|^3\big) \\
		& \qquad \qquad \qquad \qquad \quad + \frac{s-t^{k-1}_L}{\tau} \big(   \frac{t^l_L-t}{\tau} \|\bar{\mathrm{W}}_L^{l-1,k}\|^3 + \frac{t-t^{l-1}_L}{\tau}  \|\bar{\mathrm{W}}_L^{l,k}\|^3 \big)   {d}t{d}s   \\
		= & \frac{\tau^2}{4} \sum_{l=1}^{L} \sum_{k=1}^{L} (\|\bar{\mathrm{W}}_L^{l-1,k-1}\|^3 + \|\bar{\mathrm{W}}_L^{l,k-1}\|^3 + \|\bar{\mathrm{W}}_L^{l-1,k}\|^3 + \|\bar{\mathrm{W}}_L^{l,k}\|^3).
	\end{aligned}
	\nonumber
\end{equation}		
Similarly, we can estimate the weak derivation of $\hat{\pmb{\mathcal{BI}}}_L\bar{\mathrm{W}}_L$ as follows,
\begin{equation}
	\begin{aligned}
		& \| \pmb{\mathcal{D}}_s (\hat{\pmb{\mathcal{BI}}}_L\bar{\mathrm{W}}_L) \|^3_{\mathcal{L}^{3}([0,1] \times [0,1]; \mathbb{R}^{n\times n})} \\ %=  \sum_{l=1}^{L} \sum_{k=1}^{L} \int_{t_L^{l-1}}^{t_L^{l}} \int_{t_L^{k-1}}^{t_L^{k}} \| \pmb{\mathcal{D}}_s (\hat{\pmb{\mathcal{BI}}}_L\bar{\mathrm{W}}_L)(t,s) \|^3 ds dt 
		= & \tau^{-3} \sum_{l=1}^{L} \sum_{k=1}^{L} \int_{t_L^{l-1}}^{t_L^{l}} \int_{t_L^{k-1}}^{t_L^{k}} \big\|  \! \frac{t_L^{l} -t }{\tau} (\bar{\mathrm{W}}_{L}^{l-1,k} \! - \! \bar{\mathrm{W}}_{L}^{l-1,k-1}) \! +\!  \frac{t \! -\!  t_L^{l-1}}{\tau} (\bar{\mathrm{W}}_{L}^{l,k} \! - \! \bar{\mathrm{W}}_{L}^{l,k-1}) \big\| ^3\!  dt ds \\
		\leq & \frac{1}{2} \tau^{-1} \sum_{l=1}^{L} \sum_{k=1}^{L} ( \|\bar{\mathrm{W}}_{L}^{l-1,k} - \bar{\mathrm{W}}_{L}^{l-1,k-1}\|^3 + \|\bar{\mathrm{W}}_{L}^{l,k} - \bar{\mathrm{W}}_{L}^{l,k-1}\|^3  ), \\
		& \| \pmb{\mathcal{D}}_t (\hat{\pmb{\mathcal{BI}}}_L\bar{\mathrm{W}}_L) \|^3_{\mathcal{L}^{3}([0,1] \times [0,1]; \mathbb{R}^{n\times n})} \\
        \leq&  \frac{1}{2} \tau^{-1} \sum_{l=1}^{L} \sum_{k=1}^{L} ( \|\bar{\mathrm{W}}_{L}^{l,k-1} - \bar{\mathrm{W}}_{L}^{l-1,k-1}\|^3 +  \|\bar{\mathrm{W}}_{L}^{l,k} - \bar{\mathrm{W}}_{L}^{l-1,k}\|^3  ).
	\end{aligned}
 \nonumber
\end{equation}
Adding the above three inequalities together, we have	$
\| \hat{\pmb{\mathcal{BI}}}_L\! \bar{\mathrm{W}}_L \|^3_{\mathcal{W}^{1,3}((0,1) \!\times\! (0,1); \mathbb{R}^{n\times n})} \!\leq 4M$
uniformly for $L \in \mathbb{N}$. Then there exists a subsequence of $\{\hat{\pmb{\mathcal{BI}}}_L\bold{flip}(\mathrm{W}_L)\}_L$ converging to a $\bold{W} \in \mathcal{W}^{1,3}((0,1) \times (0,1); \mathbb{R}^{n\times n})$ in $\mathcal{L}^{\infty}([0,1] \times [0,1]; \mathbb{R}^{n\times n})$ owing to Rellich-Kondrachov theorem again.
\end{proof}

The next two lemmas will be used in proving the $\Gamma$-convergence of $\tilde{\pmb{\mathfrak{L}}}_{\mathcal{S}; L}$.
\begin{lemma} \cite[Proposition 4.8]{thorpe2018deep} 
	 Let $\bold{f}_L \in \mathcal{L}^2\left([0,1] ; \mathbb{R}^d\right), \bold{f} \in \mathcal{L}^2\left([0,1] ; \mathbb{R}^d\right)$ and $\varepsilon_L \rightarrow 0^{+}$ as $L \rightarrow \infty$. Assume that $\bold{f}_L \rightarrow \bold{f}$ in $\mathcal{L}^2\left([0,1] ; \mathbb{R}^d \right)$. If
	$$
	\liminf _{L \rightarrow \infty} \frac{1}{\varepsilon_L^2} \int_{\varepsilon_L}^1|\bold{f}_L(t)-\bold{f}_L\left(t-\varepsilon_L\right)|^2 {d} t<+\infty,
	$$
	then $\bold{f} \in \mathcal{H}^1\left((0,1) ; \mathbb{R}^d\right)$ and
	$$
	\liminf _{L \rightarrow \infty} \frac{1}{\varepsilon_L^2} \int_{\varepsilon_L}^1|\bold{f}_L(t)-\bold{f}_L\left(t-\varepsilon_L\right)|^2 {d}t \geq \int_0^1|\pmb{\mathcal{D}}_t(\bold{f})(t)|^2 {d}t .
	$$
	\label{lemma: limit-inf-1}
\end{lemma}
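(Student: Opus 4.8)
The plan is to recognize this as a standard weak lower-semicontinuity statement for a discretized $\mathcal{H}^1$-seminorm, and to prove it by extracting a weakly convergent subsequence of the rescaled finite-difference quotients, identifying its weak limit as the weak derivative of $\bold{f}$, and then invoking weak lower semicontinuity of the $\mathcal{L}^2$-norm.

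First I would reduce to a subsequence. Set $C := \liminf_L \frac{1}{\varepsilon_L^2}\int_{\varepsilon_L}^1 |\bold{f}_L(t) - \bold{f}_L(t-\varepsilon_L)|^2\, dt < +\infty$ by hypothesis, and pass to a subsequence (not relabeled) along which this liminf is attained as a genuine limit. Introduce the rescaled difference quotient $\bold{g}_L(t) := \frac{\bold{f}_L(t) - \bold{f}_L(t - \varepsilon_L)}{\varepsilon_L}$ for $t \in [\varepsilon_L, 1]$, extended by zero on $[0, \varepsilon_L)$. Along the subsequence $\|\bold{g}_L\|_{\mathcal{L}^2}^2 \to C$, so $\{\bold{g}_L\}$ is bounded in $\mathcal{L}^2([0,1]; \mathbb{R}^d)$ and, by the Banach--Alaoglu theorem, admits a further weakly convergent subsequence $\bold{g}_L \rightharpoonup \bold{g}$ for some $\bold{g} \in \mathcal{L}^2([0,1]; \mathbb{R}^d)$.

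Next I would identify $\bold{g}$ as the weak derivative $\pmb{\mathcal{D}}_t(\bold{f})$. Fix a test function $\pmb{\varphi} \in \mathcal{C}_c^\infty((0,1); \mathbb{R}^d)$ with support in $[\delta, 1 - \delta]$. For $L$ large enough that $\varepsilon_L < \delta$, a discrete integration by parts (the substitution $s = t - \varepsilon_L$ in the shifted term, with boundary contributions vanishing thanks to the compact support of $\pmb{\varphi}$) gives
\[ \int_0^1 \bold{g}_L(t)\cdot \pmb{\varphi}(t)\, dt = -\int_0^1 \bold{f}_L(t)\cdot \frac{\pmb{\varphi}(t+\varepsilon_L) - \pmb{\varphi}(t)}{\varepsilon_L}\, dt. \]
Letting $L\to\infty$: on the left, the weak convergence $\bold{g}_L \rightharpoonup \bold{g}$ tested against $\pmb{\varphi} \in \mathcal{L}^2$ yields $\int_0^1 \bold{g}\cdot\pmb{\varphi}\,dt$; on the right, the forward difference quotient of $\pmb{\varphi}$ converges uniformly to $\pmb{\varphi}'$ while $\bold{f}_L \to \bold{f}$ strongly in $\mathcal{L}^2$, yielding $-\int_0^1 \bold{f}\cdot \pmb{\varphi}'\,dt$. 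Hence $\int_0^1 \bold{g}\cdot\pmb{\varphi}\,dt = -\int_0^1 \bold{f}\cdot\pmb{\varphi}'\,dt$ for every such $\pmb{\varphi}$, which is exactly the definition of $\bold{g} = \pmb{\mathcal{D}}_t(\bold{f})$. Since $\bold{f}, \bold{g} \in \mathcal{L}^2$, this establishes $\bold{f} \in \mathcal{H}^1((0,1); \mathbb{R}^d)$, and uniqueness of the weak derivative guarantees that the limit does not depend on the particular subsequence extracted.

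Finally, weak lower semicontinuity of the norm closes the argument: $\|\pmb{\mathcal{D}}_t(\bold{f})\|_{\mathcal{L}^2}^2 = \|\bold{g}\|_{\mathcal{L}^2}^2 \le \liminf_L \|\bold{g}_L\|_{\mathcal{L}^2}^2 = C$, which is precisely the claimed inequality. The main obstacle I anticipate is the careful bookkeeping in the integration-by-parts step: the difference quotient is defined only on $[\varepsilon_L, 1]$ and the shift moves the domain of integration, so I must justify cleanly that the boundary terms vanish (using the compact support of $\pmb{\varphi}$ together with $\varepsilon_L \to 0$) and that both integrals may be extended to all of $[0,1]$ without altering their values; the remainder is the routine weak-compactness-plus-lower-semicontinuity machinery.
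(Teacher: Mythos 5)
Your proof is correct, but it follows a genuinely different route from the one the paper relies on. The paper does not prove Lemma~\ref{lemma: limit-inf-1} at all: it imports it from \cite[Proposition 4.8]{thorpe2018deep}, and the proof technique used there --- and reproduced in this paper's own two-dimensional generalization, Lemma~\ref{proposition: limit-inf-2} --- is mollification: one convolves $\bold{f}_L$ with a mollifier $J_\delta$, proves the liminf estimate for smooth functions via Taylor's theorem and Young's inequality (with the $\varepsilon_L^2\|\pmb{\mathcal{D}}_{tt}(\bold{g}_L)\|_{\mathcal{L}^\infty}$ error term absorbed by an $\eta$-perturbation), controls the mollified difference quotients by the original ones via Minkowski's integral inequality, and only then identifies the weak derivative by letting $\delta\to 0^+$ on interior subintervals and exhausting. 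You instead work directly with the rescaled difference quotients $\bold{g}_L$: boundedness in $\mathcal{L}^2$ gives a weak limit $\bold{g}$, a discrete integration by parts against $\pmb{\varphi}\in\mathcal{C}_c^\infty$ (legitimate exactly as you argue, since $\varepsilon_L<\delta$ makes the shifted integrals extendable to $[0,1]$ with no boundary terms) identifies $\bold{g}=\pmb{\mathcal{D}}_t(\bold{f})$ using the strong $\mathcal{L}^2$ convergence of $\bold{f}_L$ and uniform convergence of the difference quotient of $\pmb{\varphi}$, and weak lower semicontinuity of the norm closes the estimate. Your argument is shorter, avoids mollifiers entirely, and, since it only uses reflexivity of $\mathcal{L}^p$ for $p>1$ and componentwise difference quotients, it would extend with little change to the $\mathcal{L}^3$, two-variable setting of Lemma~\ref{proposition: limit-inf-2}. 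What the mollification route buys in exchange is a toolkit of intermediate smooth-approximation inequalities (the analogues of (\ref{liminf: pro-1})--(\ref{liminf: pro-4})) that the paper reuses to handle the two partial difference quotients and the interior-domain exhaustion in the 2D case; it also never needs to invoke weak compactness for the difference quotients themselves, only for the mollified derivatives.
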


The following lemma generalizes Lemma~\ref{lemma: limit-inf-1} in a subtle way.
\begin{lemma}
	Let $\bold{f}_L \in \mathcal{L}^3\left([0,1]\times[0,1] ; \mathbb{R}^{d}\right), \bold{f} \in  \mathcal{L}^3\left([0,1]\times[0,1]; \mathbb{R}^{d}\right)$ and $\varepsilon_L \rightarrow 0^{+}$ as $L \rightarrow \infty$. Assume that $\bold{f}_L \rightarrow \bold{f}$ in $ \mathcal{L}^3\left([0,1]\times[0,1] ; \mathbb{R}^{d}\right)$. If
 \begin{equation}
 \begin{aligned}
      \liminf _{L \rightarrow \infty} \Big\{  & \frac{1}{\varepsilon_L^3} \int_{\varepsilon_L}^1 \int_{0}^1 |\bold{f}_L(t,s)\!-\!\bold{f}_L\left(t-\varepsilon_L, s\right)|^3 \! dt ds  \\
      & \ +  \frac{1}{\varepsilon_L^3}  \int_{0}^1 \int_{\varepsilon_L}^1|\bold{f}_L(t,s)\!-\!\bold{f}_L\left(t, s-\varepsilon_L\right)|^3 \! dt ds \Big\} <+\infty, 
 \end{aligned}
     \label{proposition-limit-inf:condition}
 \end{equation}
	then $\bold{f} \in \mathcal{W}^{1,3}\left((0,1)\times(0,1) ; \mathbb{R}^d \right)$ and
 \begin{equation}
     \begin{aligned}
	&	\liminf _{L \rightarrow \infty} \! \Big\{   \frac{1}{\varepsilon_L^3} \!\int_{\varepsilon_L}^1 \!\int_{0}^1 \!|\bold{f}_L\!(t,s)\!-\!\bold{f}_L\!\left(t\!-\!\varepsilon_L, s\right)\!|^3 \!dt  ds \!+\! \frac{1}{\varepsilon_L^3} \int_{0}^1 \int_{\varepsilon_L}^1 |\bold{f}_L\!(t,s)\!-\!\bold{f}_L \!\left(t, s\!-\!\varepsilon_L \right)\!|^3\! {d}t{d}s  \Big\}  \\
    & \geq \int_0^1\int_0^1 |\pmb{\mathcal{D}}_t{(\bold{f})(t,s)}|^3 + |\pmb{\mathcal{D}}_s{(\bold{f})(t,s)}|^3 dt ds.
	\end{aligned}
 \label{proposition-limit-inf:result}
 \end{equation}
	\label{proposition: limit-inf-2}
\end{lemma}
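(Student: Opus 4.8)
The plan is to follow the strategy of Lemma~\ref{lemma: limit-inf-1}, realizing the stated inequality through weak compactness of finite-difference quotients in the reflexive space $\mathcal{L}^3$, combined with the weak lower semicontinuity of the $\mathcal{L}^3$-norm. First I would pass to a subsequence (not relabeled) along which the $\liminf$ in (\ref{proposition-limit-inf:condition}) is attained as a genuine limit; since the two summands are nonnegative and their sum is bounded, each difference term is individually bounded. Introducing the discrete difference quotients
$$\bold{g}_L^t(t,s) := \frac{\bold{f}_L(t,s) - \bold{f}_L(t-\varepsilon_L,s)}{\varepsilon_L}, \quad \bold{g}_L^s(t,s) := \frac{\bold{f}_L(t,s) - \bold{f}_L(t,s-\varepsilon_L)}{\varepsilon_L},$$
extended by zero outside their natural domains, the hypothesis says precisely that $\{\bold{g}_L^t\}_L$ and $\{\bold{g}_L^s\}_L$ are bounded in $\mathcal{L}^3([0,1]\times[0,1];\mathbb{R}^d)$.

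Because $1<3<\infty$, the space $\mathcal{L}^3$ is reflexive, so I would extract a further subsequence along which $\bold{g}_L^t \rightharpoonup \bold{g}^t$ and $\bold{g}_L^s \rightharpoonup \bold{g}^s$ weakly in $\mathcal{L}^3$. The crucial step is to identify these weak limits with the weak partial derivatives of $\bold{f}$. For any test function $\pmb{\varphi} \in \mathcal{C}_c^\infty((0,1)\times(0,1);\mathbb{R}^d) \subset \mathcal{L}^{3/2}$, the change of variables $t \mapsto t+\varepsilon_L$ gives
$$\int_0^1\!\!\int_0^1 \bold{g}_L^t \cdot \pmb{\varphi}\, dt\, ds = \int_0^1\!\!\int_0^1 \bold{f}_L(t,s) \cdot \frac{\pmb{\varphi}(t,s) - \pmb{\varphi}(t+\varepsilon_L,s)}{\varepsilon_L}\, dt\, ds,$$
where the compact support of $\pmb{\varphi}$ eliminates all boundary contributions once $\varepsilon_L$ is small. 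Letting $L\to\infty$ and using the strong convergence $\bold{f}_L \to \bold{f}$ in $\mathcal{L}^3$ on the right, weak convergence on the left, together with $\frac{\pmb{\varphi}(\cdot,\cdot)-\pmb{\varphi}(\cdot+\varepsilon_L,\cdot)}{\varepsilon_L} \to -\pmb{\mathcal{D}}_t\pmb{\varphi}$ uniformly, yields $\int \bold{g}^t\cdot\pmb{\varphi} = -\int \bold{f}\cdot \pmb{\mathcal{D}}_t\pmb{\varphi}$, i.e.\ $\bold{g}^t = \pmb{\mathcal{D}}_t(\bold{f})$; the same argument in the $s$-direction gives $\bold{g}^s = \pmb{\mathcal{D}}_s(\bold{f})$. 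Since both weak derivatives lie in $\mathcal{L}^3$, this establishes $\bold{f}\in\mathcal{W}^{1,3}((0,1)\times(0,1);\mathbb{R}^d)$.

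For the inequality (\ref{proposition-limit-inf:result}), I would invoke the weak lower semicontinuity of the convex functional $\bold{u}\mapsto \iint |\bold{u}|^3$ on $\mathcal{L}^3$, which gives $\liminf_L \|\bold{g}_L^t\|^3_{\mathcal{L}^3} \geq \|\pmb{\mathcal{D}}_t(\bold{f})\|^3_{\mathcal{L}^3}$ and likewise for $s$. Writing $A_L$ and $B_L$ for the two difference terms and combining these bounds with the superadditivity $\liminf_L (A_L+B_L) \geq \liminf_L A_L + \liminf_L B_L$ (applied along the subsequence realizing the limit) then delivers (\ref{proposition-limit-inf:result}). Since the weak limits are forced to equal $\pmb{\mathcal{D}}_t(\bold{f})$ and $\pmb{\mathcal{D}}_s(\bold{f})$ independently of the chosen subsequence, no ambiguity arises from the repeated extractions.

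I expect the main obstacle to be the identification step: carefully arranging the change of variables so that the finite-difference quotients of $\bold{f}_L$ transfer onto difference quotients of the smooth test function, and verifying that the boundary layers of width $\varepsilon_L$ near $t\in\{0,1\}$ and $s\in\{0,1\}$ contribute negligibly in the limit. Handling both directions simultaneously while controlling the \emph{combined} $\liminf$ --- rather than two separate ones --- is the subtle point that distinguishes this lemma from its one-dimensional counterpart Lemma~\ref{lemma: limit-inf-1}.
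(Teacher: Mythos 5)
Your argument is correct, but it takes a genuinely different route from the paper's. The paper proves this lemma by mollification, in parallel with its one-dimensional counterpart Lemma~\ref{lemma: limit-inf-1}: it transfers the finite-difference bound onto $J_\delta * \bold{f}_L$ via Minkowski's inequality for integrals (inequalities (\ref{liminf: pro-1}), (\ref{liminf: pro-3})), runs a Taylor/Young argument on these smooth functions to bound $\pmb{\mathcal{D}}_t(J_\delta * \bold{f})$ in $\mathcal{L}^3$ on inner squares $[2\delta',1-2\delta']^2$ (inequalities (\ref{liminf: pro-2}), (\ref{liminf: pro-4})), extracts a weak $\mathcal{L}^3$ limit as $\delta \to 0^+$ to identify $\pmb{\mathcal{D}}_t(\bold{f})$, and only recovers (\ref{proposition-limit-inf:result}) after a second limit $\delta' \to 0^+$. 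You instead apply the weak-compactness-plus-identification machinery directly to the difference quotients $\bold{g}_L^t, \bold{g}_L^s$ of $\bold{f}_L$ themselves: their $\mathcal{L}^3$-boundedness (along a suitable subsequence) is exactly the hypothesis (\ref{proposition-limit-inf:condition}); the discrete integration by parts against a compactly supported test function, combined with the strong convergence $\bold{f}_L \to \bold{f}$, forces every weak subsequential limit to equal the corresponding weak partial derivative; and weak lower semicontinuity of the $\mathcal{L}^3$ norm together with superadditivity of $\liminf$ yields (\ref{proposition-limit-inf:result}) on the whole square in one pass. Your route is shorter and dispenses entirely with the paper's two-parameter limiting procedure and inner-domain bookkeeping; what mollification buys the paper is that all pointwise Taylor-type estimates are performed on genuinely smooth functions, keeping the proof structurally identical to the cited result of \cite{thorpe2018deep}. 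One small remark: superadditivity $\liminf_L(A_L+B_L) \ge \liminf_L A_L + \liminf_L B_L$ holds for the full nonnegative sequences, so your parenthetical restriction to the subsequence realizing the combined limit is unnecessary; subsequence extractions are needed only inside the two separate lower bounds $\liminf_L A_L \ge \|\pmb{\mathcal{D}}_t(\bold{f})\|^3_{\mathcal{L}^3}$ and $\liminf_L B_L \ge \|\pmb{\mathcal{D}}_s(\bold{f})\|^3_{\mathcal{L}^3}$.
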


\begin{proof}
We first show the following inequalities
\begin{equation}
	\int_{\delta^{\prime}}^{1\!-\!\delta^{\prime}} \!\int_{\delta}^{1\!-\!\delta}\! |(J_\delta * \tilde{\bold{g}})(t,s)\!- \!(J_\delta * \tilde{\bold{g}})\left(t-\varepsilon_L, s\right)\!|^3 \!dt ds 
	\!\leq \!
	\int_{\varepsilon_L}^1\! \int_{0}^1  |\tilde{\bold{g}}(t, s)\!-\!\tilde{\bold{g}} \left(t-\varepsilon_L ,s \right)\!|^3 dt ds,
	\label{liminf: pro-1}
\end{equation}
\begin{equation}
	\int_{\delta}^{1-\delta} \int_{\delta^{\prime}}^{1-\delta^{\prime}} |(J_\delta * \tilde{\bold{g}})(t,s)\!-\!(J_\delta * \tilde{\bold{g}})\left(t, s-\varepsilon_L\right)|^3 dt ds 
	\leq 
	\int_{0}^1 \int_{\varepsilon_L}^1\!   |\tilde{\bold{g}}(t, s)\!-\!\tilde{\bold{g}} \left(t ,s\!-\!\varepsilon_L \right)|^3 dt ds,
	\label{liminf: pro-3}
\end{equation}
for any $\tilde{\bold{g}} \in \mathcal{L}^3\left([0,1]\times[0,1] ; \mathbb{R}^{d}\right)$ and any $\delta, \delta^{\prime}>0$ that satisfy $\varepsilon_L+\delta<\delta^{\prime}$, where $J_\delta$ is a standard 2D mollifier \cite{adams2003sobolev}; and
\begin{equation}
	\int_{2 \delta^{\prime}}^{1\!-\!2 \delta^{\prime}}\! \int_{2 \delta^{\prime}}^{1\!-\!2 \delta^{\prime}} \!|(\pmb{\mathcal{D}}_t \bold{g}) (t,s) |^3  dt ds \!\leq\! \liminf _{L \rightarrow \infty} \frac{1}{\varepsilon_L^3} \! \int_{2 \delta^{\prime}}^{1\!-\!2 \delta^{\prime}}
	\int_{2 \delta^{\prime}}^{1\!-\!2 \delta^{\prime}} \! |\bold{g}_L(t,s)\!-\!\bold{g}_L\left(t-\varepsilon_L,s\right)\!|^3 dt ds
	\label{liminf: pro-2}
\end{equation}
\begin{equation}
	\int_{2 \delta^{\prime}}^{1\!-\!2 \delta^{\prime}}  \int_{2 \delta^{\prime}}^{1\!-\!2 \delta^{\prime}} \! |\pmb{\mathcal{D}}_s(\bold{g}) (t,s) |^3  dt ds \!\leq\! \liminf _{L \rightarrow \infty} \!\frac{1}{\varepsilon_L^3} \!	\int_{2 \delta^{\prime}}^{1\!-\!2 \delta^{\prime}}  \int_{2 \delta^{\prime}}^{1\!-\!2 \delta^{\prime}}
	|\bold{g}_L(t,s)\!-\!\bold{g}_L\left(t,s-\varepsilon_L\right)\!|^3 dt ds,
	\label{liminf: pro-4}
\end{equation}
for any $\bold{g}, \bold{g}_L \in C^{\infty}\!\left(\left[\delta^{\prime}, 1\!-\!\delta^{\prime}\right]\!\times\!\left[\delta^{\prime}, 1\!-\!\delta^{\prime}\right] ;\! \mathbb{R}^d \right)$ with $\pmb{\mathcal{D}}_t(\bold{g}_L) \!\rightarrow \!\pmb{\mathcal{D}}_t(\bold{g})$, $ \pmb{\mathcal{D}}_s(\bold{g}_L) \!\rightarrow \!\pmb{\mathcal{D}}_s(\bold{g})$ in $\mathcal{L}^{\infty}\!\left(\left[\delta^{\prime}, 1\!-\!\delta^{\prime}\right]\!\times\!\left[\delta^{\prime}, 1\!-\!\delta^{\prime}\right] ;\! \mathbb{R}^d\right)$ and $\sup _L\!\left\|\pmb{\mathcal{D}}_{tt} (\bold{g}_L)\right\|_{\mathcal{L}^{\infty}}\!<\!\infty$, $ \sup _L\!\left\|\pmb{\mathcal{D}}_{ss} (\bold{g}_L)\right\|_{\mathcal{L}^{\infty}}\!<\!\infty$.

We only need to prove (\ref{liminf: pro-1}) and (\ref{liminf: pro-2}). Note that $\varepsilon_L+\delta<\delta^{\prime}$. To show (\ref{liminf: pro-1}), we have, by Minkowski's inequality for integrals \cite[Theorem 2.9]{adams2003sobolev} and the property of mollifier, that
\begin{equation}
	\begin{aligned}
		& \Big( \int_{\delta^{\prime}}^{1-\delta^{\prime}} \int_{\delta}^{1-\delta}|(J_\delta * \tilde{\bold{g}})(t,s)-(J_\delta * \tilde{\bold{g}})\left(t-\varepsilon_L, s\right)|^3 dt ds \Big) ^{1/3}  \\
		\leq & \Big( \int_{\delta^{\prime}}^{1-\delta^{\prime}} \int_{\delta}^{1-\delta} \Big|  \iint_{B(0,\delta)}  J_\delta (u,v) [  \tilde{\bold{g}}(t\!-\!u,s\!-\!v) \!- \!\tilde{\bold{g}}\left(t-\varepsilon_L\!-\!u, s\!-\!v\right) ] {d}u{d} v \Big|^3 dt ds \Big) ^{1/3} \\
		\leq &    \iint_{B(0,\delta)}  J_\delta (u,v) \Big( \!\int_{\delta^{\prime}}^{1-\delta^{\prime}}\! \int_{\delta}^{1-\delta}\! |  \tilde{\bold{g}}(t\!-\!u,s\!-\!v)\!- \!\tilde{\bold{g}}\left(t\!-\!\varepsilon_L\!-\!u, s\!-\!v\right)|^3 dt ds \Big) ^{1/3} {d} u {d} v  \\
		\leq &   \iint_{B(0,\delta)}  J_\delta (u,v) \Big( \int_{\varepsilon_L}^1 \int_{0}^1 |  \tilde{\bold{g}}(t,s)\!- \!\tilde{\bold{g}}\left(t\!-\!\varepsilon_L, s\right)  |^3 dt ds \Big)^{1/3} \! {d} u {d} v  \\
		= & \Big( \int_{\varepsilon_L}^1 \int_{0}^1  |\tilde{\bold{g}}(t, s)-\tilde{\bold{g}} \left(t-\varepsilon_L ,s \right)|^3 dt ds \Big) ^{1/3}.
	\end{aligned}
	\nonumber
\end{equation}

For inequality (\ref{liminf: pro-2}), the Taylor's theorem and the smoothness of $\bold{g}_L$ give
$$
\bold{g}_L(t,s)-\bold{g}_L\left(t-\varepsilon_L, s\right) = \varepsilon_L \pmb{\mathcal{D}}_t (\bold{g}_L)(t,s) - \varepsilon_L^2 \pmb{\mathcal{D}}_{tt}(\bold{g}_L)(r,s)  \text { for some } r \in\left[t-\varepsilon_L, t\right].
$$
Therefore, for $t \in\left[2 \delta^{\prime}, 1-2 \delta^{\prime}\right]$, $s \in [0, 1]$ and $\varepsilon_L<\delta^{\prime}$, we have
$$
\frac{|\bold{g}_L(t,s)-\bold{g}_L\left(t-\varepsilon_L,s\right)|}{\varepsilon_L} \geq |\pmb{\mathcal{D}}_t (\bold{g}_L)(t,s)|-\varepsilon_L\left\|\pmb{\mathcal{D}}_{tt}(\bold{g}_L)\right\|_{\mathcal{L}^{\infty}},
$$
due to the triangle inequality. For any $\eta>0$, there exists $C_\eta>0$ such that $|a+b|^3 \leq(1+\eta)|a|^3+C_\eta|b|^3$ for any $a, b \in \mathbb{R}$ by Young's inequality. Hence
$$
\begin{aligned}
	|\pmb{\mathcal{D}}_t(\bold{g}_L)(t,s)|^3 & \leq 
	\left(\frac{|\bold{g}_L(t,s)-\bold{g}_L\left(t-\varepsilon_L,s\right)|}{\varepsilon_L} +  \varepsilon_L\left\|\pmb{\mathcal{D}}_{tt}(\bold{g}_L)\right\|_{\mathcal{L}^{\infty}} \right)^3 \\
	& \leq (1+\eta)\Big|\frac{\bold{g}_L(t,s)-\bold{g}_L\left(t-\varepsilon_L,s\right)}{\varepsilon_L}\Big|^3+C_\eta \varepsilon_L^3\left\|\pmb{\mathcal{D}}_{tt}(\bold{g}_L)\right\|_{\mathcal{L}^{\infty}}^3 .
\end{aligned}
$$
Due to $\sup _{L \in \mathbb{N}}\left\|\pmb{\mathcal{D}}_{tt}(\bold{g}_L) \right\|_{\mathcal{L}^{\infty}} \!<\!\infty$ and Lebesgue's dominated convergence theorem, we see
$$
\begin{aligned}
	&\int_{2 \delta^{\prime}}^{1-2 \delta^{\prime}} \int_{2 \delta^{\prime}}^{1-2 \delta^{\prime}} |\pmb{\mathcal{D}}_t(\bold{g})(t,s)|^3 dt ds  =\lim _{L \rightarrow \infty} \int_{2 \delta^{\prime}}^{1-2 \delta^{\prime}} \int_{2 \delta^{\prime}}^{1-2 \delta^{\prime}} |\pmb{\mathcal{D}}_t(\bold{g}_L)(t,s)|^3 dt ds \\
	 \leq& (1+\eta) \liminf _{L \rightarrow \infty} \int_{2 \delta^{\prime}}^{1-2 \delta^{\prime}}\int_{2 \delta^{\prime}}^{1-2 \delta^{\prime}} \Big|\frac{\bold{g}_L(t,s)-\bold{g}_L\left(t-\varepsilon_L,s\right)}{\varepsilon_L}\Big|^3 dt ds.
\end{aligned}
$$
Taking $\eta \rightarrow 0$ yields (\ref{liminf: pro-2}). 

We next use (\ref{liminf: pro-1}), (\ref{liminf: pro-2}), (\ref{liminf: pro-3}), and (\ref{liminf: pro-4}) to prove the existence of weak derivatives of function $\bold{f}$. %prove the result (\ref{proposition-limit-inf:result}). 
By (\ref{liminf: pro-1}) and (\ref{proposition-limit-inf:condition}), there exists a constant $M$ such that 
\begin{equation}
\begin{aligned}
   & \liminf _{L \rightarrow \infty} \frac{1}{\varepsilon_L^3} \int_{\delta^{\prime}}^{1-\delta^{\prime}} \int_{\delta}^{1-\delta} |(J_\delta * \bold{f}_L)(t,s)-(J_\delta * \bold{f}_L) \left(t-\varepsilon_L, s\right)|^3 dt ds \\
 \leq   &  \liminf _{L \rightarrow \infty} \frac{1}{\varepsilon_L^3} \int_{\varepsilon_L}^{1} \int_{0}^{1} |\bold{f}_L(t,s)-\bold{f}_L \left(t-\varepsilon_L, s\right)|^3 dt ds \leq M.
\end{aligned}
\label{equation: lemma-inf-M}
\end{equation}
Furthermore, by the property of mollifiers and H{\"o}lder inequality, we have, for $(t,s) \in [\delta^{\prime}, 1-\delta^{\prime}]\times[\delta^{\prime}, 1-\delta^{\prime}]$, that
$$
\begin{aligned}
	 | \pmb{\mathcal{D}}_t (J_\delta \!* \!\bold{f}_L)(t,s) \!-\! \pmb{\mathcal{D}}_t (J_\delta \!*\! \bold{f})(t,s) |\! =& \Big| \! \int_{0}^{1}\!\int_{0}^{1} \pmb{\mathcal{D}}_t (J_\delta)(t\!-\!u, s\!-\!v) (\bold{f}_L(u,v) \!-\! \bold{f}(u,v)) {d}u {d}v \Big| \\
   \leq & \|\pmb{\mathcal{D}}_t (J_\delta)\| _{\mathcal{L}^{3/2}(\mathbb{R}^2)} \|\bold{f}_L -\bold{f} \|_{\mathcal{L}^{3}([0, 1]\times[0, 1])}, \\
	 | \pmb{\mathcal{D}}_{tt} (J_\delta * \bold{f}_L)| \leq & \|\pmb{\mathcal{D}}_{tt}(J_\delta)\| _{\mathcal{L}^{3/2}(\mathbb{R}^2)} \|\bold{f}_L\|_{\mathcal{L}^{3}([0, 1]\times[0, 1])}.
\end{aligned}
$$
We can hence apply (\ref{liminf: pro-2}) with $\bold{g} = J_\delta * \bold{f}$ and $\bold{g}_L = J_\delta * \bold{f}_L$ and use (\ref{equation: lemma-inf-M}) to get 
$$
\int_{2 \delta^{\prime}}^{1-2 \delta^{\prime}} \int_{2 \delta^{\prime}}^{1-2 \delta^{\prime}}|\pmb{\mathcal{D}}_t(J_\delta*{\bold{f}}) (t,s)|^3  dt ds \leq M.
$$
Therefore, there exists an $\bold{h} \in \mathcal{L}^3([2 \delta^{\prime}, 1-2\delta^{\prime}]\times[2 \delta^{\prime}, 1-2\delta^{\prime}]; \mathbb{R}^{d})$ and a subsequence of $\{\pmb{\mathcal{D}}_t(J_\delta*{\bold{f}})\}_{\delta}$ (not relabeled) such that 
$$
 \pmb{\mathcal{D}}_t(J_\delta*{\bold{f}}) \rightharpoonup \bold{h} \text{ in } \mathcal{L}^{3}([2 \delta^{\prime}, 1-2\delta^{\prime}]\times[2 \delta^{\prime}, 1-2\delta^{\prime}]; \mathbb{R}^{d}), \text{ as } \delta \rightarrow 0^+
$$
by the reflexivity of $\mathcal{L}^3$. Hence, for any differentiable function $\pmb{\psi}$ with compact support in $[2\delta^{\prime}, 1-2\delta^{\prime}]\times [2\delta^{\prime}, 1-2\delta^{\prime}]$, we have
$$
\begin{aligned}
	& \int_{2 \delta^{\prime}}^{1-2 \delta^{\prime}}  \int_{2 \delta^{\prime}}^{1-2 \delta^{\prime}} \pmb{\psi}\bold{h} 
	\leftarrow \int_{2 \delta^{\prime}}^{1-2 \delta^{\prime}} \int_{2 \delta^{\prime}}^{1-2 \delta^{\prime}} \pmb{\psi} \pmb{\mathcal{D}}_t(J_\delta*{\bold{f}}) = - \int_{2 \delta^{\prime}}^{1-2 \delta^{\prime}} \int_{2 \delta^{\prime}}^{1-2 \delta^{\prime}}  \pmb{\mathcal{D}}_t (\pmb{\psi}) \cdot J_\delta*{\bold{f}} \\
    \rightarrow &   - \int_{2 \delta^{\prime}}^{1-2 \delta^{\prime}} \int_{2 \delta^{\prime}}^{1-2 \delta^{\prime}}  \pmb{\mathcal{D}}_t (\pmb{\psi}) \cdot \bold{f} =  \int_{2 \delta^{\prime}}^{1-2 \delta^{\prime}} \int_{2 \delta^{\prime}}^{1-2 \delta^{\prime}} \pmb{\psi}  \pmb{\mathcal{D}}_t (\bold{f}),
\end{aligned}
$$
where we use the property of mollifiers in the second line. This shows $\pmb{\mathcal{D}}_t(\bold{f}) = \bold{h}$ and in particular $\pmb{\mathcal{D}}_t (\bold{f}) \in \mathcal{L}^3([2 \delta^{\prime}, 1-2\delta^{\prime}]\times[2 \delta^{\prime}, 1-2\delta^{\prime}]).$ A similar discussion as above gives the existence of $\pmb{\mathcal{D}}_s(\bold{f})\in \mathcal{L}^3([2 \delta^{\prime}, 1-2\delta^{\prime}]\times[2 \delta^{\prime}, 1-2\delta^{\prime}])$.

Now, we prove (\ref{proposition-limit-inf:result}).
Applying (\ref{liminf: pro-2})  with $\bold{g} = J_\delta * \bold{f}$ and $\bold{g}_L = J_\delta * \bold{f}_L$, and by Eq.(\ref{liminf: pro-1}), we have
$$
\begin{aligned}
	& \int_{2 \delta^{\prime}}^{1-2 \delta^{\prime}}  \int_{2 \delta^{\prime}}^{1-2 \delta^{\prime}} | \pmb{\mathcal{D}}_t(J_\delta*{\bold{f}}) |^3 dt ds \\
	 \leq & \liminf _{L \rightarrow \infty} 	\int_{2 \delta^{\prime}}^{1-2 \delta^{\prime}}  \int_{2 \delta^{\prime}}^{1-2 \delta^{\prime}} \frac{|(J_\delta*{\bold{f}}_L)(t,s)-(J_\delta*{\bold{f}}_L)\left(t-\varepsilon_L,s\right)|^3}{\varepsilon_L^3}  dt ds \\
	 \leq & \liminf _{L \rightarrow \infty} 	\int_{\varepsilon_L}^{1}  \int_{0}^{1} \frac{|\bold{f}_L(t,s)-\bold{f}_L\left(t-\varepsilon_L,s\right)|^3}{\varepsilon_L^3}  dt ds.
\end{aligned}
$$
Similarly, we can get
$$
\begin{aligned}
  &\int_{2 \delta^{\prime}}^{1-2 \delta^{\prime}} 	\int_{2 \delta^{\prime}}^{1-2 \delta^{\prime}} | \pmb{\mathcal{D}}_s(J_\delta*{\bold{f}}) |^3 dt ds \\
	\leq &  \liminf _{L \rightarrow \infty}  \int_{2 \delta^{\prime}}^{1-2 \delta^{\prime}} 	\int_{2 \delta^{\prime}}^{1-2 \delta^{\prime}}  \frac{|(J_\delta*{\bold{f}}_L)(t,s)-(J_\delta*{\bold{f}}_L)\left(t,s-\varepsilon_L\right)|^3}{\varepsilon_L^3}  dt ds \\
	\leq & \liminf _{L \rightarrow \infty} 	  \int_{0}^{1} \int_{\varepsilon_L}^{1} \frac{|\bold{f}_L(t,s)-\bold{f}_L\left(t,s-\varepsilon_L\right)|^3}{\varepsilon_L^3}  dt ds.
\end{aligned}
$$
By the property of mollifiers, $\pmb{\mathcal{D}}_t(J_\delta*{\bold{f}})$ converges strongly to $\pmb{\mathcal{D}}_t (\bold{f})$ in $\mathcal{L}^3([2 \delta^{\prime}, 1-2\delta^{\prime}]\times[2 \delta^{\prime}, 1-2\delta^{\prime}]; \mathbb{R}^{d})$, $\pmb{\mathcal{D}}_s(J_\delta*{\bold{f}})$ converges strongly to $\pmb{\mathcal{D}}_s (\bold{f})$ in $\mathcal{L}^3([2 \delta^{\prime}, 1-2\delta^{\prime}]\times[2 \delta^{\prime}, 1-2\delta^{\prime}]; \mathbb{R}^{d})$ as $\delta \rightarrow 0+$. 
Hence, for any $\delta^{\prime} >0$, we have
$$ 
\begin{aligned}
    \int_{2 \delta^{\prime}}^{1-2 \delta^{\prime}}  \int_{2 \delta^{\prime}}^{1-2 \delta^{\prime}} | \pmb{\mathcal{D}}_t (\bold{f}) |^3 dt ds & \leq  \liminf _{L \rightarrow \infty} 	\int_{\varepsilon_L}^{1}  \int_{0}^{1} \frac{|\bold{f}_L(t,s)-\bold{f}_L\left(t-\varepsilon_L,s\right)|^3}{\varepsilon_L^3}  dt ds , \\
    \int_{2 \delta^{\prime}}^{1-2 \delta^{\prime}}\int_{2 \delta^{\prime}}^{1-2 \delta^{\prime}}  |\pmb{\mathcal{D}}_s(\bold{f})|^3 dt ds & \leq \liminf _{L \rightarrow \infty} 	\int_{0}^{1} \int_{\varepsilon_L}^{1}   \frac{|\bold{f}_L(t,s)-\bold{f}_L\left(t,s-\varepsilon_L\right)|^3}{\varepsilon_L^3}  dt ds ,
\end{aligned}
$$
since the additional constraint imposed by $\delta^{\prime} >\delta + \varepsilon_L$ vanishes when taking $\delta \rightarrow 0$ and $\varepsilon_L \rightarrow 0 $.
Adding the above two inequalities together and taking $\delta^{\prime} \rightarrow 0^+$, we complete the proof.
\end{proof}

Based on these preceding results, we can verify the ${\rm \liminf}$ condition of $\Gamma$-convergence for $\{\tilde{\pmb{\mathfrak{L}}}_{\mathcal{S}; L}\}_L$. 

\begin{lemma}
(Liminf condition)	Let the assumptions in Theorem~\ref{theorem: main results} hold. Then, for every learnable parameter function $\pmb{\Theta} \in \mathcal{C}_{\pmb{\Theta}}$ and every sequence $\{\pmb{\Theta}_L\}_{L \in \mathbb{N}}$ converging to $\pmb{\Theta}$ in $\mathcal{C}_{\pmb{\Theta}}$, we have 
$$ \liminf_{L \rightarrow \infty}\tilde{\pmb{\mathfrak{L}}}_{\mathcal{S};L}(\pmb{\Theta}_L) \ge \tilde{\pmb{\mathfrak{L}}}(\pmb{\Theta}),$$
where $\tilde{\pmb{\mathfrak{L}}}_{\mathcal{S};L}$ and $\tilde{\pmb{\mathfrak{L}}}$ are given in (\ref{problem tilde P_L}) and (\ref{problem tilde P}), respectively.
\label{lemma: liminf condition}
\end{lemma}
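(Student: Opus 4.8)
The plan is to establish the standard $\liminf$ (lower-bound) inequality of $\Gamma$-convergence by splitting the functional into its data-fidelity and regularization parts and handling each separately. If $\liminf_{L\to\infty}\tilde{\pmb{\mathfrak{L}}}_{\mathcal{S};L}(\pmb{\Theta}_L)=+\infty$, the claim is trivial, so I would pass to a subsequence (not relabeled) along which $\tilde{\pmb{\mathfrak{L}}}_{\mathcal{S};L}(\pmb{\Theta}_L)$ converges to the $\liminf$ and stays finite. By definition (\ref{problem tilde P_L}), finiteness forces each $\pmb{\Theta}_L$ to equal the piecewise-linear extension $\hat{\pmb{\mathcal{I}}}_L\Theta_L$ of some $\Theta_L\in\Omega_{\Theta;L}$, with $\tilde{\pmb{\mathfrak{L}}}_{\mathcal{S};L}(\pmb{\Theta}_L)=\pmb{\mathfrak{L}}_{\mathcal{S};L}(\Theta_L)$. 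Since the data loss and the regularization $\pmb{\mathcal{R}}_L(\Theta_L)$ are both nonnegative, I obtain a uniform bound $\sup_L\pmb{\mathcal{R}}_L(\Theta_L)<\infty$ along this subsequence, which is precisely the hypothesis needed for the compactness and lower-semicontinuity arguments below.

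A bridging observation I would record first is that the hypothesis $\hat{\pmb{\mathcal{I}}}_L\Theta_L\to\pmb{\Theta}$ in $\mathcal{C}_{\pmb{\Theta}}$ also yields $\bar{\pmb{\mathcal{I}}}_L\Theta_L\to\pmb{\Theta}$ for the piecewise-constant extensions. Indeed, on each subinterval the two extensions differ by at most one nodal increment, and for generic nodal values $\xi_L^l$ one has $\max_l|\xi_L^l-\xi_L^{l-1}|\le 2\|\hat{\pmb{\mathcal{I}}}_L\Theta_L-\pmb{\Theta}\|+\pmb{\omega}_{\pmb{\Theta}}(\tau)\to0$ by uniform convergence together with the (uniform) continuity of the limit $\pmb{\Theta}$ (which holds as a uniform limit of continuous interpolants); the same reasoning covers the two-variable components via $\hat{\pmb{\mathcal{BI}}}_L$ versus $\bar{\pmb{\mathcal{BI}}}_L$. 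With $\bar{\pmb{\mathcal{I}}}_L\Theta_L\to\pmb{\Theta}$ in hand, the forward convergence result (Proposition~\ref{forward convergence}) gives $\mathrm{x}^L(\mathrm{d}_m;\Theta_L)\to\bold{x}(1;\mathrm{d}_m;\pmb{\Theta})$ for each sample $m$, and assumption $(A_3)$ (continuity of $\pmb{\ell}$ in its first argument) then makes the data term converge, so its $\liminf$ is a genuine limit equal to $\frac{1}{M}\sum_m\pmb{\ell}(\bold{x}(1;\mathrm{d}_m;\pmb{\Theta}),\mathrm{g}_m)$.

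For the regularization I would prove $\liminf_L\pmb{\mathcal{R}}_L(\Theta_L)\ge\pmb{\mathcal{R}}(\pmb{\Theta})$ and, at the same time, that $\pmb{\Theta}\in\Omega_{\pmb{\Theta}}$, so that $\tilde{\pmb{\mathfrak{L}}}(\pmb{\Theta})=\pmb{\mathfrak{L}}(\pmb{\Theta})$ rather than $+\infty$. For a one-variable block, say $\mathrm{U}_L$, the zeroth-order term $\tau\sum_l\|\mathrm{U}_L^l\|^2$ equals $\|\bar{\pmb{\mathcal{I}}}_L\mathrm{U}_L\|_{\mathcal{L}^2}^2$ and converges to $\|\bold{U}\|_{\mathcal{L}^2}^2$ by uniform convergence, whereas the increment term $\tau^{-1}\sum_l\|\mathrm{U}_L^l-\mathrm{U}_L^{l-1}\|^2$ coincides, up to a nonnegative boundary term, with the finite-difference functional $\varepsilon_L^{-2}\int_{\varepsilon_L}^1|\bold{f}_L(t)-\bold{f}_L(t-\varepsilon_L)|^2\,dt$ for $\bold{f}_L=\bar{\pmb{\mathcal{I}}}_L\mathrm{U}_L$ and $\varepsilon_L=\tau$; Lemma~\ref{lemma: limit-inf-1} then gives $\bold{U}\in\mathcal{H}^1$ and $\liminf_L\tau^{-1}\sum_l\|\mathrm{U}_L^l-\mathrm{U}_L^{l-1}\|^2\ge\|\pmb{\mathcal{D}}_t\bold{U}\|_{\mathcal{L}^2}^2$, hence $\liminf_L\pmb{\mathcal{R}}_L^{(1)}(\mathrm{U}_L)\ge\|\bold{U}\|_{\mathcal{H}^1}^2$. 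The blocks $\mathrm{T}_j,\mathrm{a},\mathrm{V},\mathrm{b}$ are identical. For a two-variable block, say $\bar{\mathrm{W}}_L=\bold{flip}(\mathrm{W}_L)$, I argue the same way with $\bar{\pmb{\mathcal{BI}}}_L$: the growth term $\tau^2\sum_{l,k}\|\bar{\mathrm{W}}_L^{l,k}\|^3=\|\bar{\pmb{\mathcal{BI}}}_L\bar{\mathrm{W}}_L\|_{\mathcal{L}^3}^3$ converges to $\|\bold{W}\|_{\mathcal{L}^3}^3$, while the two variation sums match precisely the two finite-difference functionals in Lemma~\ref{proposition: limit-inf-2}, whose conclusion yields $\bold{W}\in\mathcal{W}^{1,3}$ and $\liminf_L\ge\int_0^1\!\int_0^1(|\pmb{\mathcal{D}}_t\bold{W}|^3+|\pmb{\mathcal{D}}_s\bold{W}|^3)$, so $\liminf_L\pmb{\mathcal{R}}_L^{(3)}(\bar{\mathrm{W}}_L)\ge\|\bold{W}\|_{\mathcal{W}^{1,3}}^3$; symmetry of the limit is inherited from the symmetry built into $\bold{flip}(\cdot)$. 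Summing all blocks gives both $\pmb{\Theta}\in\Omega_{\pmb{\Theta}}$ and $\liminf_L\pmb{\mathcal{R}}_L(\Theta_L)\ge\pmb{\mathcal{R}}(\pmb{\Theta})$.

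Finally, combining the two parts through $\liminf(a_L+b_L)\ge\lim a_L+\liminf b_L$ (legitimate since the data term converges) yields $\liminf_L\tilde{\pmb{\mathfrak{L}}}_{\mathcal{S};L}(\pmb{\Theta}_L)\ge\frac{1}{M}\sum_m\pmb{\ell}(\bold{x}(1;\mathrm{d}_m;\pmb{\Theta}),\mathrm{g}_m)+\pmb{\mathcal{R}}(\pmb{\Theta})=\pmb{\mathfrak{L}}(\pmb{\Theta})=\tilde{\pmb{\mathfrak{L}}}(\pmb{\Theta})$, using $\pmb{\Theta}\in\Omega_{\pmb{\Theta}}$. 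I expect the main obstacle to be the regularization lower bound rather than the data term: one must select the piecewise-constant (not piecewise-linear) extension so that the discrete increment sums become \emph{exact} finite differences at spacing $\varepsilon_L=\tau$, verify that the discarded boundary corrections are nonnegative (hence harmless for the inequality direction), and confirm that the discrete penalties in (\ref{regularizations}) are termwise the correct discretizations of the Sobolev seminorms in (\ref{equation: continuous regularization}) so that Lemmas~\ref{lemma: limit-inf-1} and \ref{proposition: limit-inf-2} apply verbatim. The data-term part is comparatively routine once Proposition~\ref{forward convergence} is in place.
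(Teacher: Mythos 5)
Your proposal is correct and follows the same skeleton as the paper's proof: split the functional into data and regularization parts, pass from the piecewise-linear to the piecewise-constant extensions so that Proposition~\ref{forward convergence} and assumption $(A_3)$ give convergence of the data term, and apply Lemma~\ref{lemma: limit-inf-1} and Lemma~\ref{proposition: limit-inf-2} to the discrete penalties after identifying them (up to nonnegative boundary summands, which you correctly discard) with the finite-difference functionals of the piecewise-constant extensions. You deviate from the paper in two sub-steps, both legitimately. First, the paper treats the case $\pmb{\Theta}\notin\Omega_{\pmb{\Theta}}$ separately, arguing by contradiction via the compactness result (Lemma~\ref{lemma: relative compactness}); you instead extract Sobolev membership of the limit directly from the conclusions of Lemmas~\ref{lemma: limit-inf-1} and \ref{proposition: limit-inf-2} whenever the $\liminf$ is finite, which covers $\pmb{\Theta}\notin\Omega_{\pmb{\Theta}}$ implicitly and makes Lemma~\ref{lemma: relative compactness} unnecessary in this lemma. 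Second, to obtain $\bar{\pmb{\mathcal{BI}}}_{L}\bold{flip}(\mathrm{W}_L)\to\bold{W}$ in $\mathcal{L}^{\infty}$, the paper bounds the nodal increments by $(M_d\tau)^{1/3}$ using the uniform regularization bound (\ref{equation: bounded difference of U&W}), whereas you observe that $\bold{W}$, as an $\mathcal{L}^{\infty}$-limit of the continuous interpolants $\hat{\pmb{\mathcal{BI}}}_{L}\bold{flip}(\mathrm{W}_L)$, admits a uniformly continuous representative, so the increments vanish by equicontinuity of the limit; this is slightly cleaner and does not consume the regularization bound, while the paper's route has the advantage of giving a quantitative $O(\tau^{1/3})$ increment estimate. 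With the subsequence bookkeeping you describe (the $\liminf$ realized as a limit along a subsequence of genuine interpolants, and the superadditivity $\liminf(a_L+b_L)\ge\lim a_L+\liminf b_L$ for the convergent data term), your argument is complete.
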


\begin{proof}
	If $\tilde{\pmb{\mathfrak{L}}}(\pmb{\Theta}) = +\infty$, i.e., $\pmb{\Theta} \notin \Omega_{\pmb{\Theta}}$, we show $\liminf _{L \rightarrow \infty} \tilde{\pmb{\mathfrak{L}}}_{\mathcal{S};L}(\pmb{\Theta}_L) = +\infty$ by contradiction. If $\liminf _{L \rightarrow \infty} \tilde{\pmb{\mathfrak{L}}}_{\mathcal{S};L}(\pmb{\Theta}_L) < +\infty$, then there exists a subsequence $\{\pmb{\Theta}_{L_{k}}\}_{L_{k}} \subset \{\pmb{\Theta}_{L}\}_{L}$ such that
$\lim _{L_k \rightarrow \infty}\tilde{\pmb{\mathfrak{L}}}_{\mathcal{S};L}(\pmb{\Theta}_{L_k}) < +\infty$. Without loss of generality, we can assume
$\pmb{\Theta}_{L_k} = \hat{\pmb{\mathcal{I}}}_{L_k}\Theta_{L_k}$ for some $ \Theta_{L_k} \in \Omega_{\Theta; L}$ by the definition of $\tilde{\pmb{\mathfrak{L}}}_{\mathcal{S};L}$ in (\ref{problem tilde P_L}). 
Then, there exists a subsequence $\{\pmb{\Theta}_{L_{k_r}}\}_{L_{k_r}}$ converging to some $\hat{\pmb{\Theta}} \in \Omega_{\pmb{\Theta}}$ in $\mathcal{C}_{\pmb{\pmb\Theta}}$ due to Lemma~\ref{lemma: relative compactness}. This contradicts $\pmb{\Theta}_{L_k} \rightarrow \pmb{\Theta} \notin \Omega_{\pmb{\Theta}}$. 

If $\tilde{\pmb{\mathfrak{L}}}(\pmb{\Theta}) < +\infty$, we only need to consider the case when $\liminf_{L \rightarrow \infty} \tilde{\pmb{\mathfrak{L}}}_{\mathcal{S};L}(\pmb{\Theta}_L) < +\infty$.
Assume $\pmb{\Theta}_{L} = \hat{\pmb{\mathcal{I}}}_{L}\Theta_{L}$ for some $\Theta_{L} \in \Omega_{\Theta; L}$ by the definition of $\tilde{\pmb{\mathfrak{L}}}_{\mathcal{S};L}$.
Due to the continuity of $\pmb{\ell}$ (assumption (${A}_3$)) and Proposition~\ref{forward convergence}, we next prove
\begin{equation}
	\frac{1}{M} \sum\limits_{m=1}^{M} \pmb{\ell} (\mathrm{x}^L(\mathrm{d}_m; \Theta_L), \mathrm{g}_m) \rightarrow \frac{1}{M} \sum\limits_{m=1}^{M} \pmb{\ell} (\bold{x}(1; \mathrm{d}_m;\pmb{\Theta}) , \mathrm{g}_m)  \text{ as } L \rightarrow \infty,
	\label{equation: convergence of data loss}
\end{equation}
by verifying the condition (\ref{convergence of parameter}). Note that we only need to show 
\begin{equation}
	\bar{\pmb{\mathcal{I}}}_{L} {\mathrm{U}}_L \rightarrow \bold{U} \text{ in } \mathcal{C}([0,1];\mathbb{R}^{n\times n}); \ \bar{\pmb{\mathcal{BI}}}_{L} (\bold{flip}(\mathrm{W}_{L})) \rightarrow \bold{W} \text{ in } \mathcal{L}^{\infty}([0,1]\times[0,1];\mathbb{R}^{n\times n}).
	\label{equation: convergence of bar U&W}
\end{equation}

For any $\epsilon >0$, there exists an $\check{L} \in \mathbb{N}$ such that $\| \hat{\pmb{\mathcal{I}}}_{L} {\mathrm{U}}_L - \bold{U} \|_C < \epsilon/2$, $\forall L>\check{L}$, because $\{\pmb{\Theta}_L\}_{L \in \mathbb{N}}$ converges to $\pmb{\Theta}$ in $\mathcal{C}_{\pmb{\Theta}}$. Besides, there exists an $\tilde{L} \in \mathbb{N}$ such that $\max_{|t_1 - t_2| \leq 1/L} {\| \bold{U}(t_1) - \bold{U}(t_2) \| } \leq \epsilon/2$ for $L\ge \tilde{L}$ due to the uniform continuity of $\bold{U}$ in $[0, 1]$. Consequently, for all $L \ge \max \{\check{L}, \tilde{L} \}$, we have
$$
\begin{aligned} 
	\| (\bar{\pmb{\mathcal{I}}}_{L} {\mathrm{U}}_L) (0) - \bold{U}(0) \| &= \| (\hat{\pmb{\mathcal{I}}}_{L}) {\mathrm{U}}_L (0) - \bold{U}(0) \| < \epsilon, \\
	\| (\bar{\pmb{\mathcal{I}}}_{L} {\mathrm{U}}_L) (t) - \bold{U}(t) \| & 
	\leq \| (\hat{\pmb{\mathcal{I}}}_{L} {\mathrm{U}}_L) (t_L^l) - \bold{U}(t_L^l) \| + \| \bold{U}(t_L^l) - \bold{U}(t) \|  < \epsilon, \  \ t \in (t_L^{l-1}, t_L^l], 
\end{aligned}
$$
where $1 \leq l \leq L$.
Denote $\bar{\mathrm{W}}_L = \bold{flip}(\mathrm{W}_{L})$. Since $\liminf_{L \rightarrow \infty} \tilde{\pmb{\mathfrak{L}}}_{\mathcal{S};L}(\pmb{\Theta}_L) < +\infty$, there exists a constant $M_d$ such that for all $L \in \mathbb{N}$, 
\begin{equation}
\begin{aligned}
&	\tau^{-1} \sum_{l=1}^{L}\|\mathrm{U}_{L}^{l} \! - \! \mathrm{U}_{L}^{l-1}\|^{2}  \leq M_d, \\
&	\tau^{-1} \Big(\sum_{l=1}^{L} \sum_{k=1}^{L}  \|\bar{\mathrm{W}}_{L}^{l,k} - \bar{\mathrm{W}}_{L}^{l-1,k}\|^{3} \! + \! \sum_{l=1}^{L} \sum_{k=1}^{L}  \|\bar{\mathrm{W}}_{L}^{l,k} - \bar{\mathrm{W}}_{L}^{l,k-1}\|^{3} \Big)  \leq M_d,
	\end{aligned}
	\label{equation: bounded difference of U&W}
\end{equation}
due to $\liminf_{L \rightarrow \infty} \tilde{\pmb{\mathfrak{L}}}_{\mathcal{S};L}(\pmb{\Theta}_L) < +\infty$. This yields, for any $1\leq k,l \leq L$, that
\begin{equation}
	\begin{aligned}
	& \sup_{L \in  \mathbb{N}}	\sup_{{1 \leq k,l \leq L}} \max\{\|\bar{\mathrm{W}}_{L}^{l,k} - \bar{\mathrm{W}}_{L}^{l-1,k}\|, \|\bar{\mathrm{W}}_{L}^{l,k} - \bar{\mathrm{W}}_{L}^{l,k-1}\|\} \leq  (M_d\tau)^{1/3}. \\
	\end{aligned}
	\nonumber
\end{equation}
%a similar proof of Morrey's inequality \cite[Theorem 11.35]{leoni2009first} yields a constant $L_{\bold{W}}$ independent of $\tau$ such that
Hence, we derive
$$
\begin{aligned} 
%	\| (\bar{\pmb{\mathcal{BI}}}_{L} \bar{\mathrm{W}}_L) (0, 0) - \bold{W}(0, 0) \| &= \| (\hat{\pmb{\mathcal{BI}}}_{L} \bar{\mathrm{W}}_L) (0, 0) - \bold{W}(0, 0) \|, \\
%	\| \bar{\pmb{\mathcal{BI}}}_{L} \bar{\mathrm{W}}_L (0, s) - \bold{W}(0, s) \|
%	& \leq  \| (\hat{\pmb{\mathcal{BI}}}_{L} \bar{\mathrm{W}}_L) (0, s) - (\hat{\pmb{\mathcal{BI}}}_{L} \bar{\mathrm{W}}_L) (0, \tilde{s}) \| + \| (\hat{\pmb{\mathcal{BI}}}_{L} \bar{\mathrm{W}}_L) (0, \tilde{s}) - \bold{W}(0, \tilde{s}) \| + \| \bold{W}(0, \tilde{s}) - \bold{W}(0, s) \| \\
%	& \leq \| \hat{\pmb{\mathcal{BI}}}_{L} \bar{\mathrm{W}}_L (0, \tilde{s}) - \bold{W}(0, \tilde{s}) \| + L_{\bold{W}} |(0, \tilde{s}) - (0,s)|^{1/3}, s, \tilde{s} \in (t_L^{k-1}, t_L^k], \\
%	\| \bar{\pmb{\mathcal{BI}}}_{L} \bar{\mathrm{W}}_L (t, 0) - \bold{W}(t, 0) \| 
%	&\leq  \| \hat{\pmb{\mathcal{BI}}}_{L} \bar{\mathrm{W}}_L (\tilde{t}, 0) - \bold{W}(\tilde{t}, 0) \| + \| \bold{W}(\tilde{t}, 0) - \bold{W}(t, 0) \|  \\
%	& \leq \| \hat{\pmb{\mathcal{BI}}}_{L} \bar{\mathrm{W}}_L (\tilde{t}, 0) - \bold{W}(\tilde{t}, 0) \| + L_{\bold{W}} |(\tilde{t}, 0) - (t, 0)|^{1/3}, t,\tilde{t} \in (t_L^{l-1}, t_L^l],  \\
	& \| (\bar{\pmb{\mathcal{BI}}}_{L} \bar{\mathrm{W}}_L) (t,s) - \bold{W}(t,s) \| \\
	\leq & \| (\hat{\pmb{\mathcal{BI}}}_{L} \bar{\mathrm{W}}_L) (t_L^l,t_L^k) - (\hat{\pmb{\mathcal{BI}}}_{L} \bar{\mathrm{W}}_L) (t,s) \| +  \|(\hat{\pmb{\mathcal{BI}}}_{L} \bar{\mathrm{W}}_L) (t,s) - \bold{W}(t,s) \| \\
% \leq & \| (\hat{\pmb{\mathcal{BI}}}_{L} \bar{\mathrm{W}}_L) (t_L^l,t_L^k) - (\hat{\pmb{\mathcal{BI}}}_{L} \bar{\mathrm{W}}_L) (\tilde{t}_L^{l},\tilde{s}_L^{l}) \| + \| (\hat{\pmb{\mathcal{BI}}}_{L} \bar{\mathrm{W}}_L) (\tilde{t}_L^{l},\tilde{s}_L^{l}) - \bold{W}(\tilde{t}_L^{l},\tilde{s}_L^{l}) \| + \| \bold{W}(\tilde{t}_L^{l},\tilde{s}_L^{l}) - \bold{W}(t,s) \| \\
 %= & \Big\|\bar{\mathrm{W}}_L^{l,k} - \frac{t^k_L -{s}}{\tau} \Big(  \frac{t^l_L-{t}}{\tau} \bar{\mathrm{W}}_L^{l-1,k-1} + \frac{{t}-t^{l-1}_L}{\tau}  \bar{\mathrm{W}}_L^{l,k-1} \Big)  + \frac{{s}-t^{k-1}_L}{\tau} \Big(  \frac{t^l_L-{t}}{\tau} \bar{\mathrm{W}}_L^{l-1,k} + \frac{{t}-t^{l-1}_L}{\tau}  \bar{\mathrm{W}}_L^{l,k} \Big)  \Big\| \\
% &   + \|(\hat{\pmb{\mathcal{BI}}}_{L} \bar{\mathrm{W}}_L) (t,s) - \bold{W}(t,s) \| \\
 \leq & \Big\|\bar{\mathrm{W}}_L^{l,k} - \bar{\mathrm{W}}_L^{l,k-1} \| +  \Big\|\bar{\mathrm{W}}_L^{l,k-1} - \bar{\mathrm{W}}_L^{l-1,k-1} \| +  \Big\|\bar{\mathrm{W}}_L^{l,k} - \bar{\mathrm{W}}_L^{l,k-1} \| +  \Big\|\bar{\mathrm{W}}_L^{l,k} - \bar{\mathrm{W}}_L^{l-1,k} \| \\
 &   + \|(\hat{\pmb{\mathcal{BI}}}_{L} \bar{\mathrm{W}}_L) (t,s) - \bold{W}(t,s) \| \\
 \leq & 4 (M_d)^{1/3}\tau^{1/3} + \|(\hat{\pmb{\mathcal{BI}}}_{L} \bar{\mathrm{W}}_L) (t,s) - \bold{W}(t,s) \|,
\end{aligned}
$$
where $(t,s) \in (t_L^{l-1}, t_L^l] \times (t_L^{k-1}, t_L^k]$ ($1\leq l,k \leq L$) is the Lebesgue points of $\bold{W}$, and we use $(\bar{\pmb{\mathcal{BI}}}_{L} \bar{\mathrm{W}}_L) (t,s) = (\hat{\pmb{\mathcal{BI}}}_{L} \bar{\mathrm{W}}_L) (t_L^l,t_L^k)$ for $(t,s) \in (t_L^{l-1}, t_L^l] \times (t_L^{k-1}, t_L^k]$ in the first inequality.
Therefore, $\bar{\pmb{\mathcal{BI}}}_{L} \bar{\mathrm{W}}_L \rightarrow \bold{W}$ in $\mathcal{L}^{\infty}([0,1]\times[0,1];\mathbb{R}^{n\times n})$ due to  $\hat{\pmb{\mathcal{BI}}}_{L} \bar{\mathrm{W}}_L \rightarrow \bold{W}$ in $\mathcal{L}^{\infty}([0,1]\times[0,1];\mathbb{R}^{n\times n})$.

To prove $ \liminf_{L \rightarrow \infty} \!\tilde{\pmb{\mathfrak{L}}}_{\mathcal{S};L}(\!\pmb{\Theta}_L\!) \!\ge\! \tilde{\pmb{\mathfrak{L}}}(\!\pmb{\Theta}\!)$, we now only need to show $ \liminf_{L \rightarrow \infty}\!\pmb{\mathcal{R}}_L(\!\Theta_L\!) \!\ge \pmb{\mathcal{R}}(\pmb{\Theta})$ since (\ref{equation: convergence of data loss}) holds.
To achieve this goal, we show the following inequalities: \\
 (i) $
\liminf_{L \rightarrow \infty} \pmb{\mathcal{R}}_L^{(1)}((\mathrm{T}_j)_{L})  \ge \|(\bold{T})_j\|^2_{\mathcal{H}^{1}((0, 1); \mathbb{R}^{n\times n})}$, $j=1,2,3$; \\
(ii) $
\liminf_{L \rightarrow \infty} \pmb{\mathcal{R}}_L^{(1)}(\mathrm{U}_L)  \ge \|\bold{U}\|^2_{\mathcal{H}^{1}((0, 1); \mathbb{R}^{n\times n})}$, $\liminf_{L \rightarrow \infty} \pmb{\mathcal{R}}_L^{(2)}(\mathrm{a}_L)  \ge \|\bold{a}\|^2_{\mathcal{H}^{1}((0, 1); \mathbb{R}^{n})}$; \\
(iii)$\liminf_{L \rightarrow \infty} \pmb{\mathcal{R}}_L^{(1)}(\mathrm{V}_L)  \ge \|\bold{V}\|^2_{\mathcal{H}^{1}((0, 1); \mathbb{R}^{n\times n})}$, $\liminf_{L \rightarrow \infty} \pmb{\mathcal{R}}_L^{(2)}(\mathrm{b}_L)  \ge \|\bold{b}\|^2_{\mathcal{H}^{1}((0, 1); \mathbb{R}^{n})}$; \\
(iv) $\liminf_{L \rightarrow \infty} \pmb{\mathcal{R}}_L^{(3)}(\bold{flip}(\mathrm{W}_{L}))  \ge \|\bold{W}\|^3_{\mathcal{W}^{1,3}((0, 1) \times (0, 1); \mathbb{R}^{n\times n})}$, \\
\text{\quad \ \ } $\liminf_{L \rightarrow \infty} \pmb{\mathcal{R}}_L^{(4)}(\bold{flip}(\mathrm{c}_{L}))  \ge \|\bold{c}\|^3_{\mathcal{W}^{1,3}((0, 1) \times (0, 1); \mathbb{R}^{n})}$.

Note that we only need to show the $
\liminf_{L \rightarrow \infty} \pmb{\mathcal{R}}_L^{(1)}(\mathrm{U}_L)  \ge \|\bold{U}\|^2_{\mathcal{H}^{1}((0, 1); \mathbb{R}^{n\times n})}$ and $\liminf_{L \rightarrow \infty} \pmb{\mathcal{R}}_L^{(3)}(\bold{flip}(\mathrm{W}_{L})) \! \ge \! \|\bold{W}\|^3_{\mathcal{W}^{1,3}((0, 1) \times (0, 1); \mathbb{R}^{n\times n})}$, as the other cases are similar. 
By Definition~\ref{definition: linear extention}, the convergence of $\{ \bar{\pmb{\mathcal{I}}}_{L} {\mathrm{U}}_L\}_L$ in Eq.(\ref{equation: convergence of bar U&W}), Eq.(\ref{equation: bounded difference of U&W}) and Lemma~\ref{lemma: limit-inf-1}, we have
\begin{align}
&\liminf_{L \rightarrow \infty} \pmb{\mathcal{R}}_L^{(1)}(\mathrm{U}_L) \nonumber\\
%= &\liminf_{L \rightarrow \infty} \Big\{ \tau \sum_{l=1}^{L}\|\mathrm{U}_{L}^{l}\|^{2} +  \frac{1}{\tau} \sum_{l=1}^{L}\|\mathrm{U}_{L}^{l} - \mathrm{U}_{L}^{l-1}\|^{2} \Big\} \\
\ge & \liminf_{L \rightarrow \infty} \tau \sum_{l=1}^{L}\|\mathrm{U}_{L}^{l}\|^{2} + \liminf_{L \rightarrow \infty}	\frac{1}{\tau} \sum_{l=1}^{L}\|\mathrm{U}_{L}^{l} - \mathrm{U}_{L}^{l-1}\|^{2} \nonumber \\
\ge &  \liminf_{L \rightarrow \infty} \int_{0}^{1} \| (\bar{\pmb{\mathcal{I}}}_{L}\mathrm{U}_L)(t)\|^2 dt + \liminf_{L \rightarrow \infty} \frac{1}{\tau^2} \sum_{l=2}^{L} \int_{t_L^{l-1}}^{t_L^{l}} \|(\bar{\pmb{\mathcal{I}}}_{L}\mathrm{U}_L)(t) - (\bar{\pmb{\mathcal{I}}}_{L}\mathrm{U}_L)(t - \tau)\|^{2}dt \nonumber \\
%= &  \liminf_{L \rightarrow \infty} \int_{0}^{1} \| (\bar{\pmb{\mathcal{I}}}_{L}\mathrm{U}_L)(t)\|^2 dt + \liminf_{L \rightarrow \infty} L^2 \int_{1/{L}}^{T} \|(\bar{\pmb{\mathcal{I}}}_{L}\mathrm{U}_L)(t) - (\bar{\pmb{\mathcal{I}}}_{L}\mathrm{U}_L)(t - 1/{L})\|^{2}dt  \\
\ge & \int_{0}^{1} \| \bold{U}(t)\|^2 dt + \int_{0}^{1} \| \pmb{\mathcal{D}}_t(\bold{U})(t)\|^2 dt = \|\bold{U}\|^2_{\mathcal{H}^{1}((0, 1); \mathbb{R}^{n\times n})}. \nonumber
\end{align}
Similarly, we can estimate $\liminf_{L \rightarrow \infty} \pmb{\mathcal{R}}_L^{(3)}(\bar{\mathrm{W}}_L)$ as
\begin{equation}
    \begin{aligned}
	& \liminf_{L \rightarrow \infty} \pmb{\mathcal{R}}_L^{(5)}(\bar{\mathrm{W}}_L)  \nonumber\\
%	= &	\liminf_{L \rightarrow \infty} \Big\{ \tau^2 \sum_{l=1}^{L} \sum_{k=1}^{L}\|\bar{\mathrm{W}}_{L}^{l,k}\|^{3} +   \tau^{-1} \Big(\sum_{l=1}^{L} \sum_{k=1}^{L}  \|\bar{\mathrm{W}}_{L}^{l,k} - \bar{\mathrm{W}}_{L}^{l-1,k}\|^{3} + \sum_{l=1}^{L} \sum_{k=1}^{L}  \|\bar{\mathrm{W}}_{L}^{l,k} - \bar{\mathrm{W}}_{L}^{l,k-1}\|^{3} \Big) \Big\} \\
	\ge & \liminf_{L \rightarrow \infty} \tau^2 \sum_{l=1}^{L} \!\sum_{k=1}^{L}\|\bar{\mathrm{W}}_{L}^{l,k}\|^{3} \! \\
    & + \!\liminf_{L \rightarrow \infty} \tau^{-1} \Big(\sum_{l=2}^{L} \sum_{k=1}^{L} \! \|\bar{\mathrm{W}}_{L}^{l,k} \!- \!\bar{\mathrm{W}}_{L}^{l-1,k}\|^{3} \!+\! \sum_{l=1}^{L}\! \sum_{k=2}^{L}  \!\|\bar{\mathrm{W}}_{L}^{l,k} \!- \! \bar{\mathrm{W}}_{L}^{l,k-1}\|^{3} \Big) \nonumber \\
%	\textcolor{blue}{=} & \liminf_{L \rightarrow \infty} \sum_{l=1}^{L} \sum_{k=1}^{L}\int_{t_L^{l-1}}^{t_L^{l}} \int_{t_L^{k-1}}^{t_L^{k}} \|(\bar{\pmb{\mathcal{BI}}}_{L} \bar{\mathrm{W}}_L) (t,s)\|^{3} dt {d}s  + \liminf _{L \rightarrow \infty} \Big\{ \frac{1}{\tau^3} \sum_{l=2}^{L} \sum_{k=1}^{L}\int_{t_L^{l-1}}^{t_L^{l}} \int_{t_L^{k-1}}^{t_L^{k}} \| (\bar{\pmb{\mathcal{BI}}}_{L} \bar{\mathrm{W}}_L) (t,s) \\
%	 & - (\bar{\pmb{\mathcal{BI}}}_{L} \bar{\mathrm{W}}_L) (t-\tau,s) \|^3 dt ds  + \frac{1}{\tau^3} \sum_{l=1}^{L} \sum_{k=2}^{L}\int_{t_L^{l-1}}^{t_L^{l}} \int_{t_L^{k-1}}^{t_L^{k}} \| (\bar{\pmb{\mathcal{BI}}}_{L} \bar{\mathrm{W}}_L) (t,s) - (\bar{\pmb{\mathcal{BI}}}_{L} \bar{\mathrm{W}}_L) (t,s-\tau) \|^3 dt ds \Big\}\\
	= &  \liminf _{L \rightarrow \infty} \int_{0}^{1} \int_{0}^{1}	\|(\bar{\pmb{\mathcal{BI}}}_{L} \bar{\mathrm{W}}_L) (t,s) \|^3 dt ds   \\
	& + \liminf _{L \rightarrow \infty} \Big\{ L^3 \int_{{1}/{L}}^1 \int_{0}^1 \|(\bar{\pmb{\mathcal{BI}}}_{L} \bar{\mathrm{W}}_L) (t,s) -(\bar{\pmb{\mathcal{BI}}}_{L} \bar{\mathrm{W}}_L) \left(t-1/{L}, s\right)\|^3 \!  dt {d}s  \\
    & + L^3 \int_{0}^1 \int_{1/{L}}^1 \left\|(\bar{\pmb{\mathcal{BI}}}_{L} \bar{\mathrm{W}}_L)(t,s)-(\bar{\pmb{\mathcal{BI}}}_{L} \bar{\mathrm{W}}_L)\left(t, s-1/{L} \right)\right\|^3  dt ds \Big\} \\
	\ge &  \int_{0}^{1} \int_{0}^{1} \|\bold{W}(t,s)\|^3 dt ds +
	\int_0^1\int_0^1  \Big(\|\pmb{\mathcal{D}}_t( \bold{W} (t,s))\|^3 + \|\pmb{\mathcal{D}}_s(\bold{W} (t,s))\|^3 \Big) dt ds  \\
	= & \|\bold{W}\|^3_{\mathcal{W}^{1,3}((0, 1) \times (0, 1); \mathbb{R}^{n\times n})}, 
\end{aligned}
\end{equation}
where we use the convergence of $\bar{\pmb{\mathcal{BI}}}_{L} \bar{\mathrm{W}}_L$ in (\ref{equation: convergence of bar U&W}), Eq.(\ref{equation: bounded difference of U&W}) and Lemma~\ref{proposition: limit-inf-2} in the last inequality. This completes the proof.
\end{proof}

Next, we verify the ${\rm \limsup}$ condition of $\Gamma$-convergence for $\tilde{\pmb{\mathfrak{L}}}_{\mathcal{S}; L}$. For any $\pmb{\Theta} \in \Omega_{\pmb{\Theta}}$ and $L \in \mathbb{N}$, we define the learnable parameter $\Theta_L \in \Omega_{\Theta; L}$ by 
\begin{align}
	 (\mathrm{T}_j\!)_{L}^{0} &= \!\frac{1}{\tau} \!\int_{t_L^0}^{t_L^1}\! (\bold{T}_j\!)\!(t) dt, 
	(\mathrm{T}_j\!)_{L}^{l} \! =\! \frac{1}{2\tau} \!\int_{t_L^{l\!-\!1}}^{t_L^{l\!+\!1}} \!(\bold{T}_j\!)\!(t) dt,  1 \!\leq l\! \leq L\!-\!1, 
	(\mathrm{T}_j\!)_{L}^{L}\! =\! \frac{1}{\tau} \!\int_{t_L^{L\!-\!1}}^{t_L^L}\! (\bold{T}_j\!) \!(t) dt, \nonumber \\
	\mathrm{U}_L^{0} &= \frac{1}{\tau} \int_{t_L^0}^{t_L^1} \bold{U}(t) dt, 
	\mathrm{U}_L^{l}  = \frac{1}{2\tau} \int_{t_L^{l-1}}^{t_L^{l+1}} \bold{U}(t) dt,  1 \leq l \leq L-1, 
	\mathrm{U}_L^{L} = \frac{1}{\tau} \int_{t_L^{L-1}}^{t_L^L} \bold{U}(t) dt, \nonumber\\
	\mathrm{a}_L^{0} &= \frac{1}{\tau} \int_{t_L^0}^{t_L^1} \bold{a}(t) dt, 
	\mathrm{a}_L^{l}  = \frac{1}{2\tau} \int_{t_L^{l-1}}^{t_L^{l+1}} \bold{a}(t) dt,  1 \leq l \leq L-1, 
	\mathrm{a}_L^{L} = \frac{1}{\tau} \int_{t_L^{L-1}}^{t_L^L} \bold{a}(t) dt, \nonumber \\
	\mathrm{V}_L^{0} &= \frac{1}{\tau} \int_{t_L^0}^{t_L^1} \bold{V}(t) dt, 
	\mathrm{V}_L^{l}  = \frac{1}{2\tau} \int_{t_L^{l-1}}^{t_L^{l+1}} \bold{V}(t) dt, 1 \leq l \leq L-1, 
	\mathrm{V}_L^{L} = \frac{1}{\tau} \int_{t_L^{L-1}}^{t_L^L} \bold{V}(t) dt, \nonumber \\
	\mathrm{b}_L^{0} &= \frac{1}{\tau} \int_{t_L^0}^{t_L^1} \bold{b}(t) dt, 
	\mathrm{b}_L^{l}  = \frac{1}{2\tau} \int_{t_L^{l-1}}^{t_L^{l+1}} \bold{b}(t) dt,  1 \leq l \leq L-1, 
	\mathrm{b}_L^{L} = \frac{1}{\tau} \int_{t_L^{L-1}}^{t_L^L} \bold{b}(t) dt, \nonumber\\
	\mathrm{W}_L^{l,k} &= \frac{1}{\tau ^2} \int_{t_L^{l-1}}^{t_L^{l}} \int_{t_L^{k-1}}^{t_L^{k}} \bold{W}(t,s)dt ds,  1 \leq k \leq l \leq L, \label{recovery sequence} \\
	\mathrm{c}_L^{l,k} &= \frac{1}{\tau ^2} \int_{t_L^{l-1}}^{t_L^{l}} \int_{t_L^{k-1}}^{t_L^{k}} \bold{c}(t,s)dt ds,  1 \leq k \leq l \leq L. \nonumber
\end{align}

\begin{lemma}
	(Limsup condition) Let the assumptions in Theorem~\ref{theorem: main results} hold. Then, for every learnable parameter function $\pmb{\Theta} \in \mathcal{C}_{\pmb{\Theta}} $, there exists
a sequence $\{\pmb{\Theta}_L\}_{L \in \mathbb{N}}$ converging to $\pmb{\Theta}$ in $\mathcal{C}_{\pmb{\Theta}}$ such that
	$$ \limsup_{L \rightarrow \infty}\tilde{\pmb{\mathfrak{L}}}_{\mathcal{S};L}(\pmb{\Theta}_L) \leq \tilde{\pmb{\mathfrak{L}}}(\pmb{\Theta}) ,$$
	where $\tilde{\pmb{\mathfrak{L}}}_{\mathcal{S};L}$ and $\tilde{\pmb{\mathfrak{L}}}$ are given in (\ref{problem tilde P_L}) and (\ref{problem tilde P}), respectively.
 \label{lemma: limsup}
\end{lemma}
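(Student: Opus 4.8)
The plan is to prove the inequality by supplying an explicit recovery sequence and verifying that it realises the upper bound. The trivial case comes first: if $\pmb{\Theta}\notin\Omega_{\pmb{\Theta}}$ then $\tilde{\pmb{\mathfrak{L}}}(\pmb{\Theta})=+\infty$ and any sequence works, so assume $\pmb{\Theta}=(\bold{T},\bold{U},\bold{a},\bold{V},\bold{b},\bold{W},\bold{c})\in\Omega_{\pmb{\Theta}}$ and take for the recovery sequence the averaged discrete parameters $\Theta_L$ defined in \eqref{recovery sequence}, setting $\pmb{\Theta}_L:=\hat{\pmb{\mathcal{I}}}_L\Theta_L$. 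Since $3>2$, the embedding $\mathcal{W}^{1,3}\hookrightarrow C^0$ on the square makes $\bold{W},\bold{c}$ continuous, and the symmetry $\bold{W}(t,s)=\bold{W}(s,t)$ guarantees that the lower-triangular cell averages agree with their flips; thus $\bold{flip}(\mathrm{W}_L)^{l,k}$ is the average of $\bold{W}$ over the $(l,k)$-cell for every $l,k$ (and likewise for $\bold{c}$). It then suffices to check that $\pmb{\Theta}_L\to\pmb{\Theta}$ in $\mathcal{C}_{\pmb{\Theta}}$, that the data-loss term converges, and that $\limsup_{L\to\infty}\pmb{\mathcal{R}}_L(\Theta_L)\le\pmb{\mathcal{R}}(\pmb{\Theta})$.

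For the convergence $\pmb{\Theta}_L\to\pmb{\Theta}$ I would note that each one-variable component of $\pmb{\Theta}_L$ is the piecewise-linear interpolant of local averages of a continuous function, which converges uniformly by uniform continuity; similarly the piecewise-bilinear interpolants of the cell averages of the continuous $\bold{W},\bold{c}$ converge in $\mathcal{L}^\infty$. For the data loss I would observe that the piecewise-constant extensions $\bar{\pmb{\mathcal{I}}}_L\mathrm{U}_L,\dots,\bar{\pmb{\mathcal{BI}}}_L\bold{flip}(\mathrm{W}_L),\dots$ also converge to $\pmb{\Theta}$, so that hypothesis \eqref{convergence of parameter} of Proposition~\ref{forward convergence} holds; this yields $\mathrm{x}^L(\mathrm{d}_m;\Theta_L)\to\bold{x}(1;\mathrm{d}_m;\pmb{\Theta})$, and the continuity of $\pmb{\ell}$ in its first argument (assumption $(A_3)$) gives convergence of $\frac{1}{M}\sum_m\pmb{\ell}(\mathrm{x}^L(\mathrm{d}_m;\Theta_L),\mathrm{g}_m)$ to its continuous counterpart. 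Because $\pmb{\Theta}_L=\hat{\pmb{\mathcal{I}}}_L\Theta_L$ lies in the finite part of $\tilde{\pmb{\mathfrak{L}}}_{\mathcal{S};L}$, the remaining task is the regularization estimate.

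The core of the argument is to show, term by term, that each discrete energy has $\limsup$ bounded by the matching continuous seminorm. For the $\mathcal{H}^1$-type term $\pmb{\mathcal{R}}_L^{(1)}(\mathrm{U}_L)=\tau\sum_l\|\mathrm{U}_L^l\|^2+\tau^{-1}\sum_l\|\mathrm{U}_L^l-\mathrm{U}_L^{l-1}\|^2$, the zeroth-order sum is a Riemann sum of averages converging to $\int_0^1\|\bold{U}\|^2$. For the variation sum I would rewrite a difference of consecutive averages as an integral of the weak derivative, e.g. $\mathrm{U}_L^l-\mathrm{U}_L^{l-1}=\frac{1}{2\tau}\int_{t_L^{l-2}}^{t_L^{l-1}}\int_t^{t+2\tau}\pmb{\mathcal{D}}_t\bold{U}(r)\,dr\,dt$, so that the difference quotient $\tau^{-1}(\mathrm{U}_L^l-\mathrm{U}_L^{l-1})$ is a genuine probability-average of $\pmb{\mathcal{D}}_t\bold{U}$ over a window of order $\tau$ around $t_L^l$. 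Assembling these into the piecewise-constant function on $[0,1]$ produces an approximate-identity (averaging) operator applied to $\pmb{\mathcal{D}}_t\bold{U}$; such operators converge strongly in $\mathcal{L}^2$, so the variation energy converges to $\int_0^1\|\pmb{\mathcal{D}}_t\bold{U}\|^2$, the overlapping windows contributing total weight exactly one. The finitely many boundary indices are controlled by $\int_0^{2\tau}\|\pmb{\mathcal{D}}_t\bold{U}\|^2=o(1)$. The same reasoning applies verbatim to $\mathrm{T}_j,\mathrm{a},\mathrm{V},\mathrm{b}$.

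The delicate part, and the main obstacle, is the two $\mathcal{W}^{1,3}$-type terms $\pmb{\mathcal{R}}_L^{(3)},\pmb{\mathcal{R}}_L^{(4)}$ for $\bold{W},\bold{c}$, which involve cubes and two-dimensional cell averages. I would bound the zeroth-order piece $\tau^2\sum_{l,k}\|\bar{\mathrm{W}}_L^{l,k}\|^3$ by a Riemann sum converging to $\int_0^1\int_0^1\|\bold{W}\|^3$, and handle each of the two variation pieces by expressing a difference of adjacent cell averages as a double integral of $\pmb{\mathcal{D}}_t\bold{W}$ (respectively $\pmb{\mathcal{D}}_s\bold{W}$) over shifted cells, so that the corresponding difference quotient is again a two-dimensional local average; Jensen's inequality for the convex map $\|\cdot\|^3$ over this averaging measure, together with the strong $\mathcal{L}^3$ convergence of the averages, yields $\limsup\le\int_0^1\int_0^1(\|\pmb{\mathcal{D}}_t\bold{W}\|^3+\|\pmb{\mathcal{D}}_s\bold{W}\|^3)$. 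The care required is in keeping the two averaging directions consistent after the flip symmetrization, in tracking the sharp unit total weight of the overlapping 2D windows, and in showing that the boundary cells (with truncated windows) contribute only $o(1)$; here the symmetry of $\bold{W}$ and its $\mathcal{W}^{1,3}$ regularity make everything controllable, and this estimate is precisely the limsup counterpart of the liminf bound in Lemma~\ref{proposition: limit-inf-2}. Collecting the data-loss convergence with these term-by-term bounds gives $\limsup_{L\to\infty}\tilde{\pmb{\mathfrak{L}}}_{\mathcal{S};L}(\pmb{\Theta}_L)\le\tilde{\pmb{\mathfrak{L}}}(\pmb{\Theta})$, completing the proof.
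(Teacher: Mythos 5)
Your proposal is correct and takes essentially the same route as the paper's proof: the same recovery sequence \eqref{recovery sequence}, uniform convergence of the (bi)linear interpolants via continuity (with the Morrey-type embedding of $\mathcal{W}^{1,3}$ on the square for $\bold{W},\bold{c}$), data-loss convergence through Proposition~\ref{forward convergence} and assumption $(A_3)$, and term-by-term $\limsup$ bounds for the regularizers. The only cosmetic difference is that you bound the difference-quotient sums by writing consecutive averages as Jensen-type probability averages of the weak derivative and tracking the unit total weight, whereas the paper reaches the same inequality via H\"older's inequality combined with the standard Sobolev translation estimate \cite[Theorem 10.55]{leoni2009first}; these are two formulations of the same difference-quotient bound.
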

\begin{proof}
We only need to consider the case when $\tilde{\pmb{\mathfrak{L}}}(\pmb{\Theta}) < +\infty$, i.e., $\pmb{\Theta} \in \Omega_{\pmb{\Theta}}$. For every given $\pmb{\Theta} \in \Omega_{\pmb{\Theta}}$, let $\{\Theta_L\}_L$ be given by (\ref{recovery sequence}) associated with $\pmb{\Theta}$, and let $\bar{\mathrm{W}}_L := \bold{flip}(\mathrm{W}_L), \bar{\mathrm{c}}_L := \bold{flip}(\bold{c}_L)$, $\pmb{\Theta}_L := \hat{\pmb{\mathcal{I}}}_{L} \Theta_{L}$. We next prove that $\pmb{\Theta}_L$ meets the requirements.

We show that the sequence $\{\pmb{\Theta}_L\}_{L \in \mathbb{N}}$ converges to $\pmb{\Theta}$ in $\mathcal{C}_{\pmb{\Theta}}$ firstly. Note that we only need to show the convergence of $\{\hat{\pmb{\mathcal{I}}}_{L} \mathrm{U}_L\}_L, \{\hat{\pmb{\mathcal{BI}}}_{L} \bar{\mathrm{W}}_L\}_L$, as the other cases are similar.  For $\tilde{t} \in [t_L^{l-1}, t_L^{l}]$, $2\leq l \leq L-1$, we have
\begin{equation}
	\begin{aligned}
		\|(\hat{\pmb{\mathcal{I}}}_{L} \mathrm{U}_L)(\tilde{t})  \!-\! \bold{U}(\tilde{t})\|
		%  = & \|  \mathrm{u}_L^{l-1} + (\tilde{t}-t_L^{l-1}  ) \frac{\mathrm{U}_L^{l} -\mathrm{u}_L^{l-1}}{\tau}  -  \bold{U}(\tilde{t}) \| \\
		\leq & \frac{t_L^l - \tilde{t}}{\tau} \| \mathrm{U}_L^{l-1} - \bold{U}(\tilde{t}) \| + \frac{\tilde{t} - t_L^{l - 1}}{\tau} \| \mathrm{U}_L^{l} - \bold{U}(\tilde{t}) \| \\
		\leq & \Big\| \frac{1}{2\tau} \int_{t_L^{l-2}}^{t_L^{l}}\! \bold{U}(t) dt \!-\! \bold{U}(\tilde{t}) \Big\| \!+\! \Big\| \frac{1}{2\tau} \int_{t_L^{l\!-\!1}}^{t_L^{l\!+\!1}} \!\bold{U}(t) dt \!- \!\bold{U}(\tilde{t}) \Big\| 
		\!\leq \! 2 \pmb{\omega}_{\bold{U}}(2\tau), 
	\end{aligned}
 \nonumber
\end{equation}
where $\pmb{\omega}_{\bold{U}}$ is the modulus of continuity of $\bold{U}$. {Similarly, by Eq.(\ref{recovery sequence}), we have for $\tilde{t} \in [t_L^{0}, t_L^{1}]$ and $\tilde{t} \in [t_L^{L-1}, t_L^{L}]$ that} $\|(\hat{\pmb{\mathcal{I}}}_{L} \mathrm{U}_L)(\tilde{t})  - \bold{U}(\tilde{t})\| 
		\leq   2 \pmb{\omega}_{\bold{U}}(2\tau)$.
Consequently, we obtain $\hat{\pmb{\mathcal{I}}}_{L} \mathrm{U}_L \rightarrow \bold{U}$ in $\mathcal{C}([0,1], \mathbb{R}^{n\times n})$.
In addition, by the definition of $\mathrm{W}_L$ in Eq.\eqref{recovery sequence} and a similar proof of Morrey's inequality \cite[Theorem 11.35]{leoni2009first}, there exists constants $L_{\bold{W}}$ and  $\tilde{L}_{\bold{W}}$ independent of $\tau$, such that
\begin{align}
	&  \|(\hat{\pmb{\mathcal{BI}}}_{L} \bar{\mathrm{W}}_L)(\tilde{t}, \tilde{s})  - \bold{W}(\tilde{t}, \tilde{s})\| \nonumber\\
	\leq &\frac{t^k_L\! -\!\tilde{s} }{\tau}  \frac{t^l_L\!-\!\tilde{t}}{\tau} \Big\| \!  \frac{1}{\tau ^2} \!\int_{t_L^{l\!-\!2}}^{t_L^{l\!-\!1}} \int_{t_L^{k\!-\!2}}^{t_L^{k\!-\!1}} \bold{W}(t,s)dt ds \!-\!\bold{W}(\tilde{t}, \tilde{s}) \Big\| \!  + \!\frac{t^k_L \!-\!\tilde{s} }{\tau} \frac{\tilde{t}\!-\!t^{l\!-\!1}_L}{\tau} \! \Big\| \frac{1}{\tau ^2} \!\int_{t_L^{l-1}}^{t_L^{l}} \int_{t_L^{k-2}}^{t_L^{k-1}}   \nonumber \\
	& \bold{W}(t,s)dt ds -\bold{W}(\tilde{t}, \tilde{s})  \Big\| + \frac{\tilde{s}-t^{k-1}_L}{\tau} \frac{t^l_L-\tilde{t}}{\tau} \Big\|  \frac{1}{\tau ^2} \int_{t_L^{l-2}}^{t_L^{l-1}} \int_{t_L^{k-1}}^{t_L^{k}} \bold{W}(t,s)dt ds -\bold{W}(\tilde{t}, \tilde{s}) \Big\| \nonumber \\
    & + \frac{\tilde{s}-t^{k-1}_L}{\tau}  \frac{\tilde{t}-t^{l-1}_L}{\tau} \Big\|  \frac{1}{\tau ^2} \int_{t_L^{l-1}}^{t_L^{l}} \int_{t_L^{k-1}}^{t_L^{k}} \bold{W}(t,s)dt ds -\bold{W}(\tilde{t}, \tilde{s})  \Big\| \nonumber\\
	\leq &  \frac{1}{\tau ^2} \int_{t_L^{l-2}}^{t_L^{l}} \int_{t_L^{k-2}}^{t_L^{k}} \| \bold{W}(t,s) -\bold{W}(\tilde{t}, \tilde{s}) \|dt ds \nonumber\\
	\leq &   \frac{1}{\tau ^2}  \cdot \int_{t_L^{l-2}}^{t_L^{l}} \int_{t_L^{k-2}}^{t_L^{k}} L_{\bold{W}} |(t,s) -(\tilde{t}, \tilde{s})|^{1/3} dt ds \nonumber \\
	\leq & \tilde{L}_{\bold{W}} \cdot \tau ^ {1/3}, \ \forall \tilde{t} \in [t_L^{l-1}, t_L^{l}], \tilde{s} \in [t_L^{k-1}, t_L^{k}], 2 \leq k \leq l \leq L. \nonumber
\end{align}
Similarly, by Eq.\eqref{recovery sequence} and a similar proof of Morrey's inequality \cite[Theorem 11.35]{leoni2009first} again, we have for $ \tilde{t}\! \in \! [t_L^{l-1}, t_L^{l}], \tilde{s}\! \in\! [t_L^{0}, t_L^{1}], 1 \!\leq \!l \leq\! L,$ and $ \tilde{t}\! \in \![t_L^{0}, t_L^{1}], \tilde{s}\! \in\! [t_L^{k-1}, t_L^{k}], 1 \!\leq k \!\leq \! L, $ that $\|(\hat{\pmb{\mathcal{BI}}}_{L} \bar{\mathrm{W}}_L)(\tilde{t}, \tilde{s}) \! -\! \bold{W}(\tilde{t}, \tilde{s})\| 
\!\leq \!  \tilde{L}_{\bold{W}} \tau ^ {1/3}$. 
Hence, $\hat{\pmb{\mathcal{BI}}}_{L} (\bold{flip}(\mathrm{W}_{L}))$ $ \rightarrow \bold{W}$ in $\mathcal{L}^{\infty}([0,1]\times[0,1], \mathbb{R}^{n\times n})$ owing to the symmetry of $\bold{W}$ and $\bold{flip}(\mathrm{W}_{L})$.

We next show $\limsup_{L \rightarrow \infty}\pmb{\mathcal{R}}_L(\Theta_L) \leq \pmb{\mathcal{R}}(\pmb{\Theta})$.
Note that it is enough to show the following inequalities
\begin{itemize}
    \item[(i)] $
\limsup_{L \rightarrow \infty} \pmb{\mathcal{R}}_L^{(1)}(\mathrm{T}_{j,L})  \leq \|\bold{T}_j\|^2_{\mathcal{H}^{1}((0, 1); \mathbb{R}^{n\times n})}$, $j=1,2,3$;
\item[(ii)] $
\limsup_{L \rightarrow \infty} \pmb{\mathcal{R}}_L^{(1)}(\mathrm{U}_L)  \!\leq\! \|\bold{U}\|^2_{\mathcal{H}^{1}((0, 1); \mathbb{R}^{n\times n})}$, $\limsup_{L \rightarrow \infty} \pmb{\mathcal{R}}_L^{(2)}(\mathrm{a}_L)  \!\leq \!\|\bold{a}\|^2_{\mathcal{H}^{1}((0, 1); \mathbb{R}^{n})}$;
\item[(iii)] $\limsup_{L \rightarrow \infty} \pmb{\mathcal{R}}_L^{(1)}(\mathrm{V}_L) \! \leq \!\|\bold{V}\|^2_{\mathcal{H}^{1}((0, 1); \mathbb{R}^{n\times n})}$, $\limsup_{L \rightarrow \infty} \pmb{\mathcal{R}}_L^{(2)}(\mathrm{b}_L)\!  \leq \!\|\bold{b}\|^2_{\mathcal{H}^{1}(\!(0, 1); \mathbb{R}^{n}\!)}$;
\item[(iv)] $\limsup_{L \rightarrow \infty} \pmb{\mathcal{R}}_L^{(3)}(\bold{flip}(\mathrm{W}_{L}))  \leq \|\bold{W}\|^3_{\mathcal{W}^{1,3}((0, 1)\times(0, 1); \mathbb{R}^{n\times n})}$, \\
 $ \limsup_{L \rightarrow \infty} \pmb{\mathcal{R}}_L^{(4)}(\bold{flip}(\mathrm{c}_{L})) \leq \|\bold{c}\|^3_{\mathcal{W}^{1,3}((0, 1)\times(0, 1); \mathbb{R}^{n})}$.
\end{itemize}
Since the proof of parts (i) - (iii) are analogous and the two inequalities in (iv) are analogous, we only need to show $\limsup_{L \rightarrow \infty} \pmb{\mathcal{R}}_L^{(1)}(\mathrm{U}_L)  \leq \|\bold{U}\|^2_{\mathcal{H}^{1}((0, 1); \mathbb{R}^{n\times n})}$ and  $\limsup_{L \rightarrow \infty} \pmb{\mathcal{R}}_L^{(3)}(\bold{flip}(\mathrm{W}_{L}))  \leq \|\bold{W}\|^3_{\mathcal{W}^{1,3}((0, 1)\times(0, 1); \mathbb{R}^{n\times n})}$.

We now prove $\limsup_{L \rightarrow \infty} \pmb{\mathcal{R}}_L^{(1)}(\mathrm{U}_L)  \leq \|\bold{U}\|^2_{\mathcal{H}^{1}((0, 1); \mathbb{R}^{n\times n})}.$
On one hand, 
\begin{equation}
	\begin{aligned}
		\Big | \tau \sum_{l=1}^{L}\|\mathrm{U}_{L}^{l}\|^{2} \!- \!\int_{0}^{1} \| \bold{U}(t)\|^2 dt  \Big | 
		%=   \Big | \tau \sum_{l=1}^{L}\|\mathrm{U}_{L}^{l}\|^{2} - \sum_{l=1}^{L} \int_{t_L^{l-1}}^{t_L^{l}} \| \bold{U}(t)\|^2 dt  \Big |
%	\textcolor{red}{\leq} & \tau \sum_{l=1}^{L} (\|\mathrm{U}_{L}^{l}\| +  \|\bold{U}(\xi_L^l) \| ) \cdot \|\mathrm{U}_{L}^{l}-  \bold{U}(\xi_L^l)  \| \\
		= &	 \tau \sum_{l=1}^{L-1} \big(\|\mathrm{U}_{L}^{l}\| \!+ \! \|\bold{U}(\xi_L^l) \| \big) \cdot \Big\|\frac{1}{2\tau} \int_{t_L^{l-1}}^{t_L^{l+1}} \bold{U}(t) dt \!- \! \bold{U}(\xi_L^l)  \Big\|  \\
		& +  \tau (\|\mathrm{U}_{L}^{L}\| +  \|\bold{U}(\xi_L^L) \| ) \cdot \Big\| \frac{1}{\tau} \int_{t_L^{L-1}}^{t_L^L} \bold{U}(t) dt-  \bold{U}(\xi_L^L)  \Big\| \\
	%	\textcolor{red}{\leq} & \tau \sum_{l=1}^{L-1} 2 \|\bold{U}\|_C \cdot \pmb{\omega}_{\bold{U}}(2\tau) + 2 \tau\|\bold{U}\|_C \cdot \pmb{\omega}_{\bold{U}}(\tau) \\
		\leq & 2 \cdot \|\bold{U}\|_C \cdot \pmb{\omega}_{\bold{U}}(2\tau) + 2 \tau\|\bold{U}\|_C \cdot \pmb{\omega}_{\bold{U}}(\tau), %\rightarrow 0, {\rm \ as \ }   \tau \rightarrow 0.
	\end{aligned}
	\nonumber
\end{equation}     
where  $\xi_L^l \in {[t_L^{l-1}, t_L^{l}}]$, the first inequality is due to the mean value theorem for integrals.
On the other hand, by H{\"o}lder's inequality and \cite[Theorem 10.55]{leoni2009first}, we get
\begin{align}
	&	\frac{1}{\tau} \sum_{l=1}^{L}\|\mathrm{U}_{L}^{l} - \mathrm{U}_{L}^{l-1}\|^{2}  \\
	%	= & \frac{1}{\tau} \Big\| \frac{1}{2\tau} \int_{t_L^{0}}^{t_L^{2}} \bold{U}(t) dt - \frac{1}{\tau} \int_{t_L^{0}}^{t_L^{1}} \bold{U}(t) dt \Big\|^{2} + \frac{1}{\tau} \Big\| \frac{1}{\tau} \int_{t_L^{L-1}}^{t_L^{L}} \bold{U}(t) dt - \frac{1}{2\tau} \int_{t_L^{L-2}}^{t_L^{L}} \bold{U}(t) dt \Big\|^{2}  \\
	%	& + \frac{1}{\tau} \sum_{l=2}^{L-1} \Big\| \frac{1}{2\tau} \int_{t_L^{l-1}}^{t_L^{l+1}} \bold{U}(t) dt - \frac{1}{2\tau} \int_{t_L^{l-2}}^{t_L^{l}} \bold{U}(t) dt \Big\|^{2}   \\
	%	= & \frac{1}{\tau} \Big\|  \frac{1}{2\tau} \Big( \int_{t_L^{1}}^{t_L^{2}} \bold{U}(t) dt - \int_{t_L^{0}}^{t_L^{1}} \bold{U}(t) dt\Big)  \Big\|^{2}  + \frac{1}{\tau} \Big\|  \frac{1}{2\tau} \Big( \int_{t_L^{L-1}}^{t_L^{L}} \bold{U}(t) dt - \int_{t_L^{L-2}}^{t_L^{L-1}} \bold{U}(t) dt\Big)  \Big\|^{2}  \\
	%	& + \frac{1}{\tau} \sum_{l=2}^{L-1} \Big\|  \frac{1}{2\tau} \Big(  \int_{t_L^{l}}^{t_L^{l+1}} \bold{U}(t) dt + \int_{t_L^{l-1}}^{t_L^{l}} \bold{U}(t) dt - \int_{t_L^{l-1}}^{t_L^{l}} \bold{U}(t) dt -  \int_{t_L^{l-2}}^{t_L^{l-1}} \bold{U}(t) dt\Big)  \Big\|^{2}    \\
	%	\textcolor{red}{\leq} & \frac{1}{4\tau^2} \int_{t_L^1}^{t_L^2} \| \bold{U}(t) - \bold{U}(t-\tau) \|^2  dt + \frac{1}{4\tau^2} \int_{t_L^{L-1}}^{t_L^L} \| \bold{U}(t) - \bold{U}(t-\tau) \|^2  dt  \\
	%	& + \frac{1}{4\tau^3} \sum_{l=2}^{L-1} \Big[2 \Big\|  \int_{t_L^{l}}^{t_L^{l+1}} \bold{U}(t) - \bold{U}(t-\tau) dt \Big\| ^2 + 2 \Big\|  \int_{t_L^{l-1}}^{t_L^{l}} \bold{U}(t) - \bold{U}(t-\tau) dt \Big\| ^2  \Big] \\
    {\leq} & \frac{1}{4\tau^2} \int_{t_L^1}^{t_L^2} \| \bold{U}(t) \!-\! \bold{U}(t-\tau) \|^2  dt \!+\! \frac{1}{4\tau^2} \int_{t_L^{L-1}}^{t_L^L} \| \bold{U}(t) \!-\! \bold{U}(t-\tau) \|^2  dt \nonumber \\
		& + \frac{1}{2\tau^2} \!\sum_{l=2}^{L-1} \!\Big[\!  \int_{t_L^{l}}^{t_L^{l+1}} \big\| \bold{U}(t)\! - \!\bold{U}(t\!-\!\tau) \big\|^2\!dt  \!+ \! \int_{t_L^{l-1}}^{t_L^{l}}  \big\| \bold{U}(t) \!-\! \bold{U}(t-\tau)\big\| ^2 \!dt   \!\Big] \nonumber\\
		= & \frac{1}{\tau^2} \Big[ \frac{3}{4} \int_{t_L^1}^{t_L^2} \| \bold{U}(t) \!-\! \bold{U}(t-\tau) \|^2  dt \!+ \!\sum_{l=2}^{L-2}\int_{t_L^{l}}^{t_L^{l+1}}  \left\| \bold{U}(t)\! - \!\bold{U}(t-\tau)\right\| ^2 \! dt \nonumber\\
		& \quad  + \frac{3}{4} \int_{t_L^{L-1}}^{t_L^L} \| \bold{U}(t) - \bold{U}(t-\tau) \|^2  dt   \Big]  \nonumber \\
		\leq & {L^2} \int_{1/L}^{1} \| \bold{U}(t) - \bold{U}(t-\frac{1}{L}) \|^2  dt \leq  \int_{0}^{1} \| \pmb{\mathcal{D}}_t(\bold{U})(t)\|^2  dt. \nonumber
\end{align}
Combining the above two inequalities, we have
	\begin{align}
		\limsup_{L \rightarrow \infty} \pmb{\mathcal{R}}_L^{(1)}(\mathrm{U}_L) \leq &\lim_{L \rightarrow \infty} \tau \sum_{l=1}^{L}\|\mathrm{U}_{L}^{l}\|^{2} + \limsup_{L \rightarrow \infty} \frac{1}{\tau} \sum_{l=1}^{L}\|\mathrm{U}_{L}^{l} - \mathrm{U}_{L}^{l-1}\|^{2} \nonumber \\ 
		\leq & \int_{0}^{1} \| \bold{U}(t)\|^2 dt + \int_{0}^{1} \| \pmb{\mathcal{D}}_t(\bold{U})(t)\|^2  dt = \|\bold{U}\|^2_{\mathcal{H}^1((0, 1); \mathbb{R}^{n\times n})}. \nonumber
	\end{align}
We next show $\limsup_{L \rightarrow \infty} \pmb{\mathcal{R}}_L^{(3)}(\bold{flip}(\mathrm{W}_{L}))  \leq \|\bold{W}\|^3_{\mathcal{W}^{1,3}((0, 1)\times(0, 1); \mathbb{R}^{n\times n})}$. A direct calculation yields
\begin{align*}
	\pmb{\mathcal{R}}_L^{(3)}(\bar{\mathrm{W}}_L)\!=  & \tau^2 \!\sum_{l=1}^{L} \sum_{k=1}^{L}\|\bar{\mathrm{W}}_{L}^{l,k}\|^{3} \!+\!   \tau^{-1} \! \Big(\!\sum_{l=1}^{L} \sum_{k=1}^{L} \! \|\!\bar{\mathrm{W}}_{L}^{l,k}\! -\! \bar{\mathrm{W}}_{L}^{l-1,k}\|^{3} \! + \! \sum_{l=1}^{L} \!\sum_{k=1}^{L} \! \|\!\bar{\mathrm{W}}_{L}^{l,k}\! -\! \bar{\mathrm{W}}_{L}^{l,k-1}\!\|^{3} \!\Big) \\
%	\textcolor{blue}{=} & \tau^2 \sum_{l=1}^{L} \sum_{k=1}^{L}\|\bar{\mathrm{W}}_{L}^{l,k}\|^{3} +   \tau^{-1} \Big(\sum_{l=2}^{L} \sum_{k=1}^{L}  \|\bar{\mathrm{W}}_{L}^{l,k} - \bar{\mathrm{W}}_{L}^{l-1,k}\|^{3} + \sum_{l=1}^{L} \sum_{k=2}^{L}  \|\bar{\mathrm{W}}_{L}^{l,k} - \bar{\mathrm{W}}_{L}^{l,k-1}\|^{3} \Big) \\
	 = &\tau^2 \!\sum_{l=1}^{L}\! \sum_{k=1}^{L} \! \Big \|\frac{1}{\tau ^2} \!\int_{t_L^{l-1}}^{t_L^{l}} \int_{t_L^{k-1}}^{t_L^{k}} \!\bold{W}(t,s)dt ds\Big\|^3 \!+ \!\tau^{-1} \Big(  \sum_{l=2}^{L} \!\sum_{k=1}^{L} \! \Big\|\! \frac{1}{\tau ^2}\int_{t_L^{l-1}}^{t_L^{l}} \int_{t_L^{k\!-\!1}}^{t_L^{k}} \!\bold{W}(t,s) \\
	 &  -\! \bold{W}(t-\tau,s) dt ds \Big\|^{3} \! +  \!\sum_{l=1}^{L} \sum_{k=2}^{L} \! \Big\|\frac{1}{\tau ^2}\int_{t_L^{l-1}}^{t_L^{l}} \!\int_{t_L^{k-1}}^{t_L^{k}} \bold{W}(t,s)\! -\! \bold{W}(t,s-\tau) dt ds\Big\|^{3} \Big) \\
	\leq & \int_{0}^{1} \int_{0}^{1} \|\bold{W}(t,s)\|^3dt ds + {L^3} \int_{1/L}^{1} \int_{0}^{1} \|\bold{W}(t,s) - \bold{W}(t-1/L,s)\|^3dt ds \\
	& + {L^3} \int_{0}^{1}\int_{1/L}^{1} \|\bold{W}(t,s) - \bold{W}(t,s-1/L)\|^3 dt ds \\
	\leq & \|\bold{W}\|^3_{\mathcal{W}^{1,3}((0, 1) \times (0, 1); \mathbb{R}^{n\times n})},
\end{align*}
where the first inequality is owing to the H{\"o}lder's inequality and the second inequality is due to \cite[Theorem 10.55]{leoni2009first}.
Taking $L \rightarrow \infty$, we have $ \limsup_{L \rightarrow \infty} \pmb{\mathcal{R}}_L^{(3)}(\bar{\mathrm{W}}_L) \leq \|\bold{W}\|^3_{\mathcal{W}^{1,3}((0, 1) \times (0, 1); \mathbb{R}^{n\times n})}$.

The condition (\ref{convergence of parameter}) is satisfied with the given $\{\Theta_L\}_L$ due to the convergence of $\{\pmb{\Theta}_L\}_{L \in \mathbb{N}}$ and the proof of Lemma~\ref{lemma: liminf condition}. Therefore, we get 
\begin{equation}
	\frac{1}{M} \sum\limits_{m=1}^{M} \pmb{\ell} (\mathrm{x}^L(\mathrm{d}_m; \Theta_L), \mathrm{g}_m) \rightarrow \frac{1}{M} \sum\limits_{m=1}^{M} \pmb{\ell} (\bold{x}(1; \mathrm{d}_m;\pmb{\Theta}) , \mathrm{g}_m)  \text{ as } L \rightarrow \infty,
	\label{equation: limsup-data}
\end{equation}
by the continuity of $\pmb{\ell}$ (assumption (${A}_3$)) and Proposition~\ref{forward convergence}. Thus, combining $\limsup_{L \rightarrow \infty}\pmb{\mathcal{R}}_L(\Theta_L) \leq \pmb{\mathcal{R}}(\pmb{\Theta})$ and Eq.(\ref{equation: limsup-data}) completes the proof.
\end{proof}

Now, we are ready to give a proof for Theorem~\ref{theorem: main results}.
\begin{proof}
    (\textbf{Proof of Theorem~\ref{theorem: main results}})
    By Proposition~\ref{proposition: existence}, the optimal solutions of ($\mathcal{P}$) and $(\mathcal{P}_{L})$ exist in $\Omega_{\pmb{\Theta}}$ and $\Omega_{\Theta;{L}}$ respectively. Consequently, we obtain the existence of minimizers in $\mathcal{C}_{\pmb{\Theta}}$ of $\tilde{\pmb{\mathfrak{L}}}$ and $\tilde{\pmb{\mathfrak{L}}}_{\mathcal{S}; L}$, $L \in \mathbb{N}$, owing to Lemma~\ref{lemma: solution tilde_L and L_L}.
	Moreover, $\tilde{\pmb{\mathfrak{L}}}_{\mathcal{S}; L}$ $\Gamma$-converges to $\tilde{\pmb{\mathfrak{L}}}$ in $\mathcal{C}_{\pmb{\Theta}}$ due to Lemma~\ref{lemma: liminf condition} and Lemma~\ref{lemma: limsup}.
	Using the definition of $\tilde{\pmb{\mathfrak{L}}}_{\mathcal{S}; L}$,
	if  $\pmb{\Theta}_L^{*} \in \arg \min_{\pmb{\Theta} \in \mathcal{C}_{\pmb{\Theta}}}\tilde{\pmb{\mathfrak{L}}}_{\mathcal{S}; L}(\pmb{\Theta})$, 
 then there exists some $\Theta_L^* = ({\mathrm{T}}_L^*,{\mathrm{U}}_L^*, {\mathrm{a}}_L^*, {\mathrm{V}}_L^*, {\mathrm{b}}_L^*, {\mathrm{W}}_L^*, {\mathrm{c}}_L^*)  \in \Omega_{\Theta;{L}}$ such that $\pmb{\Theta}_L^{*} = \hat{\pmb{\mathcal{I}}}_{L} \Theta_L^{*}$ 
 % = \big(\hat{\pmb{\mathcal{I}}}_{L} {\mathrm{T}}_L^*,\hat{\pmb{\mathcal{I}}}_{L} {\mathrm{U}}_L^*, \hat{\pmb{\mathcal{I}}}_{L}{\mathrm{a}}_{N}^*, \hat{\pmb{\mathcal{I}}}_{L} {\mathrm{V}}_L^*, \hat{\pmb{\mathcal{I}}}_{L} {\mathrm{b}}_L^*, $ $ \hat{\pmb{\mathcal{BI}}}_{L}  \bold{flip}(\mathrm{W}_L^*), \hat{\pmb{\mathcal{BI}}}_{L}  \bold{flip}(\mathrm{c}_L^*)\big)
 for all $L \in \mathbb{N}$. It then follows from Lemma~\ref{lemma: relative compactness} that 
  there exists a subsequence of $\{\pmb{\Theta}_L^{*}\}_L$ converging to a $\tilde{\pmb{\Theta}} \in \Omega_{\pmb{\Theta}}$ in $\mathcal{C}_{\pmb{\Theta}}$. %$\{\pmb{\Theta}_L^{*}\}_L$ is relatively compact in $\mathcal{C}_{\pmb{\Theta}}$.
   Using the property of $\Gamma$-convergence \cite[Theorem~3.2]{thorpe2018deep}, we obtain
	$$\min_{\pmb{\Theta} \in \mathcal{C}_{\pmb{\Theta}}} \tilde{\pmb{\mathfrak{L}}}_{\mathcal{S}; L}(\pmb{\Theta}) \rightarrow \min_{\pmb{\Theta} \in \mathcal{C}_{\pmb{\Theta}}}\tilde{\pmb{\mathfrak{L}}}(\pmb{\Theta}), \ {\rm as } \ L \rightarrow \infty.$$
In addition, for any subsequence $\{\pmb{\Theta}_{L_k}^{*}\}_{L_k \in \mathbb{N}}$ converging to some $\pmb{\Theta}^{*}$ in $\mathcal{C}_{\pmb{\Theta}}$, we have $\pmb{\Theta}^{*} \in \arg \min_{\pmb{\Theta} \in \mathcal{C}_{\pmb{\Theta}} }\tilde{\pmb{\mathfrak{L}}}(\pmb{\Theta})$.
	
Therefore, for any sequence $\{{\Theta}_L^*\}_L$ with $\Theta_L^*$ being the minimiser of ($\mathcal{P}_L$), we derive 
 $$
 \begin{aligned}
  & \min _{\Theta_L \in \Omega_{{\Theta};L}}{\pmb{\mathfrak{L}}}_{\mathcal{S}; L}(\Theta_L) = {\pmb{\mathfrak{L}}}_{\mathcal{S}; L}(\Theta_L^*) = \tilde{\pmb{\mathfrak{L}}}_{\mathcal{S}; L}(\pmb{\Theta}_L^{*}) = \min_{\pmb{\Theta} \in \mathcal{C}_{\pmb{\Theta}}}\tilde{\pmb{\mathfrak{L}}}_{\mathcal{S}; L}(\pmb{\Theta}) \\
   \rightarrow & \min_{\pmb{\Theta} \in \mathcal{C}_{\pmb{\Theta}}} \tilde{\pmb{\mathfrak{L}}}(\pmb{\Theta}) = \tilde{\pmb{\mathfrak{L}}}(\pmb{\Theta}^{*}) = \pmb{\mathfrak{L}}(\pmb{\Theta}^{*}) = \min_{\pmb{\Theta} \in \Omega_{\pmb{\Theta}}} \pmb{\mathfrak{L}}(\pmb{\Theta}), \text{ as } L \rightarrow \infty.
 \end{aligned}
 $$
Besides, $\hat{\pmb{\mathcal{I}}}_{L} \Theta_L^{*} \in \arg \min_{\pmb{\Theta} \in \mathcal{C}_{\pmb{\Theta}}}\tilde{\pmb{\mathfrak{L}}}_{\mathcal{S}; L}(\pmb{\Theta})$
	by Lemma~\ref{lemma: solution tilde_L and L_L}. 
	It then follows that  
	any accumulation point of $\{ \hat{\pmb{\mathcal{I}}}_{L} \Theta_L^{*}\}_{L \in \mathbb{N}}$ minimizes $\tilde{\pmb{\mathfrak{L}}}$, and thus is an optimal solution of ($\mathcal{P}$).
\end{proof}

\section{Experiments}
\label{Sec:5}
In this section, we conduct experiments to test the behavior of the training loss of the DNL framework to verify our convergence results in some sense. We investigate how it changes as the layer number $L$ increases on image classification tasks on the SVHN and CIFAR-10 datasets.
The SVHN dataset contains approximately 73,257 training images and 26,032 test images, while CIFAR-10 consists of 50,000 training images and 10,000 test images. Each image has 3 color channels and a resolution of $32 \times 32$ pixels, yielding 3,072 features per sample. We use the standard dataset splits without additional preprocessing.

We implement the standard DenseNet as an instance of the DNL framework to examine the deep-layer limit behavior established in our theoretical result. Each network consists of four DNL blocks, each of which has $L$ layers with a growth rate of 6.
For the loss function in (\ref{discrete-time-control problem P_L}), we take $\pmb{\ell}$ as the cross entropy function, i.e.,
$$
\pmb{\ell}(\hat{\mathrm{g}},\mathrm{g}) = - 10^6 \times \sum_{i=1}^{n}\mathrm{g}[i] \log(\hat{\mathrm{g}}[i]), \ \hat{\mathrm{g}},\mathrm{g} \in \mathbb{R}^n.
$$
All models are trained for 50 epochs on the SVHN dataset and 150 epochs on CIFAR10 using SGD with an initial learning rate of 0.01 and momentum of 0.9. 
To reduce the influence of randomness, we repeat the training process five times for each hyperparameter setting and record the average training loss of the five trained models during testing.
Since our theoretical results concern the training problem rather than generalization, we only report training losses.
All experiments are implemented in PyTorch on an NVIDIA GeForce RTX 3090 GPU. %\textcolor{blue}{The code is available at $\href{https://github.com/Huangjsh15/Dynamical-ResNet.git}{https://github.com/Huangjsh15/Dynamical-ResNet.git}$.}

We test the DNL block with layer numbers $L=6,8,10,12,14,16$, yielding a step size $\tau_L = 1/L$.
Figure~\ref{figure: classification-deep-limit} shows training losses versus epoch number for different $L$. On both SVHN and CIFAR10, the losses converge as $L$ increases, i.e., larger $L$ yields smaller training loss, and the improvement diminishes with further increases of $L$. This observation is consistent with the convergence result established in Theorem~\ref{theorem: main results}.

\begin{figure}[htbp]
	\centering
         \subfigure[SVHN dataset]{
		\includegraphics[scale=0.22]{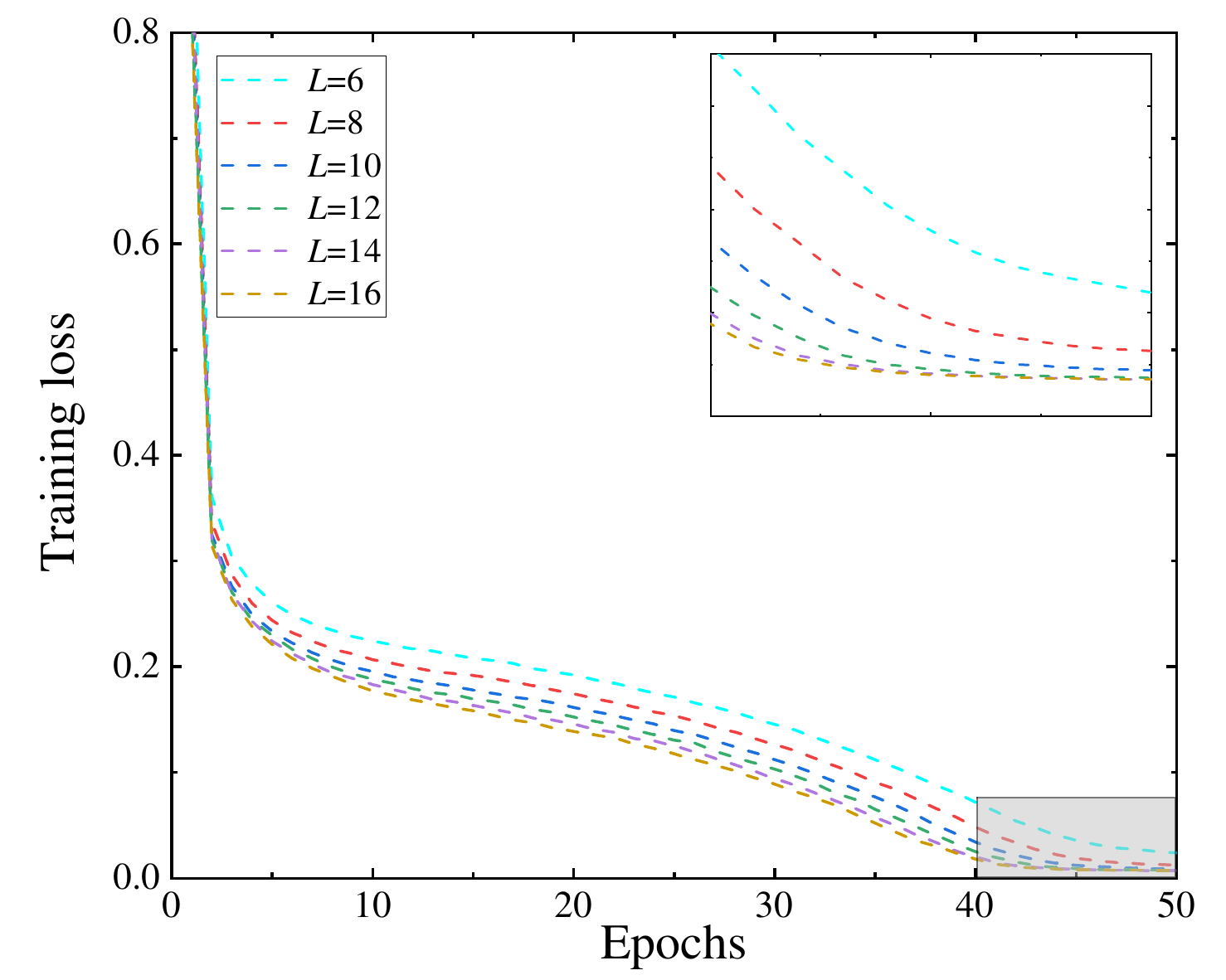} \label{1} 
	}
        \subfigure[CIFAR10 dataset]{
		\includegraphics[scale=0.22]{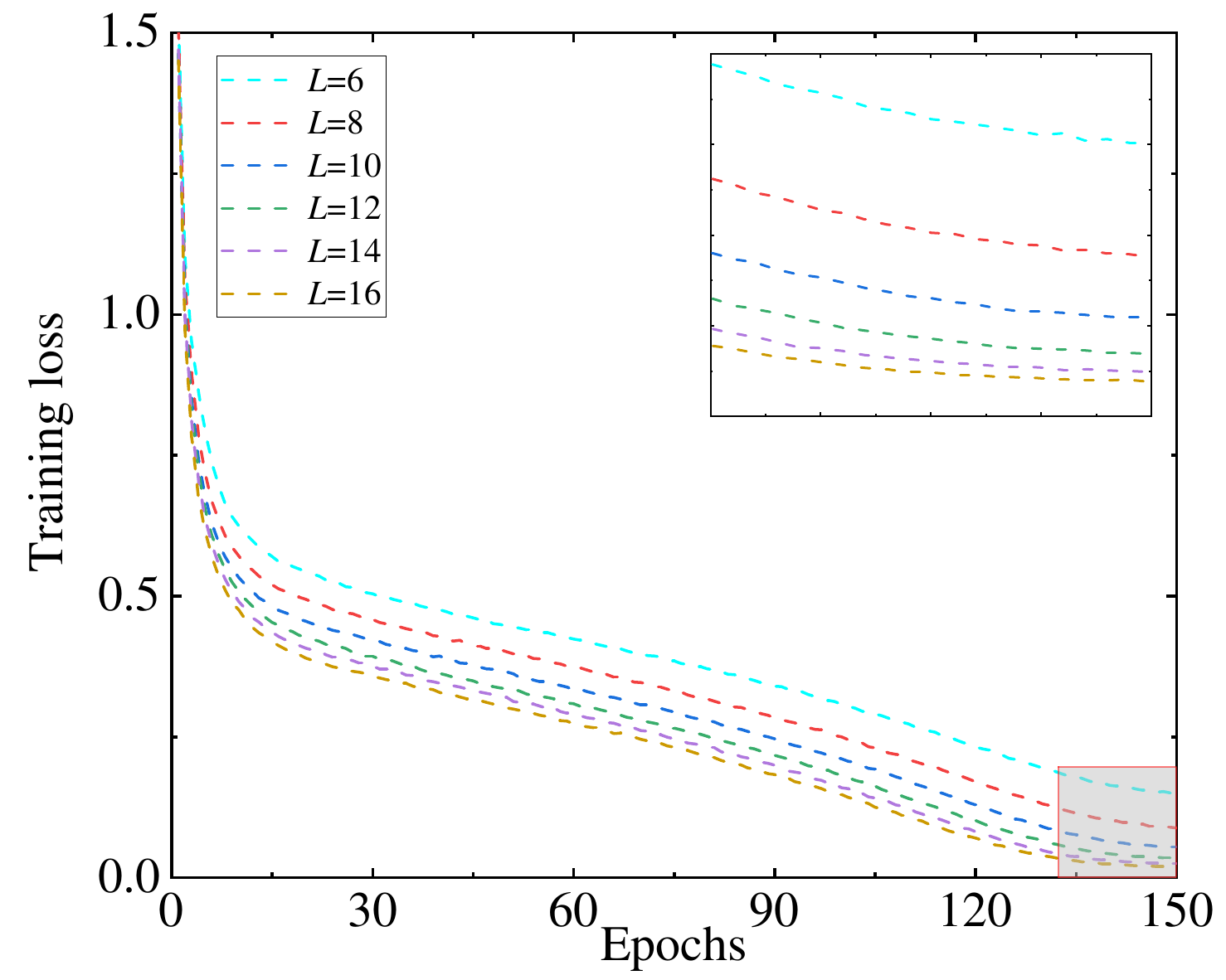} \label{2} 
	}
	\caption{The plot of training loss v.s. epoch number for the DenseNet with different layer numbers $L$ on the SVHN and CIFAR10 datasets. 
    The training losses decrease with increasing $L$, closely matching the theoretical convergent result.}
	\label{figure: classification-deep-limit}
\end{figure}

\section{Conclusion}
\label{Sec:6}
DNNs with dense layer connectivities form an important class of architectures in modern deep learning.
In this work, we provided a dynamical system modeling and convergence analysis for such DNN architectures. Our study was presented within a general DNL framework, containing both standard dense layer connections and general (local/non-local) feature transformations within layers.
In the deep-layer limit, we obtained a continuous-time formulation in the form of nonlinear integral equations and studied the associated learning problems from an optimal control perspective.
By employing a piecewise linear extension technique and $\Gamma$-convergence tool, we established convergence properties between their learning problems, including the convergence of optimal values and the subsequence convergence of minimizers.
Our theoretical result is applicable to a class of densely connected DNNs with various local/non-local feature transformations within layers.
These findings provide a theoretical foundation for understanding densely connected networks and suggest directions for future research, including the design of training algorithms and the study of loss landscapes based on continuous-time integral equations.

\backmatter

\bmhead{Supplementary information}
The supplementary material contains the complete proofs of Remark~\ref{remark: remark for assumptions}, Lemma~\ref{lemma: dis bound}, Proposition~\ref{proposition: forward-existence}, and Corollary~\ref{corollary: convergent rate} presented in the main text. Readers interested in the detailed derivations are referred to this file.

\bmhead{Acknowledgements}
This work was partially supported by the National Natural Science Foundation of China (grants 12271273, 11871035) and the Key Program (21JCZDJC00220) of the Natural Science Foundation of Tianjin, China.

\begin{appendices}

\section{$\Gamma$-convergence}
\label{appendix: definitions and inequalities}
\begin{definition}
	{\rm \cite{braides2002gamma}} Let $(\mathbb{U}, d_{\mathbb{U}})$ be a metric space, and $\pmb{\mathcal{E}}_{L}:\mathbb{U} \rightarrow [0,\infty]$ be a sequence of functionals. Then the sequence $\left\{\pmb{\mathcal{E}}_{L}\right\}_{L \in \mathbb{N}} \Gamma$-converges with respect to metric $d_{\mathbb{U}}$ to the functional $\mathcal{E}: \mathbb{U} \rightarrow[0, \infty]$ as $L \rightarrow \infty$ if the following inequalities hold:\\
	1.{ Liminf inequality}: For every ${\rm u} \in \mathbb{U}$ and every sequence $\left\{{\rm u}_L\right\}_{L \in \mathbb{N}}$ converging to ${\rm u}$,
	$$
	\liminf _{L \rightarrow \infty} \pmb{\mathcal{E}}_{L}\left({\rm u}_L\right) \geq \mathcal{E}({\rm u});
	$$
	2. {Limsup inequality}: For every ${\rm u} \in \mathbb{U}$ there exists a sequence $\left\{{\rm u}_L\right\}_{L \in \mathbb{N}}$ converging to ${\rm u}$ satisfying
	$$
	\limsup _{L \rightarrow \infty} \pmb{\mathcal{E}}_{L}\left({\rm u}_L\right) \leq \mathcal{E}({\rm u}) .
	$$
	\label{definition: gamma convergence}
\end{definition}
We say that $\mathcal{E}$ is the $\Gamma$-limit of the sequence of functionals $\left\{\pmb{\mathcal{E}}_{L}\right\}_{L \in \mathbb{N}}$ (with respect to the metric $d_{\mathbb{U}}$). 
A fundamental property of $\Gamma$-convergence is given in \cite[Theorem~3.2]{thorpe2018deep}.
\begin{theorem}
	{\rm \cite[Theorem~3.2]{thorpe2018deep}}
	Let $(\mathbb{U}, d_{\mathbb{U}})$ be a metric space and let $\pmb{\mathcal{E}}_{L}:\mathbb{U} \rightarrow [0,\infty]$ be a sequence of functionals. Let ${\rm u}_L$ be a sequence of almost minimisers for $\pmb{\mathcal{E}}_{L}$, i.e.
	$$\pmb{\mathcal{E}}_{L}\left({\rm u}_L\right) \leq \max \left\{\inf _{{\rm u} \in \mathbb{U}} \pmb{\mathcal{E}}_{L}\left({\rm u}_L\right)+\varepsilon_L,-\frac{1}{\varepsilon_L}\right\}$$ 
	for some $\varepsilon_L \rightarrow 0^{+}$. Assume that $\pmb{\mathcal{E}}$ is the $\Gamma$-limit of the sequence of functionals $\left\{\pmb{\mathcal{E}}_{L}\right\}_{L \in \mathbb{N}}$ and $\left\{{\rm u}_L\right\}_{L=1}^{\infty}$ are relatively compact. Then,
	$$	\inf _{{\rm u} \in \mathbb{U}} \pmb{\mathcal{E}}_{L}({\rm u}) \rightarrow \min _{{\rm u} \in \mathbb{U}} \pmb{\mathcal{E}}({\rm u}),$$
	where the minimum of $\pmb{\mathcal{E}}$ exists. Moreover, if a convergent subsequence ${\rm u}_{L_k} \rightarrow {\rm u}^*$, then ${\rm u}^*$ minimizes $\pmb{\mathcal{E}}$.
	\label{theorem: gamma-con basic thm}
\end{theorem}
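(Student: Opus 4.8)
The plan is to prove this classical fundamental theorem of $\Gamma$-convergence by sandwiching the sequence of infima $\{\inf_{{\rm u}\in\mathbb{U}}\pmb{\mathcal{E}}_L({\rm u})\}_L$ between $\inf_{{\rm u}\in\mathbb{U}}\pmb{\mathcal{E}}({\rm u})$ from above and below, using the two defining inequalities of $\Gamma$-convergence (Definition~\ref{definition: gamma convergence}) together with the relative compactness of the almost-minimisers. Since every $\pmb{\mathcal{E}}_L$ takes values in $[0,\infty]$ and $\varepsilon_L\to 0^+$, the first thing I would record is a simplification of the almost-minimiser condition: because $\pmb{\mathcal{E}}_L({\rm u}_L)\ge 0>-1/\varepsilon_L$, the maximum on the right-hand side is realised by its first argument whenever $\inf_{{\rm u}}\pmb{\mathcal{E}}_L({\rm u})<\infty$, so the condition reduces to $\pmb{\mathcal{E}}_L({\rm u}_L)\le \inf_{{\rm u}}\pmb{\mathcal{E}}_L({\rm u})+\varepsilon_L$.

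First I would establish the upper bound $\limsup_L \inf_{{\rm u}}\pmb{\mathcal{E}}_L({\rm u})\le \inf_{{\rm u}}\pmb{\mathcal{E}}({\rm u})$. Fixing an arbitrary ${\rm w}\in\mathbb{U}$, the limsup (recovery-sequence) inequality yields some ${\rm v}_L\to{\rm w}$ with $\limsup_L \pmb{\mathcal{E}}_L({\rm v}_L)\le\pmb{\mathcal{E}}({\rm w})$; combining with $\inf_{{\rm u}}\pmb{\mathcal{E}}_L({\rm u})\le\pmb{\mathcal{E}}_L({\rm v}_L)$ and taking the infimum over ${\rm w}$ gives the bound. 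As a byproduct this forces $\inf_{{\rm u}}\pmb{\mathcal{E}}_L({\rm u})<\infty$ for all large $L$, justifying the reduced almost-minimiser inequality above.

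Next I would prove the matching lower bound $\inf_{{\rm u}}\pmb{\mathcal{E}}({\rm u})\le \liminf_L \inf_{{\rm u}}\pmb{\mathcal{E}}_L({\rm u})$. I select a subsequence $\{L_k\}$ along which $\inf_{{\rm u}}\pmb{\mathcal{E}}_{L_k}({\rm u})$ actually converges to this liminf, and then use relative compactness of $\{{\rm u}_L\}$ to extract a further (not relabelled) subsequence with ${\rm u}_{L_k}\to{\rm u}^*$ for some ${\rm u}^*\in\mathbb{U}$. The liminf inequality applied to this convergent sequence, together with the reduced almost-minimiser bound, gives $\pmb{\mathcal{E}}({\rm u}^*)\le\liminf_k \pmb{\mathcal{E}}_{L_k}({\rm u}_{L_k})\le\liminf_k[\inf_{{\rm u}}\pmb{\mathcal{E}}_{L_k}({\rm u})+\varepsilon_{L_k}]$, and the right-hand side equals the selected liminf. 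Chaining the two bounds yields $\lim_L \inf_{{\rm u}}\pmb{\mathcal{E}}_L({\rm u})=\inf_{{\rm u}}\pmb{\mathcal{E}}({\rm u})$, and the same chain forces $\pmb{\mathcal{E}}({\rm u}^*)=\inf_{{\rm u}}\pmb{\mathcal{E}}({\rm u})$, so the infimum of $\pmb{\mathcal{E}}$ is attained at ${\rm u}^*$; this simultaneously proves existence of $\min_{{\rm u}}\pmb{\mathcal{E}}({\rm u})$ and the convergence $\inf_{{\rm u}}\pmb{\mathcal{E}}_L({\rm u})\to\min_{{\rm u}}\pmb{\mathcal{E}}({\rm u})$.

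For the final minimiser-convergence claim, I take any convergent subsequence ${\rm u}_{L_k}\to{\rm u}^*$ of almost-minimisers and apply the liminf inequality once more, obtaining $\pmb{\mathcal{E}}({\rm u}^*)\le\liminf_k \pmb{\mathcal{E}}_{L_k}({\rm u}_{L_k})\le\liminf_k[\inf_{{\rm u}}\pmb{\mathcal{E}}_{L_k}({\rm u})+\varepsilon_{L_k}]=\min_{{\rm u}}\pmb{\mathcal{E}}({\rm u})$, so ${\rm u}^*$ is a minimiser. The step I expect to be most delicate is the nesting of subsequences in the lower bound: one must \emph{first} pass to a subsequence realising the liminf of the infima and \emph{only then} invoke compactness to obtain a convergent subsequence of the almost-minimisers, so that the liminf inequality is applied along a single sequence that is simultaneously convergent and liminf-realising. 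Conflating these two selections, or applying the liminf inequality along a subsequence that no longer tracks the liminf of the infima, is the standard pitfall, and I would take care to present the selection in this order.
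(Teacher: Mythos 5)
You should note at the outset that the paper contains no proof of this statement to compare against: it is quoted as \cite[Theorem~3.2]{thorpe2018deep} and used as a black box in the proof of Theorem~\ref{theorem: main results}. Judged on its own, your argument is the standard fundamental theorem of $\Gamma$-convergence and is correct: recovery sequences give $\limsup_{L}\inf_{\mathbb{U}}\pmb{\mathcal{E}}_{L}\leq\inf_{\mathbb{U}}\pmb{\mathcal{E}}$; then extracting first a subsequence realising $\liminf_{L}\inf_{\mathbb{U}}\pmb{\mathcal{E}}_{L}$ and only afterwards a convergent sub-subsequence of the almost-minimisers, and applying the liminf inequality, yields the reverse bound together with attainment of the minimum at the limit point $\mathrm{u}^{*}$ — and your explicit insistence on this order of extraction is exactly the right safeguard against the usual pitfall. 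Two small points would tighten the write-up. First, the reduction of the almost-minimiser condition holds unconditionally and for a slightly different reason than you give: since $\pmb{\mathcal{E}}_{L}\geq 0$, one has $\inf_{\mathbb{U}}\pmb{\mathcal{E}}_{L}+\varepsilon_{L}\geq\varepsilon_{L}>0>-1/\varepsilon_{L}$, so the maximum is always its first argument (the nonnegativity of $\pmb{\mathcal{E}}_{L}(\mathrm{u}_{L})$ itself is not what decides the max); relatedly, your ``byproduct'' finiteness claim silently assumes $\inf_{\mathbb{U}}\pmb{\mathcal{E}}<\infty$, but this is harmless because in the degenerate case $\pmb{\mathcal{E}}\equiv+\infty$ the reduced inequality is vacuous and your chain of inequalities, read in the extended reals, still delivers $\inf_{\mathbb{U}}\pmb{\mathcal{E}}_{L}\to+\infty=\min_{\mathbb{U}}\pmb{\mathcal{E}}$. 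Second, Definition~\ref{definition: gamma convergence} states the liminf inequality for full sequences indexed by $L$, whereas you invoke it along the subsequence $\{L_{k}\}$; this is legitimate but deserves one line of justification — either remark that every subsequence of a $\Gamma$-convergent sequence $\Gamma$-converges to the same limit, or extend to a full sequence by setting $\tilde{\mathrm{u}}_{L}=\mathrm{u}^{*}$ for $L\notin\{L_{k}\}$ (so $\tilde{\mathrm{u}}_{L}\to\mathrm{u}^{*}$) and use $\pmb{\mathcal{E}}(\mathrm{u}^{*})\leq\liminf_{L}\pmb{\mathcal{E}}_{L}(\tilde{\mathrm{u}}_{L})\leq\liminf_{k}\pmb{\mathcal{E}}_{L_{k}}(\mathrm{u}_{L_{k}})$. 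With these two clarifications your proof is complete.
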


%%=============================================%%
%% For submissions to Nature Portfolio Journals %%
%% please use the heading ``Extended Data''.   %%
%%=============================================%%

%%=============================================================%%
%% Sample for another appendix section			       %%
%%=============================================================%%

%% \section{Example of another appendix section}\label{secA2}%
%% Appendices may be used for helpful, supporting or essential material that would otherwise 
%% clutter, break up or be distracting to the text. Appendices can consist of sections, figures, 
%% tables and equations etc.

\end{appendices}

%%===========================================================================================%%
%% If you are submitting to one of the Nature Portfolio journals, using the eJP submission   %%
%% system, please include the references within the manuscript file itself. You may do this  %%
%% by copying the reference list from your .bbl file, paste it into the main manuscript .tex %%
%% file, and delete the associated \verb+\bibliography+ commands.                            %%
%%===========================================================================================%%

\bibliography{sn-bibliography}% common bib file
%% if required, the content of .bbl file can be included here once bbl is generated
%%\input sn-article.bbl

\end{document}